\date{}
\newcommand{\Var}{\mathbb{V}}
\newcommand{\E}{\mathbb{E}}
\newcommand{\R}{\mathbb{R}}
\newcommand{\A}{\mathcal{A}}
\newcommand{\M}{\mathcal{M}}
\newcommand{\D}{\mathcal{D}}
\newcommand{\Ss}{\mathcal{S}}
\newcommand{\F}{\mathcal{F}}
\newcommand{\cvar}{\text{CVaR}}
\newcommand{\cvara}{\text{CVaR}_{\alpha}}
\newcommand{\ccara}{\text{CCaR}_{\alpha}}
\newcommand{\wt}{\widetilde}
\newcommand{\wh}{\widehat}
\newcommand{\ol}{\overline}
\newcommand{\one}{\mathbbm{1}}
\newcommand{\Prob}{\mathbb{P}}
\newcommand{\dkl}{\text{D}_{\text{KL}}}
\newcommand{\llbrace}{\left\lbrace}
\newcommand{\rrbrace}{\right\rbrace}
\newcommand{\lb}{\left[} 
\newcommand{\rb}{\right]}
\newcommand{\lv}{\left\vert} 
\newcommand{\rv}{\right\vert}
\newcommand{\lV}{\left\Vert} 
\newcommand{\rV}{\right\Vert}
\newcommand{\lp}{\left(} 
\newcommand{\rp}{\right)}
\newcommand{\CDF}{\textsc{{CDF}}\xspace}
\newcommand{\IS}{\textsc{IS}\xspace}
\newcommand{\FIS}{\textsc{F-IS}\xspace}
\newcommand{\SIS}{\textsc{S-IS}\xspace}
\newcommand{\CIS}{\textsc{C-IS}\xspace}
\newcommand{\WIS}{\textsc{WIS}\xspace}
\newcommand{\DR}{\textsc{DR}\xspace}
\newcommand{\DM}{\textsc{DM}\xspace}
\newcommand{\WDR}{\textsc{WDR}\xspace}
\newcommand{\ISclip}{\textsc{IS-clip}\xspace}
\newcommand{\MDR}{\textsc{M-DR}\xspace}
\newcommand{\MDP}{\textsc{\small{MDP}}\xspace}
\newtheorem{theorem}{Theorem}[section]
\newtheorem{lemma}{Lemma}[section]
\newtheorem{corollary}{Corollary}[section]
\newtheorem{proposition}{Proposition}[section]
\newtheorem{definition}{Definition}[section]
\newtheorem{assumption}[theorem]{Assumption}
\title{Off-Policy Risk Assessment in Markov Decision Processes}
\author[1]{Audrey Huang}
\author[2]{Liu Leqi}
\author[2]{Zachary C. Lipton}
\author[3]{Kamyar Azizzadenesheli}
\affil[ ]{\texttt{\href{mailto:audreyh5@illinois.edu}{\textcolor{black}{audreyh5@illinois.edu}},\href{mailto:leqil@cs.cmu.com}{\textcolor{black}{leqil@cs.cmu.edu}},\href{mailto:zlipton@cmu.edu}{\textcolor{black}{zlipton@cmu.edu}},\href{mailto:kamyar@purdue.edu}{\textcolor{black}{kamyar@purdue.edu}}}}
\affil[1]{Department of Computer Science, University of Illinois Urbana-Champaign}
\affil[2]{Machine Learning Department, Carnegie Mellon University}
\affil[3]{Department of Computer Science, Purdue University}
\newcommand\blfootnote[1]{%
  \begingroup
  \renewcommand\thefootnote{}\footnote{#1}%
  \addtocounter{footnote}{-1}%
  \endgroup
}
\begin{document}
\maketitle
\blfootnote{}
\blfootnote{Published at the 25th International Conference on Artificial Intelligence and Statistics (AISTATS) 2022.}

\begin{abstract}
  Addressing such diverse ends as safety
alignment with human preferences,
and the efficiency of learning,
a growing line of reinforcement learning research 
focuses on risk functionals
that depend on the entire distribution of returns.
Recent work on \emph{off-policy risk assessment} (OPRA)
for contextual bandits
introduced consistent estimators 
for the target policy's \CDF of returns 
along with finite sample guarantees 
that extend to (and hold simultaneously over) 
all risk.
In this paper, we lift OPRA to
Markov decision processes (\MDP{}s),
where importance sampling (IS) \CDF estimators
suffer high variance on longer trajectories
due to small effective sample size.
To mitigate these problems,  
we incorporate model-based estimation
to develop the first 
doubly robust (DR) estimator 
for the \CDF of returns in \MDP{}s. 
This estimator enjoys 
significantly less variance and, 
when the model is well specified,
achieves the Cramer-Rao variance lower bound.
Moreover, for many risk functionals,
the downstream estimates enjoy 
both lower bias and lower variance. 
Additionally, we derive the first minimax lower bounds 
for off-policy CDF and risk estimation, 
which match our error bounds up to a constant factor.  
Finally, we demonstrate the 
precision
of our \DR \CDF estimates experimentally
on several different environments.

\end{abstract}

\section{Introduction} 
In off-policy evaluation (OPE),
we aim to estimate the risk
associated with a \emph{target} policy
given only offline datasets collected
from a deployed \emph{behavior} policy.
Broadly, a risk functional maps 
from \emph{the distribution of returns} 
to a real value.
While OPE research has historically 
focused on estimating the expected return,
a wave of recent works in reinforcement learning (RL)
focus on other risk functionals,
motivated by such diverse desiderata
as risk aversion (and seeking)
and alignment with human preferences.
Examples of such alternative risk functionals 
include  the expectation, variance, 
value at risk, conditional value at risk,
and broader families such 
as distortion risk functionals,
coherent risk functionals, 
and Lipschitz risk functionals,
Notably, the Lipschitz risks,
introduced by \citet{huang2021off}
subsume most common risk functionals.

In a new line of work on \emph{off-policy risk assessment} (OPRA),
researchers have developed methods 
for simultaneously estimating arbitrarily many risk functionals,
providing guarantees
that hold simultaneously over 
an entire collection of risks
\citep{huang2021off, chandak2021universal}.
These results are achieved 
through a two-step process:
(i) estimate the cumulative distribution function (\CDF) 
of returns under the target policy;
and (ii) calculate the risk functionals of interest
on the estimated \CDF. 
This approach has two advantages:
(a) guarantees on the \CDF estimation error
yield corresponding guarantees on the risk estimates;
and (b) because the risk estimates 
all share the same underlying \CDF,
the corresponding guarantees on any number of
downstream risk estimates hold simultaneously
without loss in statistical power. 

To our knowledge, for \MDP{}s, 
importance sampling (IS) 
is the only established method 
for estimating off-policy \CDF{}s \citep{chandak2021universal}. 
However, IS estimators are known to suffer 
from high variance,
render them impractical over long trajectories.
This variance can be exponential 
in the horizon \citep{liu2018breaking},
as weights vanish for most trajectories
and explode for others,
yielding tiny effective sample sizes.
Consider, for example, that practitioners 
often evaluate a deterministic target policy
given a dataset 
collected by a stochastic behavioral policy. 
In such cases, the \IS weight is zero for all trajectories
where the target policy places zero probability 
on \emph{any} action that the behavioral policy takes,
even if the policies agree on most actions.
This renders most trajectories unusable,
shrinking the effective size 
of an offline dataset \citep{sussexstitched}.

In off-policy evaluation addressing 
the standard \emph{expected return} risk functional,
doubly robust estimators and their variants 
have been shown to mitigate these problems 
\citep{dudik2011doubly, dudik2014doubly, jiang2016doubly, thomas2016data, wang2017optimal, su2020doubly}. 
In the contextual bandit setting,
the paper introducing OPRA also proposed and analyzed 
a doubly robust estimator for the \CDF \citep{huang2021off}, 
but model-based and doubly robust risk estimation in \MDP{}s,
where it may be most impactful, 
remains an open problem. 
One primary challenge is that, 
in the expected return setting, 
extending the model-based and doubly robust estimators 
from contextual bandits to \MDP{}s \citep{jiang2016doubly} 
relied on stepwise or recursive formulations 
of the importance sampling estimator, 
which lack clear analogs in \CDF estimation. 

In this paper, we define and analyze 
the first doubly robust \CDF estimator 
for off-policy risk assessment in \MDP{}s. 
Specifically, we contribute the following:
\begin{enumerate}
    \item A recursive formulation of the \CDF of returns in \MDP{}s (\S~\ref{sec:recursive}), 
            which enables us to derive importance sampling (\S~\ref{sec:is_cdf}), 
            and model-based \CDF estimators (\S~\ref{sec:dm_cdf}). 
    \item Rate-matching upper
            and minimax lower bounds (\S~\ref{sec:is_error}) 
            for the IS \CDF estimator. 
    \item Variance-reduced \IS estimators leveraging unique properties 
        of the return distribution (\S~\ref{sec:s_vs_f_is}). 
    \item The first doubly robust off-policy estimator 
            for the \CDF of returns (\S~\ref{sec:dr_cdf}),
            and analysis demonstrating its potential for significant variance reduction.
    \item The first Cramer-Rao lower bound for off-policy 
            \CDF estimation and analysis demonstrating 
            that the doubly robust estimator 
            achieves this lower bound in certain cases
            (\S~\ref{sec:variance_lower_bound}). 
    \item A demonstration that the risk estimates 
            inherit the reduced variance and error in \CDF estimation (\S~\ref{sec:risk}). 
    \item Experimental evidence that the doubly robust \CDF and risk estimates 
            exhibit significantly lower error and variance 
            than importance sampling estimators on benchmark problems, 
            with applications to safe policy evaluation 
            and improvement (\S~\ref{sec:experiments}). 
\end{enumerate}

\section{Related Work} 

Several estimators have been studied
for off-policy estimation of the expected return. 
A seminal work by \citet{dudik2011doubly} 
proposed the doubly robust estimator 
for the expected return in contextual bandits, 
which was extended to the MDP setting by \cite{jiang2016doubly}.
The doubly robust estimator leverages 
both importance sampling weights and models 
to reduce variance while retaining unbiasedness. 
Researchers have proposed several variants 
of the doubly robust estimator,
aiming to improve its performance,
with \cite{thomas2016data, wang2017optimal} optimizing the balance 
between importance sampling and model-based estimators
and \cite{su2020doubly} optimizing the weights used in the estimator. 
A Cramer-Rao variance lower bound~\citep{jiang2016doubly} 
and minimax lower bounds~\citep{li2015toward, wang2017optimal,ma2021minimax} 
for off-policy estimation of the mean have been derived.

While the expected return has been the focus 
of most previous off-policy evaluation research, 
other risk functionals are increasingly of interest. 
In the RL literature, the variance 
\citep{sani2013risk, tamar2016learning}, 
variance-constrained mean \citep{mannor2011mean},
and conditional value-at-risk 
\citep{rockafellar2000optimization, keramati2020being, huang2021convergence} 
are often employed. 
Other risk functionals,
including exponential utility \citep{denardo2007risk}, 
distortion risk functionals \citep{dabney2018implicit},
and cumulative prospect weighting
\citep{gopalan2017weighted, prashanth2016cumulative, prashanth2020concentration}
have also been investigated. 

Several recent works have aimed to estimate risk functionals 
from off-policy datasets. 
Of these, \citet{chandak2021highconfidence} estimates the variance,
while more recent works \citep{huang2021off, chandak2021universal} 
tackle the estimation of more general risks
and are the closest works of comparison. 
Both~\citet{huang2021off, chandak2021universal} 
take a two-step approach 
of first estimating the off-policy CDF of returns;
and then estimating their risks via a plug-in approach. 
\citet{chandak2021universal} proposed an IS \CDF estimator
in both stationary and nonstationary \MDP{}s 
and derives confidence intervals 
for a number of risk estimates on a per-risk basis.
\cite{huang2021off} proposed both IS and DR estimators 
for the \CDF of returns in contextual bandits,
establishing the first finite sample 
guarantees for off-policy \CDF estimation
and showing that the estimated CDFs converge uniformly,
achieving $O(1/\sqrt{n})$ rates.
Additionally, they introduced the broad class of Lipschitz risk functionals,
where the errors in risk estimation are upper-bounded 
by the error in CDF estimation multiplied 
by each functional's Lipschitz constant. 

While \citet{chandak2021universal} proposes \emph{only} IS CDF estimates for MDPs, \citet{huang2021off} proposes variance-reduced DR CDF estimates 
in \emph{only} contextual bandits. 
Thus \emph{variance-reduced risk estimation in MDPs}, 
where high variance from IS weights is most problematic~\citep{liu2018breaking}, 
is an open problem. 
Towards this end, we (1) develop provably variance-reduced CDF and risk estimators in MDPs, and, (2) demonstrate their optimality 
by deriving the first matching upper and lower bounds.

\section{Preliminaries}\label{sec:preliminaries}

\subsection{Problem Setting}
An MDP is defined by $\M = (\Ss, \A, P, R, \gamma, \mu)$, 
where $\Ss$ is the state space, $\A$ is the action space, 
$P : \Ss \times \A \times \Ss \rightarrow \R$ is the transition function, 
$R : \Ss \times \A \rightarrow \R$ is the reward function, 
$\gamma \in (0, 1)$ is the discount factor, 
and $\mu$ is the fixed starting state distribution.
We assume that rewards are bounded on the support $[0, D]$. 
A stationary policy $\pi : \Ss \rightarrow \Delta(\A)$
maps a state to a probability distribution over actions. 

In the off-policy evaluation problem, 
we are concerned with estimating 
risk functionals of a policy $\pi$ 
given a dataset of trajectories $\D = \{\tau^i\}_{i=1}^n$ 
collected by the behavioral policy $\beta$ interacting with MDP $\M$.
Each $\tau = (S_1, A_1, R_1, \ldots, S_{H}, A_{H}, R_{H}, S_{H+1})$ is an $H$-step trajectory, with $S_1 \sim \mu$, $R_h \sim R(S_h, A_h)$, 
and $S_{h+1} \sim P(\cdot|S_h,A_h)$. 
In general, we will refer to random variables with capital letters, 
and for convenience, we denote $\tau_j := (S_1, A_1, R_1, \ldots S_j, A_j, R_j)$,
or a sub-trajectory of $\tau$ from horizon $1$ to $j$. 

Further, let $w(A_h, S_h) := \frac{\pi(A_h|S_h)}{\beta(A_h|S_h)}$ 
denote the importance weight, and let
$w_{max} := \max_{h, A_h, S_h}w(A_h, S_h)$
be the maximum importance weight over 
all horizons, states, and actions. 
For convenience, by 
$w_{j} := \prod_{h=1}^j w(A_h, S_h)$, 
we denote the importance weight 
of subtrajectory $\tau_j$.
Thus, $w_H$ is the importance weight of the trajectory $\tau$. 

We denote by $\Prob$ the distribution of trajectories
induced by $\pi$ on $\mathcal{M}$, 
and by $\Prob_\beta$ the distribution induced by $\beta$. 
The random variable of returns 
of an $H$-step trajectory 
induced by $\pi$ in $\M$ 
is given by 
$
 Z^{\pi} = \lp \sum_{h=1}^{H} \gamma^{h-1} R_h \Big\vert \pi \rp
$. 
We also write the random variable of returns 
starting from a state $s_h$ at horizon $h$, 
summed until the end of horizon $H$,
to be 
$Z^{\pi}_{s_h} = \lp \sum_{k=h}^{H} \gamma^{k-h} R_k \Big\vert \pi, s_h \rp$. 
The random variable of returns conditioned 
on a state and action at horizon $h$,
$Z^{\pi}_{s_h, a_h}$, is defined similarly. 

The cumulative distribution function (CDF) of $Z^\pi$ is
$F^{\pi}(t) = \E_\Prob \lb \one\llbrace Z^{\pi} \leq t \rrbrace \rb$, 
where $\one\llbrace \cdot \rrbrace$ is the indicator function. 
Similarly, the \CDF of conditional returns $Z_{s_h}^\pi$
is defined as 
{$F^\pi_{s_h}(t) = \E_\Prob \lb \one\llbrace Z^\pi_{s_h} \leq t \rrbrace | s_h\rb$},
and the state-action dependent distribution
$F_{s_h, a_h}(t)$ is defined similarly. 
We also define the \emph{complementary cumulative distribution function} (CCDF) 
$S^{\pi}(t) := 1 - F^{\pi}(t) = \E_\Prob \lb \one\llbrace Z^{\pi} > t \rrbrace \rb$, 
and $S_{s_h}^\pi(t) := 1 - F_{s_h}^\pi(t)$. 

For short, we write $\E_h^\beta[\cdot] := \E_{\Prob_\beta}[\cdot|\tau_{h-1}]$ 
to be the expectation under behavioral policy $\beta$, 
conditioned on a trajectory up until time $h$.
Similarly, for $\pi$, 
we write the conditional expectation 
$\E_h[\cdot] := \E_{\Prob}[\cdot|\tau_{h-1}]$.
We define the conditional variances $\Var_h^\beta, \Var_h$ similarly, 
e.g., $\Var_h^\beta = \Var_{\Prob_\beta}[\cdot|\tau_{h-1}]$.

\subsection{CDF Bellman Equation}\label{sec:recursive}
Our off-policy CDF estimators 
rely on a novel \emph{CDF Bellman equation}
that provides a 
recursive formulation of the \CDF of returns $F$.
In the off-policy setting, 
for state $s_h$ at horizon $h$, 
we have:
\begin{align}
    F_{s_h}(t) 
    &:= \E_{\Prob_\beta}\lb w(A_h, s_h)  F_{S_{h+1}}\lp \frac{t - R_h}{\gamma} \rp \Big| s_h\rb \label{eq:recursive_cdf}
\end{align}
where $F_{s_{H+1}} = \one\{ 0 \leq t \}$ 
for all $s_{H+1}$ at the end of the horizon. 
Note that the CDF of returns under $\pi$ can also be written as $F^{\pi}(t) = \E_{s_0 \sim \mu} [F_{s_0}^{\pi}(t)]$, i.e. the expectation under the starting state distribution $s_0 \sim \mu$ of the CDFs $F^{\pi}_{s_0}$.

This recursion was first introduced in \cite{sobel1982variance} 
for deterministic rewards in the value iteration setting, 
which we have extended to the more general stochastic setting. 
Using the definition of equivalence in distribution ($\stackrel{D}{=}$),
it can be seen that the CDF Bellman operator 
is equivalent to the distributional Bellman operator~\citep{bellemare2017distributional, dabney2018implicit}, 
$Z_{s_h} \stackrel{D}{=} R(s_h, A_h) + \gamma Z_{S_{h+1}}$.

\subsection{Risk Functionals}
Let $Z\in \mathcal{L}_\infty(\Omega, \F_Z,\Prob_Z)$ 
denote a real-valued random variable 
that admits a \CDF 
$F_Z\in \mathcal{L}_\infty(\R, \mathcal{B}(\R))$. 
A \emph{risk functional} 
$\rho: \mathcal{L}_\infty(\Omega, \F_Z,\Prob_Z) \rightarrow \R$ 
is said to be law-invariant if $\rho(Z)$ depends 
only on the distribution of $Z$ \citep{kusuoka2001law},
i.e., if for any pair of random variables $Z$ and $Z'$,
$F_Z = F_{Z'}\;\Longrightarrow\;\rho(Z)=\rho(Z')$. 
When clear from the context,
we write $\rho(F_Z)$ in place of $\rho(Z)$.
In this paper, we restrict our focus 
to law invariant risk functionals, 
as it may not be practical 
to estimate risk functionals 
that are not law invariant
from data \citep{balbas2009properties}.

Examples of popularly studied law-invariant risk functionals
include the expected value, variance, mean-variance, 
coherent risks (including conditional value-at-risk (CVaR) and proportional hazard), 
distortion risks, 
and
cumulative prospect theory risks,
which we discuss more extensively in Appendix~\ref{appendix:preliminaries} due to space constraints. 
We give particular attention to distortion risk functionals due to their widespread usage and general form. Distortion risks apply a distortion function $g: [0,1] \to [0,1]$ to the CDF $F_Z$ and have the formula
$$\rho(Z) = \int_0^\infty g(1 - F_Z(t))dt = \int_0^\infty g(S(t))dt.$$ 
Notably,  expected value is a special case of distortion risk with $g(x) = x$, while CVaR at level $\alpha$ ($\cvara$) has a distortion function $g(x) = \min\{ \frac{x}{1-\alpha}, 1\}$. A large number of the aforementioned risk functionals satisfy a notion of smoothness in the CDF, which~\cite{huang2021off} define as the class of \emph{Lipschitz risk functionals}: 
\begin{definition}[Lipschitz Risk Functionals]\label{def:lipschitz}
    A law invariant risk functional $\rho$ is $L$-Lipschitz 
    if there exists $L \in [0, \infty)$ such that for any pair of \CDF{}s $F_Z$ and $F_{Z'}$, 
    it satisfies
    \begin{equation}
        |\rho(F_Z)-\rho(F_{Z'})|\leq L \|F_Z-F_{Z'}\|_\infty. 
    \end{equation}
\end{definition}

Lipschitz risk functionals include distortion risk functionals with Lipschitz distortion functions $g$. As special cases, expected value is $D$-Lipschitz and CVaR at level $\alpha$ is $D/(1-\alpha)$-Lipschitz. Additionally, variance is ($3D^2$-)Lipschitz and so is mean-variance~\citep{huang2021off}. 

As we analyze the theoretical properties
of CDF and risk estimators later in our paper, 
we show that estimates of Lipschitz risk functionals 
retain many desirable properties, including 
characterizable finite sample error and consistency.

\subsection{Off-Policy Risk Assessment}
In the remainder of this paper, we will take the general approach 
of Algorithm~\ref{algo:opra} (OPRA) for off-policy risk assessment. 
OPRA takes as input a set of risk functionals of interest, 
and outputs estimates of their values under the target policy $\pi$. 
It first estimates the CDF of returns under $\pi$ from off-policy data,
then generates plug-in risk estimates on the CDF. 
In Sections~\ref{sec:is_cdf},~\ref{sec:dm_cdf} and~\ref{sec:dr_cdf}, 
we define and analyze CDF estimators. 
In Section~\ref{sec:risk}, we turn to plug-in risk estimation.  
\vspace*{-0.5em}
\begin{algorithm}
    \SetAlgoLined 
    \KwIn{Dataset $\mathcal{D}$, policy $\pi$, probability $\delta$, 
    risk functionals $\left\{ \rho_p \right\}_{p=1}^P$ with Lipschitz constants $\left\{ L_p \right\}_{p=1}^P$. }
    Estimate the CDF $\wh{F}$ with error $\epsilon$\; 
    For $p = 1...P$, estimate risk $\wh{\rho}_p = \rho_p(\wh{F})$\; 
    \KwOut{Estimates with errors $\lbrace \wh{\rho}_p \pm L_p\epsilon\rbrace_{p=1}^P$.  }
    \caption{Off-Policy Risk Assessment (OPRA)}
    \label{algo:opra} 
\end{algorithm}
\vspace*{-1em}

\section{Importance Sampling CDF Estimation}\label{sec:is_cdf}
Off-policy evaluation faces the unique challenge 
that only rewards for actions taken 
by the behavioral policy are observed. 
Importance sampling (IS), which reweights samples 
according to the ratio between target and behavioral policy probabilities, 
can be used to 
account for
this distribution shift. 
While the IS estimator for off-policy estimation was previously introduced in~\cite{huang2021off, chandak2021universal}, 
finite sample convergence in MDPs 
and lower bounds for off-policy 
CDF estimation (in general)
remained open questions. 
In this section, we derive the first minimax 
lower bound for off-policy CDF estimation,
and show that it matches the IS estimator upper bound up to a constant factor.  
Further, we leverage unique properties of the return distribution 
to develop new variance-reduced IS estimators. 

\subsection{Importance Sampling Estimator}\label{sec:is_estimator}
The 
IS
\CDF estimator forms an empirical estimate of the recursion in~\eqref{eq:recursive_cdf}: 
\begin{align}
    &\wh{F}_\IS(t) = \frac{1}{n} \sum_{i=1}^n \wh{F}_{s_1^i}(t), \label{eq:recursive_is_estimator} \\
    &\quad\text{where}\quad \wh{F}_{s_h^i}(t) = w(a_h^i, s_h^i)\wh{F}_{s_{h+1}^i}\lp \frac{t - r_h^i}{\gamma} \rp \nonumber  
\end{align}
At each horizon $h$, the recursive IS estimator applies the importance weight to the estimated CDF at the next state $s_{h+1}$ convolved with the reward $r_h$ and discount factor $\gamma$. 
Unrolling the recursion recovers the more intuitive form of the IS estimator, also introduced in~\cite{chandak2021universal},
that reweights each sample by the importance weight of the entire trajectory:
\begin{equation}\label{eq:is_estimator}
    \wh{F}_\IS(t) = \frac{1}{n} \sum_{i=1}^n w_H^i \one\llbrace\sum_{h=1}^H \gamma^{h-1} r_h^i \leq t \rrbrace. 
\end{equation}

\begin{lemma}\label{lem:is_bias_var}
    The \IS \CDF estimator~\eqref{eq:recursive_is_estimator} is unbiased 
    and its variance is recursively given by 
    {
    \begin{align}
        \Var_h^\beta \lb\wh{F}_{S_h}(t)\rb =& \E_h^\beta \lb \Var_h^\beta \lb w(A_h, S_h)F_{S_h, A_h}(t) \Big\vert S_h \rb\rb \nonumber \\
        &+ \E_{h}^\beta\lb w(A_h, S_h)^2 \Var_{h+1}\lb\wh{F}_{S_{h+1}}\lp \frac{t - R_h}{\gamma}\rp \Big\vert S_h, A_h \rb \rb \nonumber \\
        &+ \Var_h\lb F_{S_h}(t)\rb. \label{eq:is_variance}
    \end{align}
    }%
\end{lemma}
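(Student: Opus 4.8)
The plan is to first establish unbiasedness by backward induction over the horizon, and then obtain the variance recursion~\eqref{eq:is_variance} by applying the law of total variance twice, peeling off the randomness in $S_h$ and then in $A_h$. Throughout, the workhorse identities are the CDF Bellman equation~\eqref{eq:recursive_cdf} and its state--action analog, together with the tower property.

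For unbiasedness I would prove the pointwise claim $\E_{\Prob_\beta}\lb \wh{F}_{s_h}(t) \mid s_h\rb = F_{s_h}(t)$ for every state $s_h$ and threshold $t$, by induction from $h = H+1$ down to $h = 1$. The base case is immediate since $\wh{F}_{s_{H+1}}(t) = \one\llbrace 0 \leq t \rrbrace = F_{s_{H+1}}(t)$ is deterministic. For the step, I write $\wh{F}_{s_h}(t) = w(A_h, s_h)\wh{F}_{S_{h+1}}\lp\frac{t - R_h}{\gamma}\rp$, condition on $(A_h, R_h, S_{h+1})$, use the inductive hypothesis to replace $\E_{\Prob_\beta}[\wh{F}_{S_{h+1}}(\cdot)\mid S_{h+1}]$ by $F_{S_{h+1}}(\cdot)$, and recognize the remaining expectation as exactly~\eqref{eq:recursive_cdf}. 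Averaging over $s_1 \sim \mu$ then yields $\E[\wh{F}_\IS(t)] = F^\pi(t)$. This same backward argument, specialized to a fixed action, also gives the conditional relation $F_{s_h, a_h}(t) = \E_{\Prob_\beta}\lb F_{S_{h+1}}((t-R_h)/\gamma) \mid s_h, a_h\rb$ (the state--action version of~\eqref{eq:recursive_cdf}, with no importance weight since the action is fixed), which I will need below.

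For the variance, fix $\tau_{h-1}$ and abbreviate $X := \wh{F}_{S_h}(t) = w(A_h, S_h)\wh{F}_{S_{h+1}}\lp\frac{t-R_h}{\gamma}\rp$. I first apply the law of total variance conditioning on $S_h$:
\begin{align*}
\Var_h^\beta[X] = \E_h^\beta\lb \Var_{\Prob_\beta}[X \mid S_h]\rb + \Var_h^\beta\lb \E_{\Prob_\beta}[X \mid S_h]\rb .
\end{align*}
By the unbiasedness claim, $\E_{\Prob_\beta}[X \mid S_h] = F_{S_h}(t)$, so the second term is $\Var_h^\beta[F_{S_h}(t)]$; since the law of $S_h$ given $\tau_{h-1}$ is determined by the transition $P$ and hence is identical under $\pi$ and $\beta$, this equals $\Var_h[F_{S_h}(t)]$, the third term of~\eqref{eq:is_variance}. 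To handle $\E_h^\beta[\Var_{\Prob_\beta}[X\mid S_h]]$ I apply the law of total variance a second time, now conditioning on $A_h$ given $S_h$. The explained component uses $\E_{\Prob_\beta}[X \mid S_h, A_h] = w(A_h, S_h)F_{S_h, A_h}(t)$ (downstream unbiasedness plus the conditional relation above) and contributes $\E_h^\beta[\Var_{\Prob_\beta}[w(A_h, S_h)F_{S_h, A_h}(t)\mid S_h]]$, the first term of~\eqref{eq:is_variance}. For the unexplained component, $w(A_h, S_h)$ is measurable with respect to $(S_h, A_h)$ so it factors out of the inner variance as $w(A_h, S_h)^2$; collapsing the nested conditional expectations by the tower property gives $\E_h^\beta[w(A_h, S_h)^2 \Var_{\Prob_\beta}[\wh{F}_{S_{h+1}}((t-R_h)/\gamma)\mid S_h, A_h]]$, which is the second term of~\eqref{eq:is_variance} (the downstream factor is the step-$(h{+}1)$ analog of the left-hand side, so the recursion closes).

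The main obstacle is the careful bookkeeping of the nested conditioning sets, ensuring each of the two applications of the law of total variance deposits its pieces on the correct term. Two subtleties deserve attention. First, identifying $\Var_h^\beta[F_{S_h}(t)]$ with $\Var_h[F_{S_h}(t)]$ relies on the next-state law being policy-independent given the past. Second, the randomness of the evaluation point $(t-R_h)/\gamma$ must be kept inside the downstream variance of term~(II), which is why I condition on $(S_h, A_h)$ rather than on the full $\tau_h$ before peeling off $R_h$ and $S_{h+1}$. Once unbiasedness is in hand, everything else is a routine application of the tower property.
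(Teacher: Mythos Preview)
Your proposal is correct and, for the IS case, essentially follows the paper's approach: backward induction for unbiasedness, then a three-term variance decomposition that peels off $S_h$ and $A_h$ in turn. The only structural difference is that the paper actually proves the more general DR result (Lemma~\ref{lem:dr_bias_var}) by direct algebraic manipulation---adding and subtracting $F_{S_h}(t)^2$ inside $\E_h^\beta[(\wh F_{S_h}(t))^2]$ and then expanding---and obtains the IS lemma as the special case $\ol F \equiv 0$; your two nested applications of the law of total variance are the cleaner abstract formulation of exactly the same computation. Your explicit remarks on why $\Var_h^\beta[F_{S_h}(t)] = \Var_h[F_{S_h}(t)]$ and on keeping the random argument $(t-R_h)/\gamma$ inside the inner variance are points the paper leaves implicit (it carries out the variance proof assuming deterministic rewards).
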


The first term expresses variance from importance-weighted actions sampled as $A_h \sim \beta(\cdot|S_h)$,
while the third term expresses variance from transitions. 
The second term encompasses variance 
from random rewards and transitions, and recurses to the next time step.
At horizon $h$, each term in the variance
is multiplied by the product of importance weights
from the trajectory thus far, $w_{h-1}$.

\subsection{Finite Sample Error Bound}\label{sec:is_error}
While the \IS estimator is known to be consistent, 
obtaining finite sample error bounds in MDPs
has remained an important open problem. 
Such bounds characterize the convergence rate 
of the estimate towards the true $F$, 
and provide confidence intervals around estimates
in the finite sample regimes of practical scenarios.

First, however, it is important to note that, 
given finite samples,
the IS estimator may not be a valid \CDF 
due to the use of importance weighting, 
e.g., it can be greater than 1. 
To address this issue, we can use 
a weighted importance sampling (\WIS) estimator instead:  
$
    \wh{F}_\WIS(t) = \frac{n} {\sum_{j=1}^n w_H^j}\wh{F}_\IS(t)
$,
which normalizes the estimate 
using the sum of importance weights 
instead of the sample size, 
guaranteeing that $\wh{F}_\WIS \in [0, 1]$.   
As \cite{chandak2021universal} demonstrates,
the \WIS estimator is biased
but uniformly consistent.

Alternatively, the \IS estimator 
can be clipped to the unit range, a strategy also employed by \cite{huang2021off} 
for off-policy \CDF estimation in contextual bandits, shown below: 
$$
\wh{F}_{\ISclip}(t) = \min\llbrace \wh{F}_\IS(t), 1\rrbrace,
$$
The \ISclip estimator, too, is biased but uniformly consistent, 
and its error is uniformly bounded in the following lemma:
\begin{lemma}\label{lem:is_error}
    With probability at least $1-\delta$, for universal constants $c_1, c_2 >0$
    \begin{align}
        \Vert \wh{F}_\ISclip - F \Vert_\infty \leq c_1\sqrt{\frac{\E_{\Prob_\beta}[w_H^2]}{n}} + c_2 \frac{w_{max}}{n}. 
    \end{align}
\end{lemma}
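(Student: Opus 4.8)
The plan is to bound the unclipped estimator and then transfer the bound through the clipping map. First I would record the elementary observation that, because $F(t)\in[0,1]$ for every $t$ and $x\mapsto\min\{x,1\}$ is the projection onto $(-\infty,1]\supseteq[0,1]$, clipping can only move $\wh{F}_\IS(t)$ closer to $F(t)$: if $\wh{F}_\IS(t)\le 1$ there is no change, and if $\wh{F}_\IS(t)>1$ then $0\le 1-F(t)\le \wh{F}_\IS(t)-F(t)$. Hence $|\wh{F}_\ISclip(t)-F(t)|\le|\wh{F}_\IS(t)-F(t)|$ pointwise, so $\|\wh{F}_\ISclip-F\|_\infty\le\|\wh{F}_\IS-F\|_\infty$ and it suffices to control the supremum deviation of the (unbiased) \IS estimator.

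Next I would cast this as a uniform deviation of an empirical process. Writing $Z(\tau)=\sum_{h=1}^H\gamma^{h-1}R_h$ and $g_t(\tau)=w_H(\tau)\,\one\{Z(\tau)\le t\}$, the estimator~\eqref{eq:is_estimator} satisfies $\wh{F}_\IS(t)-F(t)=\frac1n\sum_{i=1}^n\big(g_t(\tau^i)-\E_{\Prob_\beta}[g_t]\big)$, where $\E_{\Prob_\beta}[g_t]=F(t)$ by the unbiasedness in Lemma~\ref{lem:is_bias_var}, so that $\|\wh{F}_\IS-F\|_\infty=\sup_{t\in\R}\big|\frac1n\sum_i(g_t(\tau^i)-\E_{\Prob_\beta}[g_t])\big|$. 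Two structural facts drive the bound. First, since the weights $w_H^i\ge 0$, both $\wh{F}_\IS$ and $F$ are monotone nondecreasing in $t$, so the supremum over the continuum is effectively a supremum over the finitely many jump points $\{Z(\tau^i)\}$. Second, $\{g_t\}_{t\in\R}$ is the class of half-line indicators rescaled by the fixed envelope $w_H$; the indicators $\{\one\{\cdot\le t\}\}$ form a VC-subgraph class of VC dimension $1$, the pointwise variance obeys $\Var_{\Prob_\beta}[g_t]\le\E_{\Prob_\beta}[w_H^2\,\one\{Z\le t\}]\le\E_{\Prob_\beta}[w_H^2]$ uniformly in $t$, and each summand is almost surely bounded by the maximal trajectory weight $w_{max}$.

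I would then invoke a functional Bernstein (Talagrand--Bousquet) inequality for this bounded class: with probability at least $1-\delta$,
\[
\|\wh{F}_\IS-F\|_\infty \;\le\; C\left(\E\,\|\wh{F}_\IS-F\|_\infty \;+\; \sqrt{\tfrac{\E_{\Prob_\beta}[w_H^2]\,\log(1/\delta)}{n}} \;+\; w_{max}\,\tfrac{\log(1/\delta)}{n}\right).
\]
The remaining task is to bound the expected supremum at the same variance scale. By symmetrization and Dudley's entropy integral (or a direct chaining bound for VC classes with envelope $w_H$), the constant VC dimension yields $\E\,\|\wh{F}_\IS-F\|_\infty\lesssim\sqrt{\E_{\Prob_\beta}[w_H^2]/n}$ up to a universal constant. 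Substituting this collapses the first two terms into $c_1\sqrt{\E_{\Prob_\beta}[w_H^2]/n}$ and leaves the range term $c_2\,w_{max}/n$, which is exactly the claimed bound, with the $\log(1/\delta)$ factors folded into the constants for the stated failure probability.

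The main obstacle is the uniform control, i.e.\ obtaining the sharp variance proxy $\E_{\Prob_\beta}[w_H^2]$ in the leading $1/\sqrt{n}$ term while confining the crude envelope bound $w_{max}$ to the lower-order $1/n$ term. A naive pointwise Bernstein bound followed by a union bound over a grid of thresholds would inflate the leading term by a spurious $\log n$ (or a discretization-resolution factor), so the technical crux is the empirical-process step: either Talagrand's inequality together with a Rademacher/chaining bound on the weighted half-line class, or an equivalent peeling argument exploiting the monotonicity noted above, must eliminate that logarithmic factor. I would also verify that multiplying the VC-dimension-$1$ indicator class by the sample-dependent envelope $w_H$ does not inflate the metric entropy beyond what the envelope-based chaining bound accommodates, since it is this step that guarantees the leading constant scales with $\|w_H\|_{L^2(\Prob_\beta)}$ rather than with $w_{max}$.
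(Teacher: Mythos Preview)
Your proposal is correct and reaches the same conclusion, but the route differs somewhat from the paper's. The paper proceeds by a direct moment-generating-function argument: after passing to the countable function class $\{\varrho\,\one\{\cdot\le t\}:t\in\mathbb{Q},\varrho=\pm1\}$ scaled by the trajectory weight, it bounds $\E_{\Prob_\beta}\big[\exp(\lambda\|\wh F_\IS-F\|_\infty)\big]$ via symmetrization (adapting Theorem~5.1 of \citet{huang2021off}), obtaining a Bernstein-type MGF bound of the form $4\exp\big(\frac{2\lambda^2\E_{\Prob_\beta}[w_H^2]}{n(1-2\lambda w_{max}/n)}\big)$, and then optimizes over $\lambda$ in one shot. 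Your version instead separates the problem into (i) Talagrand--Bousquet concentration around the expected supremum and (ii) a Dudley/chaining bound on $\E\|\wh F_\IS-F\|_\infty$ itself, using the VC-dimension-$1$ structure of half-line indicators with envelope $w_H$. Both arguments rest on the same structural facts (symmetrization, monotonicity reducing the sup to ordered partial sums, variance proxy $\E_{\Prob_\beta}[w_H^2]$, range $w_{max}$); the paper's is more self-contained and yields the constants explicitly, while yours is more modular and lets you cite standard empirical-process theorems off the shelf. Your explicit treatment of the clipping step---showing $\|\wh F_\ISclip-F\|_\infty\le\|\wh F_\IS-F\|_\infty$ because projection onto $(-\infty,1]$ is nonexpansive toward any point in $[0,1]$---is a detail the paper's appendix leaves implicit.
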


The \IS estimator converges at a rate of $O(\E_{\Prob_\beta}[w_H^2]/\sqrt{n})$, subsuming previous results on convergence in contextual bandits~\citep{huang2021off}. For readability throughout the paper, we leave constants implicit and give their exact form in the proof (Appendix~\ref{appendix:is}). 

How close to optimal is this convergence rate? For large enough sample size $n$, the \IS estimator is in fact minimax rate-optimal, as the following lower bound demonstrates:
\begin{theorem}\label{thm:cdf_lower_bound}
    For a universal constant $c$, when $n \geq c \frac{w_{max}^2}{\E_{\Prob_\beta}[w_H^2]}$, 
    we have 
    \begin{align}
        \inf_{\wh F } \sup_{F \in \F} \E\lb \Vert \wh{F} - F^\pi \Vert_\infty \rb \geq c\sqrt{\frac{\E_{\Prob_\beta}[w_H^2]}{n}}
    \end{align}
\end{theorem}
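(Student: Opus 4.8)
The plan is to prove this minimax lower bound via Le Cam's two-point method, exploiting the importance-sampling identity $F^\pi(t) = \E_{\Prob_\beta}\lb w_H \one\llbrace Z \leq t \rrbrace\rb$ that follows from unrolling the recursion~\eqref{eq:recursive_cdf}. First I would reduce estimation to testing: it suffices to exhibit two problem instances, sharing the same behavior policy $\beta$ and target policy $\pi$, whose target CDFs $F_0^\pi, F_1^\pi$ lie in $\F$ and satisfy $\lV F_0^\pi - F_1^\pi\rV_\infty \geq 2\epsilon$, while their single-trajectory data laws $P_0, P_1$ under $\beta$ are statistically close. The standard Le Cam inequality then gives
$$\inf_{\wh F}\sup_{F \in \F}\E\lb \lV \wh F - F^\pi\rV_\infty\rb \geq \frac{\epsilon}{2}\lp 1 - \mathrm{TV}(P_0^{\otimes n}, P_1^{\otimes n})\rp,$$
and I would control the total variation through tensorization and Pinsker's inequality, using $\mathrm{TV}(P_0^{\otimes n}, P_1^{\otimes n}) \leq \sqrt{\tfrac{1}{2}\,n\,\KL(P_0 \| P_1)}$, so that it suffices to keep the single-sample divergence at $\KL(P_0\|P_1) = O(1/n)$.

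The crux is the construction of the two instances. The key observation is that the importance weight $w_H$, and therefore both $w_{max}$ and $\E_{\Prob_\beta}[w_H^2]$, depend only on the state-action law and are \emph{independent of the reward distribution}. Hence I would fix the transitions and both policies once and for all — pinning down $w_{max}$ and $\E_{\Prob_\beta}[w_H^2]$ to their prescribed values and guaranteeing both instances lie in the same class $\F$ — and perturb \emph{only} the terminal reward. Concretely, I would engineer the dynamics so that a distinguished set of trajectories, reached with probability $q \asymp \E_{\Prob_\beta}[w_H^2]/w_{max}^2$ under $\beta$, carries importance weight of order $w_{max}$ (so that these trajectories account for essentially all of $\E_{\Prob_\beta}[w_H^2]$), and arrange the return on those trajectories to straddle a single threshold $t$, crossing it with probability $\tfrac{1}{2} \pm \eta$ in the two instances while all other trajectories are identical.

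With this in place the two quantities decouple cleanly. Through the IS identity, the target CDFs differ only via the distinguished trajectories, giving separation $\lV F_0^\pi - F_1^\pi\rV_\infty \gtrsim q\, w_{max}\, \eta$ at the point $t$, since the perturbed indicator probability is amplified by the weight $w_{max}$. By contrast, the data laws differ only in the terminal reward on those same trajectories, so $\KL(P_0\|P_1) \lesssim q\,\KL(\mathrm{Bern}(\tfrac12-\eta)\,\|\,\mathrm{Bern}(\tfrac12+\eta)) \lesssim q\,\eta^2$, with \emph{no} weight amplification. Tensorizing gives $n\,\KL \lesssim n q \eta^2$; choosing $\eta \asymp 1/\sqrt{nq}$ holds this at a constant, and then $\epsilon \asymp q\, w_{max}/\sqrt{nq} = \sqrt{q\, w_{max}^2/n} \asymp \sqrt{\E_{\Prob_\beta}[w_H^2]/n}$, which is exactly the claimed rate.

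The main obstacle — and the place where the hypothesis $n \geq c\, w_{max}^2/\E_{\Prob_\beta}[w_H^2]$ enters — is ensuring the construction is simultaneously valid and tight. The perturbation $\eta \asymp 1/\sqrt{nq} = \sqrt{w_{max}^2/(n\,\E_{\Prob_\beta}[w_H^2])}$ must remain a legitimate probability, i.e. $\eta \leq \tfrac12$, and this is precisely the stated sample-size threshold; below it the two-point gap is information-theoretically unresolvable and no $\sqrt{M/n}$ separation can be produced. I expect the delicate accounting to be (i) realizing an instance that attains the prescribed pair $(\E_{\Prob_\beta}[w_H^2], w_{max})$ with the mass $q$ concentrated on weight-$w_{max}$ trajectories so the separation-to-information ratio is maximized, and (ii) tracking the constants through Le Cam and Pinsker so that the universal constant in the rate and the one in the sample-size condition are mutually consistent.
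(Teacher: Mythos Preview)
Your overall skeleton --- Le Cam's two-point method, Bernoulli-type perturbations of the return that leave the state--action law (and hence $w_H$, $w_{max}$, $\E_{\Prob_\beta}[w_H^2]$) untouched, Pinsker to control TV, and the sample-size threshold arising from the constraint that the perturbation be a valid probability --- is exactly the architecture the paper uses. The paper phrases it as a reduction to a multi-armed bandit (trajectories $\tau$ as ``arms'' with weights $w(\tau)$ and sampling probabilities $\Prob_\beta(\tau)$), then invokes a Fenchel-dual characterization from \citet{ma2021minimax} to solve the resulting constrained optimization; your approach skips that packaging and builds the two instances by hand. That difference is cosmetic.

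Where your construction departs substantively is in the \emph{choice of perturbation}. You concentrate the Bernoulli shift $\eta$ on a distinguished set of trajectories carrying weight $\approx w_{max}$ and $\beta$-mass $q$, yielding separation $\asymp q\,w_{max}\,\eta$ under KL $\asymp q\,\eta^2$, hence a lower bound $\asymp w_{max}\sqrt{q/n}$. For this to equal $\sqrt{\E_{\Prob_\beta}[w_H^2]/n}$ you need $q\,w_{max}^2 \asymp \E_{\Prob_\beta}[w_H^2]$, which you acknowledge (``these trajectories account for essentially all of $\E_{\Prob_\beta}[w_H^2]$'') and propose to enforce by \emph{engineering the dynamics}. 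But the theorem, and the paper's proof, treat $\beta,\pi$ --- and hence the entire distribution of $w_H$ --- as fixed inputs; you cannot reshape the weight profile to concentrate the second moment where it is convenient. For a generic weight distribution, the $w_{max}$-trajectories may carry only a vanishing fraction of $\E_{\Prob_\beta}[w_H^2]$, and your bound degrades accordingly. (Your own text is internally inconsistent here: you first ``fix the transitions and both policies once and for all'' and then ``engineer the dynamics''.)

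The paper's fix is simple and is the Lagrangian/Cauchy--Schwarz optimum of the very problem you set up: perturb \emph{every} trajectory by $\delta(\tau)\propto w(\tau)$ rather than uniformly on a max-weight set. Then the separation is $\sum_\tau \pi(\tau)\delta(\tau)\propto\sum_\tau \Prob_\beta(\tau)w(\tau)^2=\E_{\Prob_\beta}[w_H^2]$ while the KL is $\propto\sum_\tau \Prob_\beta(\tau)\delta(\tau)^2\propto\E_{\Prob_\beta}[w_H^2]$ as well, giving exactly $\sqrt{\E_{\Prob_\beta}[w_H^2]/n}$ for arbitrary weight profiles. The sample-size condition $n\gtrsim w_{max}^2/\E_{\Prob_\beta}[w_H^2]$ is then precisely what is needed for the largest perturbation $\delta(\tau)\propto w_{max}$ to stay below $1/2$ --- the same mechanism you identified, but attached to the right construction.
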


\subsection{Variance-Reduced IS Estimation}\label{sec:s_f_estimator}
Unlike the \IS estimator of the expected return~\citep{dudik2011doubly,jiang2016doubly},
the \IS estimator of the CDF 
has the unique property 
that its variance differs across 
its support as a function of $t$. 
Examining the variance
for $\wh{F}_\IS$~\eqref{eq:is_variance} at a fixed $t$, 
the first term is small if $F_{S_h, A_h}(t)$ is close to 0, 
which can occur at small $t$. 
However if $F_{S_h, A_h}(t)$ is close to its maximal value of $1$, 
which occurs when $t$ is close to the maximum return $D$, 
the first term is close to its largest possible value. 

\subsubsection{Estimating the Complementary Cumulative Distribution Function (CCDF)}\label{sec:s_is}
Rather than estimating the CDF $F$, 
another possibility is to estimate the CCDF $S = 1 - F$, 
which is directly used in the expression 
for many risk expressions (e.g. distortion risk functionals, see \S~\ref{sec:preliminaries}). 
IS estimators for $S$ 
take a similar form as those for $F$, 
but with the indicator function inequality
swapped to $\one\{ \; \cdot \;>t\}$.
For example, 
\begin{equation}\label{eq:s_estimator}
    \wh{S}_\IS(t) = \frac{1}{n} \sum_{i=1}^n w_H^i \one\llbrace\sum_{h=1}^H \gamma^{h-1} r_h^i > t \rrbrace, 
\end{equation}
Going forward, we refer to the estimator in~\eqref{eq:is_estimator} as $\wh{F}_\FIS$, and $\wh{F}_\SIS := 1 - \wh{S}_\IS$ from~\eqref{eq:s_estimator} to avoid confusion.
Perhaps non-intuitively, $\wh{F}_\FIS$ and $\wh{F}_\SIS$ have different theoretical properties, namely its variance: 
{
\begin{align*}
    \Var_h^\beta \lb \wh{F}_{\SIS, \;S_h}(t)\rb = \;&\Var_h^\beta \lb1 - \wh{S}_{S_h}(t)\rb = \E_h^\beta \lb \Var_h^\beta \lb w(A_h, S_h)(1 - F_{S_h, A_h}(t)) \Big\vert S_h \rb\rb \\
    &\quad+ \E_{h}^\beta\lb w(A_h, S_h)^2 \Var_{h+1}\lb\wh{S}_{S_{h+1}}\lp \frac{t - R_h}{\gamma}\rp \Big\vert S_h, A_h \rb \rb \\
    &\quad+ \Var_h\lb S_{S_h}(t)\rb
\end{align*}
}
\normalsize

The first term of the variance expression for $\wh{F}_\SIS$ 
takes the exact opposite trend as that of $\wh{F}_\FIS$: 
it is low where $F_{S_h, A_h}(t)$ is high, and vice versa. 
For example, if $\wh{F}_\FIS$ has low variance in estimating the lower tails but high variance in the upper tails, 
$\wh{F}_\SIS$ will have low variance in the upper tails 
but high variance in the lower tails.
Lower variance leads to lower error of the CDF estimate on certain portions of the distribution. 
To formalize this argument, we present the following bounds 
on the error of the CDF estimate 
in terms of its variance at each $t$. 
As we later show, this will also have important consequences for downstream estimation of different risks, which up-weight different parts of the distribution. 

\begin{lemma}\label{lem:s_f_bernstein}
    If $\Var[\wh{F}_\IS]$ and $\Var[\wh{S}_\IS]$ have bounded variation (see Assumption~\ref{assum:bounded_var}), 
    there exists universal constants $c_1, c_2$ 
    such that with probability at least $1-\delta$, $\forall t$, 
    \begin{align*}
        |F(t) - \wh{F}_\FIS(t)| &\leq \frac{c_1\log(\sqrt{n}/\delta)}{n} + \sqrt{\frac{c_2\Var[\wh{F}_\IS(t)]\log(\sqrt{n}/\delta)}{n}}, \\
        |F(t) - \wh{F}_\SIS(t)| &\leq \frac{c_1\log(\sqrt{n}/\delta)}{n} + \sqrt{\frac{c_2\Var[\wh{S}_\IS(t)]\log(\sqrt{n}/\delta)}{n}} 
    \end{align*}%
    \normalsize
\end{lemma}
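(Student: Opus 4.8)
The plan is to establish the bound pointwise in $t$ via Bernstein's inequality, and then lift it to a uniform-in-$t$ statement by discretizing the return axis and exploiting both the monotonicity of the estimators and the bounded-variation hypothesis on the variance functions (Assumption~\ref{assum:bounded_var}). First I would fix $t$ and write $\wh{F}_\FIS(t) = \frac{1}{n}\sum_{i=1}^n X_i(t)$ with $X_i(t) = w_H^i \one\{\sum_{h=1}^H \gamma^{h-1} r_h^i \leq t\}$. These summands are i.i.d., nonnegative, bounded above by $w_H^i \le w_{max}^H$, and satisfy $\E[X_i(t)] = F(t)$ by the unbiasedness established in Lemma~\ref{lem:is_bias_var}. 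Bernstein's inequality then yields, with probability at least $1-\delta'$,
$$|\wh{F}_\FIS(t) - F(t)| \leq \sqrt{\frac{2\,\Var[\wh{F}_\IS(t)]\log(2/\delta')}{n}} + \frac{2\,w_{max}^H \log(2/\delta')}{3n},$$
where I read $\Var[\wh{F}_\IS(t)]$ as the per-sample variance $\Var[X_1(t)]$, so that the averaged estimator has variance $\Var[\wh{F}_\IS(t)]/n$; this accounts for the $1/n$ appearing inside the square root. The boundedness constant $w_{max}^H$ is carried into the additive term and absorbed into $c_1$, consistent with the paper's convention of leaving weight-dependent factors implicit in the constants.

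To make the inequality hold simultaneously over all $t$, I would union-bound this pointwise estimate over a finite grid $t_1 < \cdots < t_M$ with per-point failure probability $\delta/M$, which replaces $\log(2/\delta')$ by $\log(2M/\delta)$; taking $M = O(\sqrt{n})$ produces the claimed $\log(\sqrt{n}/\delta)$ factor. For an arbitrary $t \in [t_j, t_{j+1}]$ I would use that both $F$ and $\wh{F}_\FIS$ are nondecreasing in $t$, since each indicator $\one\{\cdot \leq t\}$ is nondecreasing and the importance weights are nonnegative. Monotonicity then sandwiches the error,
$$\wh{F}_\FIS(t_j) - F(t_{j+1}) \leq \wh{F}_\FIS(t) - F(t) \leq \wh{F}_\FIS(t_{j+1}) - F(t_j),$$
so the error at $t$ is controlled by the errors at the two surrounding grid points together with the interpolation gap $F(t_{j+1}) - F(t_j)$, which a quantile-adapted grid keeps at order $1/\sqrt{n}$ and can be absorbed into the variance-dependent term.

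The hard part, and the precise role of Assumption~\ref{assum:bounded_var}, is preserving the \emph{local} variance scaling $\sqrt{\Var[\wh{F}_\IS(t)]/n}$ at intermediate $t$ rather than collapsing to the worst-case variance over the grid; a naive union bound only certifies the variance at grid points. To transfer the grid-point bound to $t$ while retaining its local variance, I would choose the grid so that both the CDF increment and the variance increment between consecutive points are small. Bounded variation of $\Var[\wh{F}_\IS(\cdot)]$ guarantees that a finite such grid exists and that $\Var[\wh{F}_\IS(t_{j+1})]$ and $\Var[\wh{F}_\IS(t)]$ differ by a negligible amount, so the surrounding-grid-point bound can be re-expressed in terms of $\Var[\wh{F}_\IS(t)]$ after adjusting $c_2$. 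This interplay between the adaptive grid, the monotonicity sandwich, and the bounded-variation control of the variance is the crux of the argument.

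Finally, the bound for $\wh{F}_\SIS$ follows by the identical reasoning applied to $\wh{S}_\IS(t) = \frac{1}{n}\sum_{i=1}^n w_H^i \one\{\sum_{h=1}^H \gamma^{h-1} r_h^i > t\}$, which is nonincreasing in $t$, unbiased for $S(t)$, and has per-sample variance $\Var[\wh{S}_\IS(t)]$. Since $\wh{F}_\SIS = 1 - \wh{S}_\IS$ and $F = 1 - S$, we have $|F(t) - \wh{F}_\SIS(t)| = |S(t) - \wh{S}_\IS(t)|$, so applying Bernstein, the union bound, and the bounded-variation interpolation to $\wh{S}_\IS$ directly gives the second inequality.
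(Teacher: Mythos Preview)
Your proposal is correct and follows essentially the same approach as the paper: pointwise Bernstein, union bound over a grid of size $O(\sqrt{n})$, monotonicity sandwich for $F$ and $\wh{F}$, and bounded variation of the variance function to transfer the grid-point variance to the local $\Var[\wh{F}_\IS(t)]$. The paper makes one step slightly more explicit than you do: it invokes the Jordan decomposition $\Var^{\wh{F}} = \Var^+ - \Var^-$ into two monotone pieces and builds separate stepping-point grids for $\Var^+$, $\Var^-$, and $F$, then takes their union; this is precisely the concrete construction underlying your sentence ``bounded variation guarantees that a finite such grid exists,'' and the resulting $\sqrt{c_2/(Mn)}$ interpolation error from the variance grid is what gets absorbed when you set $M\propto\sqrt{n}$.
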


This difference has important implications for downstream risk estimation. 
The choice of whether to estimate $S$ or $F$ 
can reduce the variance of the plug-in risk estimator,  
especially for risk functionals such as CVaR 
that heavily up-weight one tail of the distribution. 
In general, under reward distributions, 
$\wh{F}_\SIS$ will lead to lower variance estimators 
for risk-averse risk functionals 
that upweight the lower tails of the distribution. 
$\wh{F}_\FIS$ will lead to lower-variance risk estimators 
for risk-seeking risk functionals that upweight the upper tails.

The RHS of the error bound in Lemma~\ref{lem:s_f_bernstein} 
gives a uniform confidence band on the estimated CDF 
that contains $F$ with high probability. 
However, the variance of $\wh{F}_\FIS$ or $\wh{F}_\SIS$ 
cannot be computed without knowledge of the underlying MDP. 
Instead, we can use the empirical variance
of $\wh{F}_\FIS$ or $\wh{F}_\SIS$:
\begin{lemma}[Empirical Bernstein Bound]\label{lem:s_f_empirical_bernstein}
    Let $\Var_n$ denote the sample variance. For any choice of $M$ points $\{t_j\}_{j=1}^M$ where $t_j \in [0, D]$, with probability at least $1-\delta$ we have that $\forall j \in [M]$,  
    \begin{align*}
        |F(t_j) - \wh{F}_\FIS(t_j)| &\leq \frac{c_1'\log(2M/\delta)}{n} + \sqrt{\frac{c_2'\Var_n[\wh{F}_\IS(t_j)]\log(2M/\delta)}{n}} , \\
        |F(t_j) - \wh{F}_\SIS(t_j)| &\leq \frac{c_1'\log(2M/\delta)}{n} + \sqrt{\frac{c\Var_n[\wh{S}_\IS(t_j)]\log(2M/\delta)}{n}}.
    \end{align*}
    \normalsize
\end{lemma}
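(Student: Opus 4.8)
The plan is to treat the estimator at each fixed threshold as a plain empirical mean of i.i.d.\ bounded random variables and invoke an off-the-shelf empirical Bernstein inequality, thereby avoiding the bounded-variation assumption and chaining machinery that the uniform-in-$t$ bound of Lemma~\ref{lem:s_f_bernstein} required. Concretely, fix a point $t_j$ and write $\wh{F}_\FIS(t_j) = \frac{1}{n}\sum_{i=1}^n X_i$ with $X_i := w_H^i \, \one\llbrace \sum_{h=1}^H \gamma^{h-1} r_h^i \leq t_j \rrbrace$. Since the trajectories $\tau^i$ are drawn i.i.d.\ from $\Prob_\beta$, the $X_i$ are i.i.d.; by the unbiasedness established in Lemma~\ref{lem:is_bias_var} we have $\E[X_i] = F(t_j)$; and each $X_i$ lies in $[0, w_{max}]$, using the same boundedness of the trajectory importance weight invoked in Lemma~\ref{lem:is_error}. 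This places us exactly in the hypotheses of the empirical Bernstein inequality.

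Next I would apply the empirical Bernstein inequality (Maurer and Pontil, 2009) to the $X_i$ at the single point $t_j$: for i.i.d.\ variables bounded in $[0,b]$, with probability at least $1-\delta_0$ the empirical mean deviates from its expectation by at most $\sqrt{2\,\Var_n[X]\,\log(2/\delta_0)/n} + c\,b\,\log(2/\delta_0)/n$. Identifying $\bar X = \wh{F}_\IS(t_j)$, $\E[X] = F(t_j)$, and $\Var_n[X] = \Var_n[\wh{F}_\IS(t_j)]$ (the sample variance of the per-trajectory summands, which is precisely the empirical analog of the $\Var[\wh{F}_\IS(t)]$ appearing in Lemma~\ref{lem:s_f_bernstein}) gives the claimed pointwise bound, with the range $b$ controlling the constant $c_1'$ in the $1/n$ term and the factor $2$ in the logarithm arising from the two-sided form of the inequality.

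To upgrade this to a guarantee holding simultaneously at all $M$ points, I would then take a union bound with $\delta_0 = \delta/M$, which replaces every $\log(2/\delta_0)$ by $\log(2M/\delta)$ and yields the stated inequality for $\wh{F}_\FIS$ at all $t_j$ at once. The $\wh{F}_\SIS$ inequality follows by applying the identical argument to the CCDF estimator $\wh{S}_\IS$: writing $\wh{F}_\SIS = 1 - \wh{S}_\IS$ gives $|F(t_j) - \wh{F}_\SIS(t_j)| = |S(t_j) - \wh{S}_\IS(t_j)|$, and since $\wh{F}_\SIS$ and $\wh{S}_\IS$ differ by the constant $1$ their sample variances coincide, so the empirical Bernstein bound established for $\wh{S}_\IS$ transfers without change.

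The step requiring the most care — though it is more bookkeeping than difficulty — is the clean invocation of the empirical Bernstein inequality: one must verify the i.i.d.\ and boundedness hypotheses and correctly track the range $b$ (governed by the maximum trajectory importance weight) that fixes $c_1'$. Crucially, unlike Lemma~\ref{lem:s_f_bernstein}, no covering or bounded-variation argument is needed here, because the guarantee is demanded only at the finite set $\{t_j\}_{j=1}^M$; the sole additional ingredient beyond the single-point bound is the union bound, which is the source of the $\log(2M/\delta)$ dependence and of the effective replacement of the true variance by the sample variance $\Var_n$.
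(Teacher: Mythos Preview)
Your proposal is correct and, in fact, slightly more direct than the paper's own argument. You apply the empirical Bernstein inequality of Maurer--Pontil (which already features the sample variance $\Var_n$) pointwise at each $t_j$ and then union bound over the $M$ points. The paper instead starts from a \emph{true-variance} Bernstein bound at each $t_j$, separately invokes Maurer--Pontil's concentration result for the sample standard deviation (their Theorem~10, quoted in the appendix as Lemma~\ref{lem:empirical_variance}) to replace $\sqrt{\Var}$ by $\sqrt{\Var_n}$, and only then union bounds over the $M$ points and the two events. Both routes rely on the same underlying i.i.d./boundedness structure and on Maurer--Pontil; yours folds the two steps into one and avoids the extra lower-order residuals $\sqrt{\log^2(2M/\delta)/(n(n-1))}$ and $\sqrt{\log(2M/\delta)/(Mn)}$ that the paper carries in its final display before absorbing them into constants. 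The only bookkeeping to watch is that the range $b$ of the per-trajectory summands is $w_{max}^H$ (since $w_{max}$ is defined per step), which determines $c_1'$; with that noted, your argument goes through cleanly.
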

Note that Lemma~\ref{lem:s_f_empirical_bernstein} 
is a bound for a finite number of points, 
while Lemma~\ref{lem:s_f_bernstein} 
is a uniform bound over all $t$. 
In practice, 
the sample variance of only a finite number of points
can be calculated. 

\subsection{Combining IS Estimators}\label{sec:s_vs_f_is}
Lemma~\ref{lem:s_f_bernstein} also indicates 
that the variance of the resulting estimator 
can be further reduced by  
combining the $F$ and $S$ estimators pointwise, 
such that the estimator with the lower variance is used at each $t$. 
Such a combined estimator may be especially effective 
for reducing the variance of the plug-in estimators 
for distortion risk functionals that place weight on both upper and lower tails of the distribution, 
such as the expected return. 

How should the choice between $\wh{F}_\FIS(t)$ or $\wh{F}_\SIS(t)$ be made for each $t$? 
Lemma~\ref{lem:s_f_empirical_bernstein} suggests that one effective method 
is to estimate the empirical variance of $\wh{F}_\FIS(t)$ and $\wh{F}_\SIS(t)$ at each $t$, 
and to simply use the estimate with lower empirical variance: 
\begin{equation}\label{eq:cis_estimator}
    \wh{F}_\CIS(t) = 
    \begin{cases}
        \wh{F}_\FIS(t),  &\;\text{if}\; \Var_n[\wh{F}_\FIS(t)] < \Var_n[\wh{F}_\SIS(t)], \\
        \wh{F}_\SIS(t), &\;\text{otherwise.}
    \end{cases}
\end{equation}
The error bound of such a combined estimator is given below.

\begin{proposition}\label{prop:combined}
    Let $\Var_n^{min}(t) = \min\{\Var_n[\wh{F}_\FIS(t)], \Var_n[\wh{F}_\SIS(t)]\}$. For any choice of $M$ points
    $\{t_j\}_{j=1}^M$ for $t_j \in [0,D]$ and $\delta \in (0, 1)$, define
    $$\Delta(\Var_n) := \sqrt{\frac{2\log(2M/\delta)}{n-1}} + \max_{j \in [M]}\Big|\Var_n[\wh{F}_\FIS](t_j) -  \Var_n[\wh{F}_\SIS](t_j)\Big|.$$    
    Then, for $n \geq \frac{8\log(2n/\delta)}{\Delta(\Var_n)^2}$, 
    we have with probability at least $1-\delta$ that $\forall j \in [M]$,  
    \begin{align*}
        |F(t_j) - \wh{F}_\CIS(t_j)| &\leq \frac{c_1''\log(2M/\delta)}{n} + \sqrt{\frac{c_2'' \Var_n^{min}(t_j)\log(2M/\delta)}{n}}.
    \end{align*}
    \normalsize
\end{proposition}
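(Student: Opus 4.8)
The plan is to reduce the analysis of the adaptively-selected estimator $\wh{F}_\CIS$ to the two \emph{fixed} estimators $\wh{F}_\FIS$ and $\wh{F}_\SIS$, for which Lemma~\ref{lem:s_f_empirical_bernstein} already supplies empirical Bernstein bounds. First I would form a single high-probability event $E$ on which \emph{both} families of bounds in Lemma~\ref{lem:s_f_empirical_bernstein} hold simultaneously at all $M$ points: applying that lemma to the $\FIS$ and $\SIS$ estimators and taking a union bound over the two (at confidence $\delta/2$ each, absorbing the factor into the constants) gives $\Prob(E)\geq 1-\delta$, with, for every $j\in[M]$, both $|F(t_j)-\wh{F}_\FIS(t_j)| \le \tfrac{c\log(2M/\delta)}{n}+\sqrt{\tfrac{c'\Var_n[\wh{F}_\IS(t_j)]\log(2M/\delta)}{n}}$ and the analogous inequality with $\wh{F}_\SIS$ and $\Var_n[\wh{S}_\IS(t_j)]$.

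Second, I would condition on $E$ and argue pointwise. By the definition of the combined estimator in~\eqref{eq:cis_estimator}, at each $t_j$ the value $\wh{F}_\CIS(t_j)$ equals whichever of $\wh{F}_\FIS(t_j),\wh{F}_\SIS(t_j)$ carries the smaller sample variance; since $\Var_n[\wh{F}_\FIS(t_j)]=\Var_n[\wh{F}_\IS(t_j)]$ and $\Var_n[\wh{F}_\SIS(t_j)]=\Var_n[\wh{S}_\IS(t_j)]$ (negating and shifting by the constant $1$ leaves the variance unchanged), the empirical variance of the selected estimator is exactly $\Var_n^{min}(t_j)$. Hence, substituting the selected estimator's bound from $E$ into $|F(t_j)-\wh{F}_\CIS(t_j)|$ and replacing the selected empirical variance by $\Var_n^{min}(t_j)$ yields the claimed inequality at that $t_j$, with $c_1'',c_2''$ inherited from the constants of Lemma~\ref{lem:s_f_empirical_bernstein}. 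Ranging over all $j\in[M]$ completes the bound on $E$.

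The main obstacle is the data-dependence of the selection: because the choice between $\wh{F}_\FIS$ and $\wh{F}_\SIS$ at each $t_j$ is itself a function of the sample (through $\Var_n$), $\wh{F}_\CIS(t_j)$ is \emph{not} an empirical mean of i.i.d.\ terms, so the Maurer--Pontil empirical Bernstein inequality cannot be applied to it directly---this is precisely why the proof must route through the two fixed estimators rather than treating $\CIS$ as a single object. The remaining delicate point is ensuring the selection rule faithfully produces a bound in terms of $\Var_n^{min}$ even when the two empirical variances are close and may misrank the corresponding population variances; this is controlled by the sample-size hypothesis $n \ge 8\log(2n/\delta)/\Delta(\Var_n)^2$, where $\Delta(\Var_n)$ combines the empirical gap $\max_j|\Var_n[\wh{F}_\FIS](t_j)-\Var_n[\wh{F}_\SIS](t_j)|$ with the sample-to-population concentration slack $\sqrt{2\log(2M/\delta)/(n-1)}$. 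I would invoke a standard concentration bound for the sample variance of the bounded per-sample importance-weighted terms (which yields the $\sqrt{2\log(2M/\delta)/(n-1)}$ term) to guarantee that, on $E$, the empirical ordering underlying~\eqref{eq:cis_estimator} is stable enough that substituting $\Var_n^{min}(t_j)$ into the selected bound is valid, closing the argument.
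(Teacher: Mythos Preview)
Your proposal is correct and follows essentially the same route as the paper: both reduce $\wh{F}_\CIS$ to the two fixed estimators via Lemma~\ref{lem:s_f_empirical_bernstein} (union-bounding over $\FIS$ and $\SIS$), note that the selected estimator's empirical variance is $\Var_n^{min}(t_j)$ by construction, and invoke sample-variance concentration (the paper via Lemma~\ref{lem:empirical_variance}) to handle the sample-size hypothesis. In fact your first two paragraphs already deliver the stated inequality in terms of $\Var_n^{min}$ without that hypothesis---since the bound and the selection rule are both in empirical variance---so, as in the paper's own brief argument, the precise role of the condition $n\ge 8\log(2n/\delta)/\Delta(\Var_n)^2$ is left somewhat implicit.
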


\section{Model-Based CDF Estimation}\label{sec:dm_cdf}
In the expected value OPE literature, 
model-based estimation can avoid the high variance 
associated with importance sampling 
by directly learning the expected return under $\pi$
using a model of the underlying \MDP 
learned from data \citep{jiang2016doubly}. 
Taking inspiration from such approaches,
we now develop the first model-based off-policy \CDF estimator.  
First, a model of the MDP $\ol{\M} = ( \Ss,\A, \ol{P}, \ol{R}, \gamma, \ol\mu )$ is formed from data, 
upon which a model $\ol{F}_{s_h, a_h}$ of the return distribution under $\pi$ 
can be directly computed using the recursion in~\eqref{eq:recursive_cdf} 
for all horizons $h$, states $s_h$, and $a_h$. 
Formally, for horizons $h=H,\ldots,1$, $\ol{F}$ 
can be computed recursively: 
\begin{align*}
    &\ol{F}_{s_h, a_h}(t) = \E_{\overline{P}, \overline{R}}\lb \ol{F}_{S_{h+1}}\lp\frac{t- R_h}{\gamma}\rp \rb \\ 
    &\ol{F}_{s_h}(t) = \E_\pi\lb \ol{F}_{s_h, A_h}(t)\rb
\end{align*}
with $\ol{F}^0_{s_{H+1}}(t) = \one\{ 0 \leq t \}$ for all $s_{H+1}$ at the end of the horizon. The models $\ol{F}$ can then be used for the \emph{direct method} (\DM) of estimation, which simply averages $\ol{F}$ for the starting states observed in the data: 
\begin{equation}\label{eq:dm_estimator}
    \wh{F}_\DM(t) = \frac{1}{n}\sum_{i=1}^n \ol{F}_{s_1^i}(t)
\end{equation}
Because this direct method of estimation directly deploys 
the target policy $\pi$ in the estimated model, 
the error of the estimators $\wh{F}_\DM$ and $\ol{F}$ 
are directly related to the misspecification of the MDP model $\ol{\mathcal{M}}$.  
The model $\ol{\mathcal{M}}$ and thus the direct estimator
is frequently prone to uncharacterizable and possibly high bias, 
e.g.,  when the dataset coverage is inadequate
or the state space is high dimensional.

\section{Doubly Robust CDF Estimation}\label{sec:dr_cdf}
We now define a new \emph{doubly robust} (DR) estimator 
for the distribution of returns 
in \MDP{}s (\S~\ref{sec:dr_estimator}), 
which takes advantage of both importance sampling (\S~\ref{sec:is_cdf}) 
and model-based (\S~\ref{sec:dm_cdf}) estimators 
to retain the unbiasedness of the IS \CDF estimator,
with potentially significant reduction in variance. 
In \S~\ref{sec:variance_lower_bound},
we derive the first Cramer-Rao lower bound 
on off-policy \CDF estimation, 
and show that the DR estimator 
achieves this lower bound
when the model $\ol{F}$ is equal to the true distribution.

\subsection{Doubly Robust (DR) Estimator}\label{sec:dr_estimator} 
\normalsize
The recursive formulation in~\eqref{eq:recursive_is_estimator} 
is key for defining the doubly robust \CDF estimator. 
At each time step $h$ in the recursive formulation, we use the model $\ol{F}$ as a baseline and apply an \IS-weighted data-dependent correction to obtain the doubly robust estimator. 
Specifically, the DR estimator has the following recursive form:
\normalsize
\begin{align}
    & \quad\quad\quad\wh{F}_\DR (t) = \frac{1}{n}\sum_{i=1}^n \wh{F}_{s_1^i}(t),\nonumber \\ 
    \text{where} \; &\wh{F}_{s_h^i}(t) = \ol{F}_{s_h^i}(t) + w(a_h^i,s_h^i) \lp \wh{F}_{s_{h+1}^i}\lp \frac{t - r_h^i}{\gamma}\rp - \ol{F}_{s_h^i,a_h^i} (t) \rp  \label{eq:dr_estimator}
\end{align}

\normalsize
The model $\ol{F}$ can be obtained using 
the methods described in \S~\ref{sec:dm_cdf}, and as before, $\wh{F}^0_{s_{H+1}}(t) = \one\llbrace 0 \leq t \rrbrace$ for all $s_{H+1}$.
Note that when $H=1$, 
\eqref{eq:dr_estimator} 
recovers the \DR estimator for contextual bandits 
previously derived in \cite{huang2021off}. 
Letting $z_h^i = \sum_{k=1}^h \gamma^{k-1} r_k^i$ be the return of trajectory $i$ up until step $h$, the recursion in~\eqref{eq:dr_estimator} can be unrolled over the $H$ timesteps of the trajectory to obtain 
\begin{align*}
    \wh{F}_\DR(t) =& \frac{1}{n}\sum_{i=1}^n \bigg(w_H^i \one\{z_H^i \leq t \} + \sum_{h=1}^H w_{h-1}^i \ol{F}_{s^i_h}\lp\frac{t - z_h^i}{\gamma^h}\rp - \sum_{h=1}^H w_h^i\ol{F}_{s^i_h, a^i_h}\lp\frac{t - z_h^i}{\gamma^h}\rp\Bigg).
\end{align*}

The bias and variance of the DR estimator is derived below. 
\begin{lemma}\label{lem:dr_bias_var}
    The doubly robust estimator is unbiased, and letting $\wh{F}_{s_h}$ be given as in~\eqref{eq:dr_estimator}, its variance is: 
    \small{
        \begin{align*}
            \Var_h^\beta \lb\wh{F}_{S_h}(t)\rb =& \E_h^\beta \lb \Var_h^\beta \lb w(A_h, S_h)\Delta_{S_h, A_h}(t) \Big\vert S_h \rb\rb\\ 
            &+ \E_{h}^\beta\lb w(A_h, S_h)^2 \Var_{h+1}\lb\wh{F}_{S_{h+1}}\lp \frac{t - R_h}{\gamma}\rp \Big\vert S_h, A_h \rb \rb \\
            &+ \Var_h\lb F_{S_h}(t)\rb 
        \end{align*}
    }%
    \normalsize
where $\Delta_{s,a}(t) = \ol{F}_{s,a}(t) - F_{s,a}(t)$. 

\end{lemma}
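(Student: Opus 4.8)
The plan is to establish both claims by backward induction on the horizon $h$, from the terminal step $H+1$ down to $1$, exploiting the recursive definition~\eqref{eq:dr_estimator} together with the \CDF Bellman equation~\eqref{eq:recursive_cdf}. The argument mirrors the one for the \IS estimator (Lemma~\ref{lem:is_bias_var}); the only structural change is that subtracting the model baseline $\ol{F}_{s_h,a_h}$ replaces the factor $F_{S_h,A_h}(t)$ appearing in the first variance term by the residual $\Delta_{S_h,A_h}(t)=\ol{F}_{S_h,A_h}(t)-F_{S_h,A_h}(t)$, and this is precisely the source of the variance reduction.

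For unbiasedness I would show $\E_h^\beta[\wh{F}_{S_h}(t)\mid S_h]=F_{S_h}(t)$ for every $h$. The base case $h=H+1$ is immediate since $\wh{F}_{s_{H+1}}(t)=\one\{0\le t\}=F_{s_{H+1}}(t)$ is deterministic. For the inductive step I take the conditional expectation of~\eqref{eq:dr_estimator} given $S_h=s_h$ and use the key importance-weight identity $\E_h^\beta[w(A_h,s_h)\,g(A_h)\mid s_h]=\E_\pi[g(A_h)\mid s_h]$, which reweights the behavioral action distribution into the target one. Applying the inductive hypothesis to the inner estimator and then the \CDF Bellman equation gives $\E[\wh{F}_{S_{h+1}}((t-R_h)/\gamma)\mid s_h,A_h]=F_{s_h,A_h}(t)$, so the correction term contributes $\E_\pi[F_{s_h,A_h}(t)-\ol{F}_{s_h,A_h}(t)\mid s_h]$. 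Since the model satisfies $\ol{F}_{s_h}(t)=\E_\pi[\ol{F}_{s_h,A_h}(t)\mid s_h]$ by construction, the $\ol{F}$ terms cancel and we are left with $\E_\pi[F_{s_h,A_h}(t)\mid s_h]=F_{s_h}(t)$. Averaging over $S_1\sim\mu$ then yields $\E[\wh{F}_\DR(t)]=F(t)$.

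For the variance I would apply the law of total variance twice. Conditioning first on $S_h$ gives $\Var_h^\beta[\wh{F}_{S_h}(t)]=\E_h^\beta[\Var^\beta[\wh{F}_{S_h}(t)\mid S_h]]+\Var_h^\beta[\E^\beta[\wh{F}_{S_h}(t)\mid S_h]]$. The between-states term equals $\Var_h[F_{S_h}(t)]$, the third term, because the conditional mean is $F_{S_h}(t)$ by unbiasedness and the only remaining randomness, the transition $S_h\sim P(\cdot\mid S_{h-1},A_{h-1})$, is policy independent. For the within-state term I note that $\ol{F}_{S_h}(t)$ is constant given $S_h$ and decompose once more by conditioning on $A_h$. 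The within-action piece uses that $w(A_h,S_h)$ and $\ol{F}_{S_h,A_h}(t)$ are both constant given $(S_h,A_h)$, yielding $\E_h^\beta[w(A_h,S_h)^2\Var_{h+1}[\wh{F}_{S_{h+1}}((t-R_h)/\gamma)\mid S_h,A_h]]$, the recursive second term. The between-action piece is $\E_h^\beta[\Var_h^\beta[\E^\beta[w(A_h,S_h)(\wh{F}_{S_{h+1}}((t-R_h)/\gamma)-\ol{F}_{S_h,A_h}(t))\mid S_h,A_h]\mid S_h]]$; evaluating the inner mean exactly as in the unbiasedness argument gives $-w(A_h,S_h)\Delta_{S_h,A_h}(t)$, so this piece equals $\E_h^\beta[\Var_h^\beta[w(A_h,S_h)\Delta_{S_h,A_h}(t)\mid S_h]]$, the first term.

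The main obstacle is careful bookkeeping rather than any single hard inequality: I must track precisely which quantities are deterministic under each layer of conditioning (namely $\ol{F}_{S_h}$ given $S_h$, and $w(A_h,S_h),\ol{F}_{S_h,A_h}$ given $(S_h,A_h)$) and keep straight the $\beta$-versus-target distinctions, using that transitions and rewards are policy independent while importance weights convert behavioral action expectations into target ones. A secondary subtlety is the recursive variance term, whose argument $\wh{F}_{S_{h+1}}((t-R_h)/\gamma)$ is evaluated at a random threshold because $R_h$ is not yet fixed when conditioning on $(S_h,A_h)$; this is what makes the recursion self-referential and lets the level-$h$ variance be expressed through the level-$(h+1)$ quantity. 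The crucial conceptual step is recognizing that subtracting the baseline $\ol{F}_{S_h,A_h}$ turns the inner conditional mean into $-w(A_h,S_h)\,\Delta_{S_h,A_h}$ rather than $w(A_h,S_h)\,F_{S_h,A_h}$, which is exactly what converts Lemma~\ref{lem:is_bias_var} into the present statement.
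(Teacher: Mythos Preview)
Your proposal is correct and follows essentially the same route as the paper: backward induction for unbiasedness using the importance-weight change-of-measure identity and the cancellation $\ol{F}_{s_h}=\E_\pi[\ol{F}_{s_h,A_h}\mid s_h]$, and a conditional variance decomposition for the second claim. The only cosmetic difference is that you organize the variance calculation via two applications of the law of total variance (conditioning on $S_h$, then on $A_h$), whereas the paper expands the second moment directly, adds and subtracts $F_{S_h}(t)^2$ and $w(A_h,S_h)F_{S_h,A_h}(t)$, and drops the cross term by noting the residual has zero conditional mean; both unwind to the same three terms.
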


Comparing the variance expression above with that of the IS estimator (Lemma~\ref{lem:is_bias_var}), 
we see that the first term in the variance of the DR estimator contains a difference $\Delta_{s,a} = \ol{F}_{s,a} - F_{s,a}$, while the variance of the IS estimator contains only $F$. 
The variance in the first term is over stochastic actions given the current state. 

When $\Delta_{s,a} < F_{s,a}$, which is often the case in practice, the variance contributed by the first term can be significantly reduced in the \DR estimator. 
Consequently, the \DR estimator can be seen to reduce variance contributed by importance sampling for stochastic actions, but not variance from rewards or transitions, which arise from the last two terms in Lemma~\ref{lem:dr_bias_var}. 

To further reduce variance, taking inspiration from~\cite{thomas2015high}, we can combine weighted importance sampling with the DR estimator~\eqref{eq:dr_estimator}. Letting 
$\wt{w}_h = \sum_{i=1}^n w_h^i$
and $z_h^i = \sum_{k=1}^h \gamma^{k-1} r^i_k$, 
\begin{align*}
    \wh{F}_\WDR(t) =& \sum_{i=1}^n\Bigg( \frac{w^i_H}{\wt{w}_H}\one\llbrace z_H^i \leq t \rrbrace + \sum_{h=1}^H \frac{w^i_{h-1}}{\wt{w}_{h-1}} \ol{F}_{s^i_h}\lp\frac{t - z_h^i}{\gamma^h}\rp -\sum_{h=1}^H \frac{w^i_{h-1}}{\wt{w}_{h-1}}w(a^i_h, s^i_h)\ol{F}_{s^i_h, a^i_h}\lp\frac{t - z_h^i}{\gamma^h}\rp\Bigg).
\end{align*}
\normalsize

Neither the \DR nor \WDR estimators are guaranteed to valid CDFs; they may be outside the $[0, 1]$ interval or lose monotonicity due to the subtracted terms (particularly at lower sample sizes). In practice, we transform the \DR (and \WDR) estimators to valid \CDF{}s using: 
\begin{equation}\label{eq:mdr_estimator}
    \wh{F}_{\MDR}(t) = \text{Clip}\lp \max_{t' \leq t}\wh{F}_\DR(t'), 0, 1 \rp, 
\end{equation}
which simply does not allow the \CDF to decrease, 
and clips the value to the unit interval.

\subsection{Error Bound and Consistency}\label{sec:dr_error}
The $\WDR$ estimator is difficult to analyze, and we empirically demonstrate its efficacy in \S~\ref{sec:experiments}. 
The advantage of the \MDR estimator in~\eqref{eq:mdr_estimator}, however, is that it obeys the following finite sample error bound.
\begin{lemma}\label{lem:dr_error}
With probability at least $1-\delta$, 
\begin{align*}
    \left\Vert \wh{F}_\MDR - F \right\Vert_\infty \leq w_{max}^H\sqrt{\frac{72}{n}\log \frac{8n^{1/2}}{\delta}},
\end{align*}
and thus $\wh{F}_\MDR$ is uniformly consistent. 
\end{lemma}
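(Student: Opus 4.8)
The plan is to prove the uniform bound in two stages: first reduce the monotonized, clipped estimator $\wh{F}_\MDR$ to the raw estimator $\wh{F}_\DR$, and then establish a uniform concentration bound for $\wh{F}_\DR$ around $F$. For the first stage I would observe that both operations defining $\wh{F}_\MDR$ in~\eqref{eq:mdr_estimator}, the running maximum $g\mapsto \max_{t'\le t} g(t')$ and the clip to $[0,1]$, are non-expansive in $\|\cdot\|_\infty$ relative to any fixed valid CDF. Concretely, since $F$ is nondecreasing with $F(t)\in[0,1]$, one checks that $\max_{t'\le t}\wh{F}_\DR(t') \le F(t) + \|\wh{F}_\DR - F\|_\infty$ (using $\max_{t'\le t}F(t')=F(t)$) and $\ge \wh{F}_\DR(t) \ge F(t) - \|\wh{F}_\DR - F\|_\infty$, while clipping onto $[0,1]\ni F(t)$ only decreases distance. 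Hence $\|\wh{F}_\MDR - F\|_\infty \le \|\wh{F}_\DR - F\|_\infty$, so it suffices to bound the right-hand side.

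The second stage is the crux, because $\wh{F}_\DR$ is \emph{not} monotone (this is precisely why $\wh{F}_\MDR$ is introduced), so a DKW/monotone-grid argument does not apply to it directly. The key step I would take is to use the unrolled form following~\eqref{eq:dr_estimator} to write $\wh{F}_\DR = P_n - N_n$, where $P_n(t) = \frac1n\sum_i\lb w_H^i \one\{z_H^i \le t\} + \sum_{h=1}^H w_{h-1}^i \ol{F}_{s_h^i}\lp\tfrac{t-z_h^i}{\gamma^h}\rp\rb$ collects the indicator and model-baseline terms and $N_n(t) = \frac1n\sum_i\sum_{h=1}^H w_h^i \ol{F}_{s_h^i,a_h^i}\lp\tfrac{t-z_h^i}{\gamma^h}\rp$ collects the subtracted corrections. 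Since the weights are nonnegative and each $\ol{F}$ is a CDF evaluated at an argument increasing in $t$, both $P_n$ and $N_n$ are nondecreasing; their expectations $P=\E_{\Prob_\beta}P_n$ and $N=\E_{\Prob_\beta}N_n$ are therefore also nondecreasing, and by unbiasedness (Lemma~\ref{lem:dr_bias_var}) satisfy $P-N=F$. Thus $\|\wh{F}_\DR - F\|_\infty \le \|P_n - P\|_\infty + \|N_n - N\|_\infty$, and each summand is now a deviation of an empirical average of monotone functions from its monotone mean, which a DKW-type argument can handle.

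To bound each monotone deviation, say $\|P_n - P\|_\infty$, I would first argue inductively (from $\one,\ol{F}\in[0,1]$ and the product-of-weights structure) that each per-trajectory contribution to $P_n$ is nonnegative and at most $w_H^i + \sum_{h}w_{h-1}^i = O(w_{max}^H)$, and likewise for $N_n$; this range is the source of the horizon dependence. Then the grid-on-range reduction for monotone functions lets me pick $m$ thresholds $t_1<\cdots<t_m$ at which $P$ increases by equal increments, giving $\|P_n - P\|_\infty \le \max_k |P_n(t_k) - P(t_k)| + O(w_{max}^H)/m$. At each deterministic $t_k$, $P_n(t_k)$ averages $n$ i.i.d.\ terms in an interval of width $O(w_{max}^H)$, so Hoeffding controls $|P_n(t_k)-P(t_k)|$, and a union bound over the $2m$ thresholds for $P$ and $N$ supplies the $\log$ factor. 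Choosing $m\asymp\sqrt{n}$ balances the $1/m$ discretization slack against the $\sqrt{\log(m/\delta)/n}$ concentration term, yielding the claimed rate $w_{max}^H\sqrt{\tfrac{72}{n}\log\tfrac{8n^{1/2}}{\delta}}$, with the constants $72,8$ absorbing Hoeffding's constant, the factor two from the decomposition, and the grid choice. Uniform consistency is then immediate, since for fixed $w_{max}$ and $H$ the bound vanishes as $n\to\infty$.

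The step I expect to be the main obstacle is the uniform (rather than pointwise) control, driven by the non-monotonicity of $\wh{F}_\DR$: a grid on the domain fails because $\wh{F}_\DR$ can move arbitrarily between grid points, and a union bound over its random jump locations is awkward. The monotone decomposition $P_n - N_n$ is the device that resolves this by restoring the structure needed for a fixed quantile grid; the remaining, unavoidable difficulty is simply that the $w_{max}^H$ range inflates the rate, reflecting the exponential-in-horizon variance of importance weighting.
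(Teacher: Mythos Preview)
Your proposal is correct and follows a genuinely different route from the paper's proof. The paper splits $\wh{F}_\DR$ into the \IS piece $\frac{1}{n}\sum_i w_H^i\one\{z_H^i\le t\}$ and the model-correction piece $\frac{1}{n}\sum_i\sum_h\big(w_{h-1}^i\ol{F}_{s_h^i}-w_h^i\ol{F}_{s_h^i,a_h^i}\big)$, invokes the \IS error bound (Lemma~\ref{lem:is_error}) for the first, and for the second defines a function class, applies symmetrization with Rademacher variables, reorders samples via a permutation trick, and arrives at a sub-Gaussian tail with scale $\frac{w_{max}(w_{max}^H-1)}{w_{max}-1}$ before choosing the discretization parameter $m\asymp\sqrt{n}$. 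Your argument instead writes $\wh{F}_\DR=P_n-N_n$ as a difference of two \emph{monotone} empirical processes and controls each by a grid-plus-Hoeffding argument directly, bypassing the empirical-process machinery entirely. What you gain is a more elementary and self-contained proof whose only nontrivial idea is the monotone decomposition; what the paper's approach gains is a cleaner separation that reuses the already-proved \IS bound and a framework that could in principle give sharper (e.g.\ Bernstein-type) constants for the correction term. Both routes lead to the same $w_{max}^H/\sqrt{n}$ rate with a $\log(\sqrt{n}/\delta)$ factor. It is also worth noting that you handle explicitly the reduction $\|\wh{F}_\MDR-F\|_\infty\le\|\wh{F}_\DR-F\|_\infty$ via non-expansiveness of the running-max and clip operations, a step the paper's proof leaves implicit.
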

We achieve a convergence rate of $O(w_{max}^H / \sqrt{n})$. 
However, we note that the above bound is conservative in the sense that it assumes worst-case model misspecification, under which the \DR estimator can perform worse than $\IS$. This contributes looseness to the result, and a tighter bound remains an open problem.

\subsection{Cramer-Rao Lower Bound}\label{sec:variance_lower_bound}
Though we have demonstrated the variance reduction potential of the DR estimator, a natural question to ask is whether there exists an unbiased estimator of the CDF that can even further reduce variance. 
Inspired by \cite{jiang2016doubly, huang2020importance}, 
we derive the first Cramer-Rao lower bound on the variance of off-policy \CDF estimation in MDPs, and show that the \DR estimator can achieve this lower bound.

\begin{theorem}\label{thm:lower_bound}
For unique directed acyclic graphs (Definition~\ref{def:dag}), the variance of any unbiased off-policy CDF estimator is pointwise lower bounded by 
$$
     \sum_{h=1}^H\E_{\Prob_\beta}\lb w_{h-1}^2 \Var_h \lb F_{S_h}\lp t - \sum_{k=1}^h R_k \rp \rb  \rb.
$$
\end{theorem}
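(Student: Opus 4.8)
The plan is to adapt the Cramer–Rao argument of \citet{jiang2016doubly}, originally developed for the expected return, to the CDF estimand. The essential observation is that the claimed bound is \emph{pointwise} in $t$, so at each fixed $t$ I may regard $F^\pi(t)$ as a \emph{scalar} functional of the model and invoke the classical (multivariate) Cramer–Rao inequality. The only structural change relative to \citet{jiang2016doubly} is that I differentiate $F^\pi(t)$ through the CDF Bellman recursion~\eqref{eq:recursive_cdf} in place of the usual value Bellman equation; the importance weights and the horizon-wise decomposition then arise in exactly the same manner.

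First I would fix the parametric model. Under the unique-DAG assumption (Definition~\ref{def:dag}), every (state, horizon) pair is reached along a unique path, so a trajectory's density under $\Prob_\beta$ factorizes into node-wise conditional reward densities $R(\cdot \mid s_h,a_h)$ and transition densities $P(\cdot \mid s_h,a_h)$, with the behavior policy $\beta$ and the known target policy $\pi$ held fixed. Collecting these conditionals into a parameter vector $\eta$ (via a local parametric submodel, or a multinomial parametrization over the finite support the DAG induces), the trajectory log-likelihood is a sum of per-node terms, so the score splits into node-wise scores. The unique-DAG structure forces scores at distinct nodes to be uncorrelated, which makes the Fisher information $I(\eta)$ block diagonal — one block per node — and is precisely what yields an additive-over-horizon bound.

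Next I would compute $\nabla_\eta F^\pi(t)$. Writing $F^\pi(t) = \E_{s_1 \sim \mu}[F_{s_1}(t)]$ and unrolling~\eqref{eq:recursive_cdf}, the dependence of $F^\pi(t)$ on the conditional distribution at a horizon-$h$ node is carried along the path into that node. Because $F^\pi$ propagates through $\pi$ while the Fisher information is taken under $\beta$, differentiating along this path produces the change-of-measure factor $w_{h-1} = \prod_{k=1}^{h-1} w(A_k, S_k)$, while the residual derivative at the node is the fluctuation of the downstream CDF $F_{S_h}(t - \sum_{k=1}^h R_k)$ about its conditional mean. Substituting the block-diagonal $I(\eta)$ and this gradient into $\Var[\wh F] \geq \nabla_\eta F^\pi(t)^\top I(\eta)^{-1} \nabla_\eta F^\pi(t)$, the quadratic form separates into one term per horizon $h$, each collapsing to $\E_{\Prob_\beta}[\, w_{h-1}^2 \, \Var_h[ F_{S_h}(t - \sum_{k=1}^h R_k) ]\,]$; summing over $h$ gives the stated bound.

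The main obstacle will be making the nonparametric Fisher-information and functional-derivative computation rigorous. Concretely, I expect two delicate steps: (i) confirming that, under the unique-DAG assumption, the cross terms between scores at different nodes vanish so that $I(\eta)^{-1}$ inherits the block structure and no inter-horizon interaction survives; and (ii) showing that each per-node inverse-Fisher quadratic form collapses \emph{exactly} to the conditional variance $\Var_h[\cdot]$ of the downstream CDF, with the squared weight $w_{h-1}^2$ emerging from the squared change-of-measure factor in the gradient. Some additional care is needed to set up the local parametric submodel so that the reward and transition distributions are differentiable in $\eta$ and the classical bound applies uniformly to \emph{every} unbiased estimator at the fixed point $t$.
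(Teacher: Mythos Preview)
Your proposal follows the same overall strategy as the paper: both adapt the Cramer--Rao argument of \citet{jiang2016doubly} to the scalar estimand $F^\pi(t)$ at a fixed $t$, exploiting the horizon-wise factorization of the trajectory likelihood under the UDAG assumption. The difference is in bookkeeping. The paper parametrizes by the raw transition probabilities $\eta_{\wt s,a,\wt s'} = P(\wt s' \mid \wt s,a)$, which live on simplices; consequently its Fisher matrix $I$ is \emph{not} block diagonal (the off-diagonal entries are computed explicitly), and the simplex constraints are handled via the \emph{constrained} Cramer--Rao bound $KU(U^\top I U)^{-1}U^\top K^\top$, with an $N^\top X^\top + XN$ trick showing that $U(U^\top I U)^{-1}U^\top$ is block diagonal even though $I$ itself is not. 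Your route---building the simplex constraints into the local parametrization so that per-node scores have conditional mean zero and are therefore uncorrelated across nodes---reaches the same block-diagonal inverse-Fisher object more directly, and is arguably cleaner. One technical step you gloss over but the paper makes explicit: because the downstream CDF $F_{S_h}(t-\sum_k R_k)$ depends on the accumulated reward and not on $S_h$ alone, the paper first augments the state to $\wt s_h = (s_h, g_h)$ with $g_h = \sum_{k<h} r_k$, so that each ``node'' in the DAG carries enough information for the gradient and the node-wise conditional variance to be well defined; you will need the same augmentation. Finally, your reading of Definition~\ref{def:dag} as ``each node is reached along a unique path'' is stronger than what is actually assumed (only that each state appears at a unique horizon and has a unique reward function), though this over-reading does not break your score-orthogonality argument.
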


This lower bound on the variance shows that difficulties introduced by the intrinsic state transition stochasticity of the underlying \MDP cannot be eliminated. 
In fact, the variance of the DR estimator in unique directed acyclic graphs is equal to the Cramer Rao lower bound when $\ol{F}$ is a perfect model, 
that is, $\ol{F}_{s_h, a_h}(t) = F_{s_h, a_h}(t)$
for all $s_h, a_h$ and $t$. Further, as the error between $\ol{F}$ and $F$ decreases, the variance approaches the Cramer-Rao lower bound.

\section{Risk Estimation}\label{sec:risk}
We have developed several methods for \CDF estimation and their error bounds, 
which we now translate to  off-policy risk assessment. 
Returning to Algorithm~\ref{algo:opra}, we have the CDF estimator $\wh{F}$ and the CDF estimation error $\epsilon$ (Line 2). 
We form the plug-in estimate of a risk functional $\rho$ as:  
\begin{equation}\label{eq:risk_plug_in}
    \wh\rho = \rho( \wh{F} )  
\end{equation}
In this section, we characterize 
the theoretical properties of the risk estimate $\wh\rho$
and show that the properties of the \CDF estimators 
translate directly to the risk estimators built upon them. 
Notably, the reduced variance of the \DR \CDF estimator 
results in risk estimators with reduced bias and variance, 
and for Lipschitz risk functionals (Definition~\ref{def:lipschitz}), the error of $\wh\rho$ is proportional $\epsilon$. For uniformly consistent CDF estimators, such as the previously introduced IS~\eqref{eq:is_estimator} and DR~\eqref{eq:dr_estimator} estimators, this implies that their downstream risk estimates will be consistent as well. 

\subsection{Bias, Variance, and Error}\label{sec:risk_bias_var}
In general, the risk estimator $\wh\rho$~\eqref{eq:risk_plug_in} can be biased even if the \CDF estimator is unbiased. 
Intuitively, however, $\wh\rho$ may have lower variance (and thus error) when it is estimated from a CDF estimate $\wh{F}$ with lower variance, because it is computed directly from $\wh{F}$ itself. 

For Lipschitz risk functionals, we formalize this intuition as a finite-sample convergence guarantee (Corollary~\ref{cor:lipschitz_error}), 
which demonstrates that the error of any Lipschitz risk estimator is upper bounded by its Lipschitz constant (from Definition~\ref{def:lipschitz}), and the error (upper bound) of the CDF estimate it utilizes.  
\begin{corollary}\label{cor:lipschitz_error}
For all Lipschitz risk functionals simultaneously, 
given a \CDF estimator $\wh{F}$ with error $\epsilon$, 
we have with probability at least $1-\delta$ that 
\begin{equation}\label{eq:confidence_general}
    |\rho(F) - \rho_p(\wh{F})| \leq L\epsilon,  
\end{equation}
where $L$ is the Lipschitz constant of $\rho$. Further, if $\wh{F}$ is uniformly consistent, then $\rho(\wh{F}) \stackrel{p}{\longrightarrow} \rho(F)$. 
\end{corollary}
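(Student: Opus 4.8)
The plan is to obtain the corollary almost immediately from the Lipschitz property (Definition~\ref{def:lipschitz}) together with the meaning of the CDF error $\epsilon$. I interpret ``$\wh{F}$ is a CDF estimator with error $\epsilon$'' as the statement that, with probability at least $1-\delta$, the uniform deviation satisfies $\|\wh{F} - F\|_\infty \leq \epsilon$; this is exactly what the earlier error bounds (e.g. Lemma~\ref{lem:is_error} for \ISclip and Lemma~\ref{lem:dr_error} for \MDR) provide. I would first fix attention on the single high-probability event $\mathcal{E} := \{\|\wh{F}-F\|_\infty \leq \epsilon\}$, which by assumption holds with probability at least $1-\delta$.

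On the event $\mathcal{E}$, I would apply Definition~\ref{def:lipschitz} directly: since $\rho$ is $L$-Lipschitz and both $F$ and $\wh{F}$ are valid CDFs, we have $|\rho(F) - \rho(\wh{F})| \leq L\|\wh{F} - F\|_\infty \leq L\epsilon$, which is the stated inequality. The key point for the ``simultaneously over all Lipschitz risk functionals'' claim is that it incurs no additional probabilistic cost: the event $\mathcal{E}$ controls the \emph{single} scalar $\|\wh{F} - F\|_\infty$, after which the Lipschitz inequality holds \emph{deterministically} for every Lipschitz risk functional on $\mathcal{E}$, each with its own constant $L$. Thus one union-free use of the uniform CDF bound simultaneously governs the entire collection $\{\rho_p\}_{p=1}^P$, which is precisely the ``no loss in statistical power'' property advertised in the introduction.

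For the consistency claim I would use the sandwich $0 \leq |\rho(\wh{F}) - \rho(F)| \leq L\|\wh{F} - F\|_\infty$. If $\wh{F}$ is uniformly consistent, i.e. $\|\wh{F} - F\|_\infty \stackrel{p}{\longrightarrow} 0$, then the right-hand side converges to $0$ in probability, and the squeeze forces $|\rho(\wh{F}) - \rho(F)| \stackrel{p}{\longrightarrow} 0$, that is $\rho(\wh{F}) \stackrel{p}{\longrightarrow} \rho(F)$.

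The only genuine subtlety, and the one place I would be careful, is ensuring that Definition~\ref{def:lipschitz} actually applies: it is stated for pairs of \emph{CDFs}, so $\wh{F}$ must be a legitimate CDF in order for $\rho(\wh{F})$ to be well-defined as a law-invariant functional of a distribution. This is why the guarantee is invoked for the clipped and monotonized estimators (\ISclip, \WIS, \MDR) rather than the raw \IS or \DR estimators, which may leave $[0,1]$ or violate monotonicity at finite $n$. Beyond this bookkeeping there is no real obstacle: the corollary is a direct transfer of the uniform CDF guarantee through the Lipschitz constant, with the simultaneity coming ``for free'' from the shared underlying CDF estimate.
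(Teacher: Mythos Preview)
Your proposal is correct and matches the paper's approach: the paper treats Corollary~\ref{cor:lipschitz_error} as an immediate consequence of Definition~\ref{def:lipschitz} and does not supply a separate proof, so your argument (apply the Lipschitz inequality on the single high-probability event $\{\|\wh F - F\|_\infty \le \epsilon\}$, then squeeze for consistency) is exactly what is intended. Your remark about needing $\wh F$ to be a valid CDF is a helpful clarification that the paper leaves implicit.
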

As we have seen in Sections~\ref{sec:is_cdf} and~\ref{sec:dr_cdf}, the error guarantees for $\wh{F}$ are functions of its variance. 
Thus, Corollary~\ref{cor:lipschitz_error} demonstrates that we can expect to have faster convergence for risk estimates estimated on lower-variance CDFs. 
As an additional consequence, downstream risk estimators built from 
the \IS or \DR \CDF estimators, 
which have error that scale with $O(1/\sqrt{n})$, 
are guaranteed to be consistent. 

\subsection{Lower Bound}
We have previously demonstrated that the \IS \CDF estimator is minimax rate-optimal, 
and demonstrated that the $\DR{}\CDF$ estimator can achieve the variance lower bound for off-policy CDF estimation. 
However, the natural question to ask is: are their downstream risk estimators also rate-optimal? 
For general risk functionals this remains an open problem, 
but we show that for the Lipschitz risk functionals of $\cvara$ and expected return (which is equivalent to $\cvar_0$), 
their off-policy risk estimator $\wh \rho$ has the following lower bound.  
\begin{theorem}\label{thm:cvar_lb}
    Let $\F$ be the family of CDFs with bounded support and $\rho = \cvara$. For $\alpha \in [0, 1)$ and $n \geq c \frac{w_{max}^2}{\E_{\Prob_\beta}[w_H^2]}$ for a universal constant $c > 0$, 
    \begin{align*}
        \inf_{\wh\rho } \sup_{F^\pi \in \F} \E\lb \vert \wh\rho - \rho(F^\pi) \vert \rb \geq \frac{cD}{1-\alpha}\sqrt{\frac{\E_{\Prob_\beta}[w_H^2]}{n}}.
    \end{align*}
\end{theorem}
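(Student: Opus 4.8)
The plan is to prove this by a two-point (Le Cam) reduction that reuses the hard instance behind the CDF lower bound of Theorem~\ref{thm:cdf_lower_bound} and then converts a perturbation of the CDF into a proportional perturbation of $\cvara$. Concretely, I would restrict the family $\F$ to return distributions supported on the two points $\{0, D\}$, parametrized by $p := \Prob(Z^\pi = D)$, so that $S^\pi(t) = p$ and $F^\pi(t) = 1-p$ for every $t \in [0, D)$. Plugging this into the distortion-risk formula with $g(x) = \min\{x/(1-\alpha), 1\}$ gives $\cvara = \int_0^D g(p)\,dt = D\,g(p)$, which equals $\tfrac{Dp}{1-\alpha}$ exactly whenever $p \le 1-\alpha$. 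Thus, restricted to this sub-family, $\cvara$ is an \emph{exactly linear} function of the single CDF value $p$ with slope $\tfrac{D}{1-\alpha}$, and the special case $\alpha = 0$ reduces to $\cvara = Dp = \E[Z^\pi]$, consistent with the mean.

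Next I would instantiate the two competing problems. I take behavior and target policies so that the induced trajectory weight $w_H$ matches the construction of Theorem~\ref{thm:cdf_lower_bound}, and set the two hypotheses to be $p_0$ and $p_1 = p_0 + \delta$ with $\delta \asymp \sqrt{\E_{\Prob_\beta}[w_H^2]/n}$ and both $p_0, p_1 \le 1-\alpha$. By the information-theoretic analysis underlying Theorem~\ref{thm:cdf_lower_bound}, the chi-square divergence between the two induced $n$-sample data distributions scales as $n\delta^2/\E_{\Prob_\beta}[w_H^2] = O(1)$, so (bounding KL by chi-square and then applying Pinsker) their total variation stays bounded away from $1$. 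This is exactly where the hypothesis $n \ge c\,w_{max}^2/\E_{\Prob_\beta}[w_H^2]$ is invoked: it guarantees that the chosen $\delta$ is small enough to be a valid perturbation keeping $p_1 \le 1-\alpha$, so that both instances remain in the linear regime of $g$, while still being large enough to realize the $\sqrt{\E_{\Prob_\beta}[w_H^2]/n}$ rate.

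By the linearity established in the first step, the two instances are then separated in risk by $|\cvara(F_0) - \cvara(F_1)| = \tfrac{D}{1-\alpha}\,|p_0 - p_1| = \tfrac{D\delta}{1-\alpha} \asymp \tfrac{D}{1-\alpha}\sqrt{\E_{\Prob_\beta}[w_H^2]/n}$. Applying the standard Le Cam two-point inequality—which lower-bounds the minimax error by a constant multiple of $(1 - \mathrm{TV})$ times the risk separation—yields $\inf_{\wh\rho}\sup_{F^\pi \in \F}\E[|\wh\rho - \rho(F^\pi)|] \gtrsim \tfrac{D}{1-\alpha}\sqrt{\E_{\Prob_\beta}[w_H^2]/n}$ after absorbing the factor $(1-\mathrm{TV})/4$ into the universal constant $c$.

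The main obstacle, and the step meriting the most care, is the middle one: verifying that a single choice of $\delta \asymp \sqrt{\E_{\Prob_\beta}[w_H^2]/n}$ simultaneously (i) produces the correct indistinguishability, with the $\E_{\Prob_\beta}[w_H^2]$ factor emerging cleanly from the importance weights of the hard instance when bounding the divergence of the $n$-fold product measures, and (ii) keeps $p_1 \le 1-\alpha$ so that the full $\tfrac{1}{1-\alpha}$ amplification of the CDF gap into the risk gap is attained. These two demands jointly pin down the admissible range of $\delta$, and reconciling them across all $\alpha \in [0,1)$ is precisely what the sample-size condition secures; everything else is bookkeeping already carried out for Theorem~\ref{thm:cdf_lower_bound}.
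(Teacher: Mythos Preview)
Your proposal is correct and follows essentially the same route as the paper: Le Cam's two-point method on Bernoulli reward instances supported on $\{0,D\}$, with the base probability chosen so that $1-F^\pi$ lies in the linear part of the CVaR distortion $g$, so that a CDF separation of size $\delta$ becomes a CVaR separation of size $D\delta/(1-\alpha)$, after which one invokes the same separation-versus-indistinguishability tradeoff underlying Theorem~\ref{thm:cdf_lower_bound}. The paper carries this out by explicitly re-solving the constrained dual over arm-wise perturbations $\delta(a)$ (its Lemma~\ref{lem:cvar_opt_asymp}), whereas your ``reuse the hard instance of Theorem~\ref{thm:cdf_lower_bound}'' packaging is a legitimate shortcut to the same endpoint.

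One point to correct when you flesh this out: the sample-size hypothesis is not driven by the constraint $p_1\le 1-\alpha$ on the \emph{aggregate} target-policy parameter---that only forces $\delta\lesssim 1-\alpha$, which is a much weaker requirement and does not produce the $w_{max}^2$ factor. The binding constraint, both in the paper's proof and in the hard instance you are importing, is the \emph{per-arm} box constraint $\delta(a)\in[0,1-\alpha]$ on the Bernoulli parameters themselves. Since the optimal arm-wise perturbation that realizes separation $\delta$ at minimal KL cost scales as $\delta(a)\propto w(a)/\sqrt{n\,\E_{\Prob_\beta}[w_H^2]}$, the largest of these remains feasible exactly when $n\gtrsim w_{max}^2/\E_{\Prob_\beta}[w_H^2]$ (the paper's derivation in fact yields an explicit $(1-\alpha)^{-2}$ factor here, absorbed into~$c$). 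Your divergence claim $n\delta^2/\E_{\Prob_\beta}[w_H^2]=O(1)$ is likewise only true for this specific arm-wise choice, not for an arbitrary perturbation achieving $p_1-p_0=\delta$; since you are reusing the construction of Theorem~\ref{thm:cdf_lower_bound} this is implicit, but it is worth making explicit.
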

For large samples sizes, also matches existing results on off-policy mean estimation~\citep{li2015toward, wang2017optimal, ma2021minimax}. 
As the Lipschitz constants for $\cvara$ and expected return are 
$\frac{D}{1-\alpha}$ and $D$, respectively, 
the convergence rate of Corollary~\ref{cor:lipschitz_error} with \IS estimation is indeed minimax rate-optimal.

\begin{figure*}[t]
  \centering 
  \begin{subfigure}{0.77\textwidth}
      \centering 
      \hspace*{-5em}
      \includegraphics[trim={0 0 0 100}, clip, width=\textwidth]{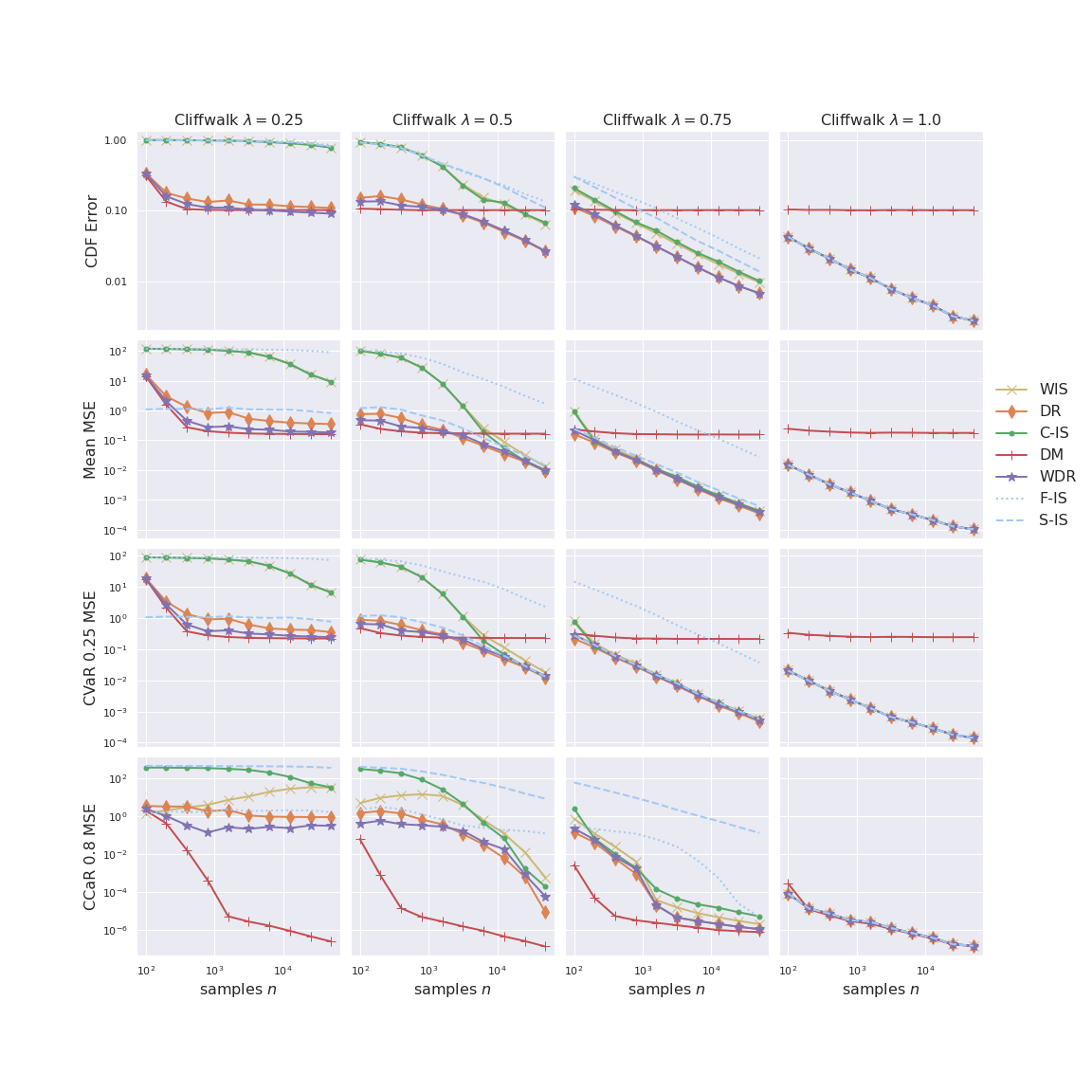}
  \end{subfigure}
  \begin{subfigure}{0.22\textwidth}
      \centering 
      \hspace*{-3em}
      \includegraphics[trim={0 0 0 100}, clip, width=\textwidth]{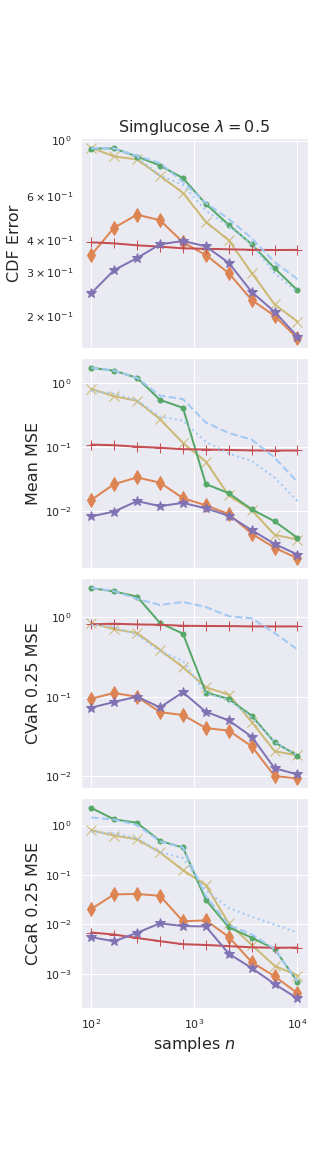}
  \end{subfigure}
  \vspace*{-4em}
  \caption{Sup-norm error of the CDF estimate \textbf{(row 1)}, and mean squared error of the expected return \textbf{(row 2)} and CVaR 0.25 \textbf{(row 3)} plug-in estimates for different $\lambda$ \textbf{(column)} in Cliffwalk and Simglucose. }
  \label{fig:error}
\end{figure*}

\section{Experiments}\label{sec:experiments}
We now provide empirical support for our theoretical results,  
and compare the performance of the \CDF and plug-in risk estimators in Figure~\ref{fig:error}.
We compare the \DR and \WDR estimators 
against the importance sampling estimators--- \FIS~\eqref{eq:is_estimator}, \SIS~\eqref{eq:s_estimator}, \CIS~\eqref{eq:cis_estimator}, \WIS ($\S$~\ref{sec:is_error})---and model-based \DM estimator. Our primary baselines of comparison are the \FIS and \WIS estimators, which are the only two off-policy CDF/risk estimators in existing work~\citep{chandak2021universal}. As we will see, the \DR and \WDR estimators outperform the IS-based estimators in all studied cases. 

\paragraph{Environment} We utilize two experimental domains: a tabular Cliffwalk~\citep{sutton2018reinforcement}, and continuous diabetes treatment simulator~\citep{xie2018simglucose}. The horizons of both are $H=200$, and the behavioral policy $\beta$ is a mixture between the target $\pi$ and a uniform policy $\text{UNIF}$, i.e., $\beta = \lambda \pi + (1-\lambda)\text{UNIF}$. Appendix~\ref{appendix:experiments} includes the full setup and additional experiments. 
Both environments have risk-related interpretations: in Cliffwalk the agent must take the shortest path while avoiding a slippery, high-cost cliff, while in Simglucose the agent must maintain a patient's glucose levels within a healthy range. 

\paragraph{Risk Functionals} As such, we estimate one each of risk-neutral, risk-averse, and risk-seeking risk functionals using OPRA: the expected return, $\cvara$, and a risk functional we call $\ccara$ (conditional cost-at-risk), respectively. While $\cvara$ is the expectation of the {worst-case $\alpha$ tail} of the distribution, CCaR is the expectation of the best-case $\alpha$ tail.

\paragraph{Results} In both environments, the \DR and \WDR estimators outperform the \IS estimators and their variants in CDF and risk estimation error (Figure~\ref{fig:error}). These environments present challenges for \IS estimators because of their long horizons, but \DR and \WDR effectively overcome them by incorporating model information. At very low sample sizes, however, the \DM estimator can outperform the other estimators.

Our results also demonstrate the variability of the \CDF estimator performance for different risk functionals; the gap between $\FIS$ and $\SIS$ is especially obvious. 
Under the Cliffwalk cost distribution, $\SIS$ far outperforms $\FIS$ for both the risk-neutral expected return and risk-sensitive $\cvara$, 
For the risk-seeking $\ccara$, however, $\FIS$ outperforms $\SIS$. The combined estimator $\CIS$ never does worse than either $\FIS$ or $\SIS$, and as $n$ increases can do better than both, as Proposition~\ref{prop:combined} implied. 
The $\DM$ estimator has low error under $\ccara$ because the model estimates the lower tail well, but the upper tail poorly. 
The \DR and \WDR estimator straddle best of both worlds; they perform well under all evaluated risk functionals.

\section{Discussion} 
In this work, we have introduced and analyzed variance-reduced off-policy CDF and risk estimators for MDPs, including the first doubly robust estimator for return CDF estimation.
Several future research directions are of interest. First, it is possible that estimators and bounds can be tailored and tightened for individual risk functionals, and a thorough comparison of different risk estimators is an important avenue of future work. 
Second, our results have also highlighted the risk estimation performance disparity between different types of CDF estimators. 
This disparity is more severe for risk-averse or risk-seeking risk functionals than expected value. 
As such, adaptive estimator selection (based on risk functionals of interest) may prove to be of both theoretical and practical interest. 
Lastly, in off-policy evaluation of the mean,  marginalized importance sampling~\citep{xie2019towards} has been shown to achieve the Cramer-Rao lower bound, and model-free regression~\citep{duan2020minimax} has been proven to be minimax optimal. 
One important future direction is to investigate whether these methods can be extended to off-policy CDF or risk estimation, which, unlike the mean, are nonlinear functions of the return.

\bibliographystyle{plainnat}
\bibliography{refs.bib}

\newpage 

\onecolumn 
\appendix

\section{Preliminaries: Risk Overview}\label{appendix:preliminaries}

Let $Z$ be a random variable. 
A \emph{risk functional} $\rho$
is a mapping from a space of random variables to the space of real numbers
$\rho: \mathcal{L}_\infty(\Omega, \F_Z,\Prob_Z) \rightarrow \R$.
Risk functionals are law-invariant if they depend
only on the distribution of $Z$~\citep{kusuoka2001law}.  
Formally $\rho(Z)$ is law invariant if for any pair of random variables $Z$ and $Z'$,
$
    F_Z = F_{Z'}\;\Longrightarrow\; \rho(Z)=\rho(Z'). 
$

Lipschitz risk functionals are a subset of law-invariatn risk functionals, and we 
now delineate popular classes of risks and their associated Lipschitz constants, where possible, adapted from \cite{huang2021convergence}.

\paragraph{Distorted Risk Functionals}
For $Z \geq 0$, distortion risk functionals are defined to be~\cite{denneberg1990distorted,wang1996premium,wang1997axiomatic,balbas2009properties}
\begin{align*}
    \rho(F_Z) = \int_0^\infty g(1-F_Z(t))dt =  \int_0^\infty g(S_Z(t))dt
\end{align*}
where the distortion $g:[0,1]\rightarrow[0,1]$ 
has $g(0)=0$ and $g(1)=1$, and is increasing. 
When $g(x)=x$, the expected value is recovered.
When $g(x) = \min\{\frac{x}{1-\alpha},1\}$ 
for $\alpha \in (0, 1)$,
CVaR at level $\alpha$ is recovered.

When $g(x)=F(F^{-1}(x)-F^{-1}(\alpha))$,
the Wang risk functional at level $\alpha$~\citep{wang1996premium}  is recovered, 
and the proportional hazard risk functional 
can by obtained by setting $g(x)=x^\alpha$ for $\alpha < 1$. 
Distorted risk functionals are coherent 
if and only if $g$ is concave \citep{wirch2001distortion}. 
Not all distorted risk functionals are coherent.
For example, setting $g(s)=\one_{\{s\geq 1-\alpha\}}$
recovers the value-at-risk (VaR),
which is not coherent. 

\paragraph{Coherent Risk Functionals} 
Coherent risk functionals satisfy 
properties called monotonicity, subadditivity,
translation invariance, and positive homogeneity  \cite{artzner1999coherent,delbaen2002coherent}.
Examples include expected value, 
conditional value-at-risk (CVaR), 
entropic value-at-risk,
and mean semideviation, proportional hazard, and Wang transform~\citep{chang2020risk, tamkin2019distributionally, tamar2015policy, shapiro2014lectures, wirch2001distortion, wang1996premium}. 

\begin{lemma}[Lipschitzness of Coherent and Distorted Risk Functionals \cite{huang2021off}]\label{lem:distorted_lipschitz}
On the space of random variables with support in $[0,D]$, 
the distorted risk functional of any $\frac{L}{D}$-Lipschitz distortion function 
is a $L$-Lipschitz risk functional.

The expected value risk functional is $D$-Lipschitz 
 and $\cvara$
 is $\frac{D}{\alpha}$-Lipschitz. 
\end{lemma}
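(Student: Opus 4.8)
The plan is to prove the general distortion statement first, since the two named cases (expected value and $\cvara$) then follow immediately by reading off the Lipschitz constants of their associated distortion functions. Throughout, I would work from the integral representation $\rho(F_Z) = \int_0^\infty g(S_Z(t))\,dt$, together with the crucial observation that every random variable under consideration has support in $[0,D]$: hence $S_Z(t) = 0$ for all $t \geq D$, and since $g(0) = 0$ the integrand vanishes beyond $D$. The integral therefore reduces to $\int_0^D g(S_Z(t))\,dt$ over a domain of finite length $D$, and it is this truncation that ultimately produces the factor $D$ in the Lipschitz constant.

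For the main claim, fix two CDFs $F_Z$ and $F_{Z'}$ with support in $[0,D]$ and estimate
\begin{align*}
\lv \rho(F_Z) - \rho(F_{Z'}) \rv
&= \lv \int_0^D \lb g(S_Z(t)) - g(S_{Z'}(t)) \rb dt \rv \\
&\leq \int_0^D \lv g(S_Z(t)) - g(S_{Z'}(t)) \rv dt.
\end{align*}
Next I would apply the $\frac{L}{D}$-Lipschitzness of $g$ pointwise and use the identity $S_Z(t) - S_{Z'}(t) = F_{Z'}(t) - F_Z(t)$, so that the integrand is bounded by $\frac{L}{D}\lv F_Z(t) - F_{Z'}(t)\rv \leq \frac{L}{D}\lV F_Z - F_{Z'}\rV_\infty$. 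Integrating this constant bound over $[0,D]$ contributes exactly a factor of $D$, which cancels the $\frac{1}{D}$ and yields $\lv \rho(F_Z) - \rho(F_{Z'})\rv \leq L \lV F_Z - F_{Z'}\rV_\infty$, establishing $L$-Lipschitzness of the distorted risk functional.

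The two special cases then follow by identifying distortion-function slopes. The expected value corresponds to $g(x) = x$, which is $1$-Lipschitz; taking $\frac{L}{D} = 1$ gives the $D$-Lipschitz bound. For $\cvara$, the distortion $g(x) = \min\{x/\alpha, 1\}$ is piecewise linear with maximal slope $1/\alpha$ on $[0,\alpha]$ and slope $0$ thereafter, hence is $\frac{1}{\alpha}$-Lipschitz; taking $\frac{L}{D} = 1/\alpha$ yields the $\frac{D}{\alpha}$-Lipschitz bound claimed in the statement.

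I expect no serious obstacle: the argument is a direct application of the integral representation and the triangle inequality. The only point that genuinely requires care is the support truncation — one must verify that both complementary CDFs vanish at and beyond $D$ so the integrand is truly supported on $[0,D]$, as this is precisely what makes the domain length $D$ (and therefore the constant $D$) appear. A secondary bookkeeping point is the convention for the CVaR distortion denominator (whether it is $\alpha$ or $1-\alpha$), which only rescales the resulting Lipschitz constant accordingly and does not affect the structure of the proof.
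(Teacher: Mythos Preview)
Your proof is correct and is the standard argument. Note that the paper itself does not supply a proof of this lemma: it is stated in the appendix with attribution to \cite{huang2021off}, so there is no in-paper derivation to compare against. Your observation about the $\alpha$ versus $1-\alpha$ convention is apt --- the paper's main text writes the CVaR distortion as $g(x)=\min\{x/(1-\alpha),1\}$ and states a $D/(1-\alpha)$ constant, while the appendix lemma records $D/\alpha$; this is a notational inconsistency in the paper rather than an issue with your argument.
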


\paragraph{Cumulative Prospect Theory (CPT) Risk Functionals} 
CPT risks~\cite{prashanth2016cumulative} are defined as:
\begin{align*}
    \rho(F_Z) = \int_0^{+\infty}g^+\left(1 - F_{u^+(Z)}(t)
    \right)dt - \int_0^{+\infty}g^-\left(1 - F_{u^-(Z)}(t)\right) dt, 
\end{align*}
where $g^+, g^-$: $[0, 1] \rightarrow [0, 1]$ 
are continuous 
with $g^{+/-}(0) = 0$ and $g^{+/-}(1) = 1$.
The functions $u^+, u^-: \R \rightarrow \R_+$ are continuous, 
with $u^+(z) = 0$ when $z \geq c$ and $u^-(z) = 0$ 
when $z < c$ for some constant $c \in \R$.
The CPT functional handles gains and losses separately, applying distortion $g^+$ to the part of the distribution corresponding to gains (controlled by $u^+$), and $g^-$ to the part corresponding to losses (controlled by $u^-$).

\begin{lemma}[Lipschitzness of CPT Functional]\label{lem:cpt_lipschitz}
If the CPT distortion functions $g^+$ and $g^-$
are both $\frac{L}{D}$-Lipschitz, 
then the CPT risk functional is $L$-Lipschitz.
\end{lemma}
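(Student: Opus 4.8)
The plan is to reduce the claim to the distortion-risk case already established in Lemma~\ref{lem:distorted_lipschitz}. Writing the CPT functional as $\rho(F_Z) = \rho^+(F_{u^+(Z)}) - \rho^-(F_{u^-(Z)})$, where $\rho^\pm(G) := \int_0^\infty g^\pm(1 - G(t))\,dt$ are ordinary distortion functionals of the transformed (utility) variables, I would first apply the triangle inequality to split
\[
|\rho(F_Z) - \rho(F_{Z'})| \le \big|\rho^+(F_{u^+(Z)}) - \rho^+(F_{u^+(Z')})\big| + \big|\rho^-(F_{u^-(Z)}) - \rho^-(F_{u^-(Z')})\big|.
\]
This isolates the two half-line distortion integrals so that the $\frac{L}{D}$-Lipschitz hypothesis on $g^+$ and $g^-$ can be invoked on each piece separately.

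For the $+$ piece I would pull the difference inside the integral and apply the Lipschitz bound on $g^+$, obtaining
\[
\big|\rho^+(F_{u^+(Z)}) - \rho^+(F_{u^+(Z')})\big| \le \tfrac{L}{D}\int_0^\infty \big|F_{u^+(Z)}(t) - F_{u^+(Z')}(t)\big|\,dt,
\]
and symmetrically for the $-$ piece with $g^-$ and $u^-$. The next step is to relate the CDF of the transformed variable back to $F_Z$. Since $u^+$ is continuous and (as is standard for CPT utilities) monotone on the region $\{z < c\}$ where it is nonzero, while collapsing all mass from $\{z \ge c\}$ onto an atom at $0$, the event $\{u^+(Z) \le t\}$ is an interval event in $Z$; hence $F_{u^+(Z)}(t)$ is a reparametrization of $F_Z$ evaluated at the preimage $(u^+)^{-1}(t)$, giving $|F_{u^+(Z)}(t) - F_{u^+(Z')}(t)| \le \|F_Z - F_{Z'}\|_\infty$ pointwise. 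Equivalently, one can simply apply Lemma~\ref{lem:distorted_lipschitz} to $u^+(Z)$ directly.

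Finally I would account for the supports to assemble the constant $L$ rather than $2L$. Because $u^+$ vanishes on $\{z \ge c\}$ and $u^-$ vanishes on $\{z < c\}$, the two integrals are supported on disjoint pieces of $[0,D]$ split at the reference point $c$; changing variables $t = u^\pm(z)$ and tracking the range swept by each utility over its half of the support, the two contributions sum to at most $\tfrac{L}{D}\cdot D \cdot \|F_Z - F_{Z'}\|_\infty = L\,\|F_Z - F_{Z'}\|_\infty$, where I also use $\|S_Z - S_{Z'}\|_\infty = \|F_Z - F_{Z'}\|_\infty$. The main obstacle is precisely this bookkeeping: relating $F_{u^\pm(Z)}$ to $F_Z$ while correctly handling the atom that each $u^\pm$ creates at $0$, and verifying that the disjoint split at $c$ lets the two halves combine into a single factor $L$. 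The Lipschitz reductions themselves are routine once that support structure is in place.
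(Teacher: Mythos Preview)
The paper does not supply its own proof of this lemma; it is stated in Appendix~\ref{appendix:preliminaries} with attribution to prior work, so there is no in-paper argument to compare against.

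On the merits of your proposal: the decomposition via the triangle inequality and the pointwise application of the $\tfrac{L}{D}$-Lipschitz hypothesis on $g^\pm$ are correct, and the bound $|F_{u^\pm(Z)}(t)-F_{u^\pm(Z')}(t)|\le\|F_Z-F_{Z'}\|_\infty$ is fine once you add monotonicity of $u^\pm$ (which you correctly flag as an extra assumption). The real gap is the final ``combine to a single $L$'' step. After the pointwise bound, each integral is at most $\tfrac{L}{D}\cdot|\mathrm{range}(u^\pm)|\cdot\|F_Z-F_{Z'}\|_\infty$, since the integrand in $t$ vanishes outside the range of $u^\pm$. Your change of variables $t=u^\pm(z)$ introduces the Jacobian $|(u^\pm)'(z)|$, which does not disappear; the resulting $z$-integral is $\int |F_Z-F_{Z'}|\,|(u^\pm)'|\,dz$, not $\int |F_Z-F_{Z'}|\,dz$. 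The ``disjoint halves $[0,c]$ and $[c,D]$ sum to $D$'' observation concerns the \emph{domains} of $u^\pm$, but what controls the bound is their \emph{ranges} (equivalently, their total variation), and the lemma as stated places no constraint on those. To make your argument close you would need, for instance, that $u^\pm$ are each $1$-Lipschitz, so the total variation of $u^+$ on $[0,c]$ is at most $c$ and of $u^-$ on $[c,D]$ at most $D-c$; absent such a hypothesis the bound you actually obtain is $\tfrac{L}{D}\bigl(|\mathrm{range}(u^+)|+|\mathrm{range}(u^-)|\bigr)\|F_Z-F_{Z'}\|_\infty$, not $L\|F_Z-F_{Z'}\|_\infty$.
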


\paragraph{Other Risk Functionals} 
Many other risk functionals, 
though not members of a specific class, 
can be shown to be Lipschitz. 
This includes the variance and mean-variance.
The variance is defined as 
$\rho(F_Z) = 2\int_{0}^\infty t(1-F_Z(t))dt - \left(\int_{0}^\infty (1-F_Z(t))dt \right)^2.$

\begin{lemma}[Lipschitzness of Variance]\label{lemma:variance_lipschitz}
 On the space of random variables with support in $[0,D]$,
 variance is a $3D^2$-Lipschitz risk functional. For mean-variance given 
by $\rho(Z) = \E[Z] + \lambda \Var(Z)$ 
for some $\lambda > 0$, it is $(1 + 3\lambda D^2)$-Lipschitz. 
\end{lemma}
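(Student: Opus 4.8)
The plan is to work directly from the integral representation of the variance supplied just before the lemma and bound its two constituent terms separately by the sup-norm distance between CDFs. Writing $S_Z(t) = 1 - F_Z(t)$, I would first record that $\|F_Z - F_{Z'}\|_\infty = \|S_Z - S_{Z'}\|_\infty$ and that the support assumption forces $S_Z(t) = 0$ for $t > D$, so every integral may be truncated to $[0, D]$. I would then decompose $\rho(F_Z) = A_Z - B_Z^2$, where $A_Z = 2\int_0^D t\, S_Z(t)\, dt$ is the second moment and $B_Z = \int_0^D S_Z(t)\, dt = \E[Z]$ is the mean.

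The second-moment term is the easy piece: since $|A_Z - A_{Z'}| \leq 2\int_0^D t\,|S_Z(t) - S_{Z'}(t)|\, dt \leq 2\|S_Z - S_{Z'}\|_\infty \int_0^D t\, dt$ and $\int_0^D t\, dt = D^2/2$, this contributes at most $D^2\|F_Z - F_{Z'}\|_\infty$.

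The squared-mean term is the nonlinear part and the only genuine subtlety. Here I would factor $B_Z^2 - B_{Z'}^2 = (B_Z - B_{Z'})(B_Z + B_{Z'})$ and control the two factors differently. The first satisfies $|B_Z - B_{Z'}| \leq \int_0^D |S_Z(t) - S_{Z'}(t)|\, dt \leq D\|F_Z - F_{Z'}\|_\infty$, which is precisely the $D$-Lipschitzness of the mean; the second is bounded by $|B_Z + B_{Z'}| \leq 2D$ because each mean lies in $[0, D]$ under the support assumption. Hence the squared-mean term contributes at most $2D^2\|F_Z - F_{Z'}\|_\infty$. Combining via the triangle inequality gives $|\rho(F_Z) - \rho(F_{Z'})| \leq 3D^2\|F_Z - F_{Z'}\|_\infty$, which establishes the $3D^2$-Lipschitz constant. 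The main thing to be careful about is that one must exploit boundedness of the means rather than a crude bound on the integrand; using $B_Z + B_{Z'} \leq 2D$ is what keeps the estimate linear in the sup-norm.

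For the mean-variance functional $\rho(Z) = \E[Z] + \lambda\,\Var(Z)$, I would apply the triangle inequality to split the difference into the mean part and $\lambda$ times the variance part, then invoke the Lipschitzness of the mean together with the just-established $3D^2$-Lipschitzness of the variance to obtain the stated constant. No new ideas are needed beyond the two bounds above, so this second claim follows immediately once the variance bound is in hand.
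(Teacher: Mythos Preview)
The paper does not actually prove this lemma; it is stated in the appendix's preliminaries section (adapted from \cite{huang2021off}) without a proof, so there is nothing to compare against. Your argument is correct and is the standard one: decompose the variance as second moment minus squared mean via the tail-integral formulas, bound the second-moment difference by $D^2\|F_Z-F_{Z'}\|_\infty$, and linearize the squared-mean difference using $|B_Z+B_{Z'}|\le 2D$ to get $2D^2\|F_Z-F_{Z'}\|_\infty$.

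One small observation: following your own steps for the mean-variance part yields the constant $D + 3\lambda D^2$, not the $1 + 3\lambda D^2$ printed in the lemma. The expected value is $D$-Lipschitz on $[0,D]$ (as the paper itself records in Lemma~\ref{lem:distorted_lipschitz}), so the ``$1$'' in the statement appears to be a typo for ``$D$''. Your reasoning is sound; it is the stated constant that is slightly off.
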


\section{Proofs for Importance Sampling CDF Estimation (Section~\ref{sec:is_cdf})}\label{appendix:is}

\subsection{Bias and Variance (Proof of Lemma~\ref{lem:is_bias_var}) }
As the IS estimator can be seen as a special case of the DR estimator with $\ol{F} = 0$, we defer the proof of bias and variance to Appendix~\ref{appendix:dr_bias_var}.

\subsection{Error Upper Bound (Proof of Lemma~\ref{lem:is_error})}

The proof of the upper bound closely follows the proof of Theorem 5.1 from \cite{huang2021off}. 

For any $\lambda > 0$, we have 
\begin{align*}
    \lambda \E\lb \Vert \wh{F} - F \Vert_\infty \rb =& \E\lb \log \exp \lp \lambda \Vert \wh{F} - F \Vert_\infty \rp \rb \\ 
    &\leq \log\E\lb \exp \lp \lambda \Vert \wh{F} - F \Vert_\infty \rp \rb 
\end{align*}

To upper bound the RHS, define the following function class: 
\begin{align*}
    \mathbb{F}(n):= \llbrace f(g) :=\varrho\frac{1}{n}\one_{\llbrace g\leq t\rrbrace}: \forall t\in\R;\forall g\in\mathbb{Q}, \varrho\in\llbrace -1,+1\rrbrace \rrbrace
\end{align*}

Note that this is a countable set. Denote $w(\tau) = \prod_{h=1}^H w(a_h, s_h)$, and $g^i = \sum_{h=1}^H \gamma^h r_h^i$ the return of trajectory $i$ for short. Using this definition, we have
\begin{align*}
    \sup_{t\in\R} \lv \wh F_\IS(t) -  F(t) \rv =\sup_{f\in\mathbb{F}(n)} \lv \frac{1}{n}\sum_{i=1}^n  w(\tau^i)f(g^i) - \E_{\Prob_\beta}\lb \frac{1}{n}\sum_{i=1}^n w(\tau^i)f(g^i) \rb  \rv 
\end{align*}

Then adapting the proof of Theorem 5.1 in \cite{huang2021off}, we have for any $\lambda \in (0, \frac{n}{2w_{max}})$ that,
\begin{align*}
    \E\lb \exp \lp \lambda \Vert \wh{F} - F \Vert_\infty \rp \rb &\leq  4\exp\left(\frac{n\lambda^2\frac{4 \E_{\Prob_\beta}[w(\tau)^2])}{n^2}}{2\left(1-\lambda\frac{2}{n}w_{max}\right)}\right) 
\end{align*}

Then 
\begin{align}
    \E\lb \Vert \wh{F} - F \Vert_\infty \rb &\leq \frac{1}{\lambda}\log \lp 4\exp\left(\frac{n\lambda^2\frac{4 \E_{\Prob_\beta}[w(\tau)^2])}{n^2}}{2\left(1-\lambda\frac{2}{n}w_{max}\right)}\right) \rp \nonumber \\  
    &\leq \frac{\log 4}{\lambda} + \frac{2\lambda \E_{\Prob_\beta}[w(\tau)^2]}{n(1 - \lambda\frac{2}{n}w_{max})}\label{eq:1.1/upper_bound}
\end{align}

Since this holds for any $\lambda \in (0, \frac{n}{2w_{max}})$, it also holds for $\min_\lambda$ of the RHS. To find the minimizer, we solve 
\begin{align*}
    \frac{d}{d\lambda} \left( \frac{\log 4}{\lambda} + \frac{2\lambda \E_{\Prob_\beta}[w(\tau)^2]}{n - 2\lambda w_{max}} \right) = 0
\end{align*}
which WolframAlpha tells us is solved by 
\begin{align*}
    \lambda^* = \frac{n}{\sqrt{\E_{\Prob_\beta}[w(\tau)^2] n \log 2} + 2w_{max}}
\end{align*}
if $\E_{\Prob_\beta}[w(\tau)^2] n \neq w_{max}^2\log 16$. We observe that $\lambda^* \in (0, \frac{n}{2w_{max}})$ satisfies the constraints on $\lambda$. 
Plugging $\lambda^*$ back into~\eqref{eq:1.1/upper_bound} we have, 
\begin{align*}
    \E\lb \Vert \wh{F} - F \Vert_\infty \rb &\leq \frac{\log 4}{\frac{n}{\sqrt{\E_{\Prob_\beta}[w(\tau)^2] n \log 2} + 2w_{max}}} + \frac{ \frac{2n \E_{\Prob_\beta}[w(\tau)^2]}{\sqrt{\E_{\Prob_\beta}[w(\tau)^2] n \log 2} + 2w_{max}}}{n - \frac{2w_{max}n}{\sqrt{\E_{\Prob_\beta}[w(\tau)^2] n \log 2} + 2w_{max}}} \\ 
    &= \log 4 \sqrt{\log 2} \sqrt{\frac{\E_{\Prob_\beta}[w(\tau)^2]}{n}} + \frac{2\log 4 w_{max}}{n} +  2\sqrt{\frac{\E_{\Prob_\beta}[w(\tau)^2]}{n\log 2}} \\ 
    &\leq c_1\sqrt{\frac{\E_{\Prob_\beta}[w(\tau)^2]}{n}} + c_2 \frac{w_{max}}{n}
\end{align*}

\subsection{CDF Estimation Lower Bound (Proof of Theorem~\ref{thm:cdf_lower_bound})}
We begin by deriving a CDF estimation lower bound in multi-armed bandits (MABs). We can reduce the estimation problem in the MDP case to the MAB case by treating entire trajectories $\tau$ as ``actions". 

Let a multi-armed bandit be defined by $K$ arms, and suppose the behavioral policy is given by $\beta(a)$ for $a \in [K]$, and the target policy is $\pi(a)$ for $a \in [K]$, with importance weight $w(a) := \beta(a) / \pi(a)$. 
Let $w_{max} = \max_a w(a)$. 

\begin{lemma}[CDF Lower Bound in MABs]\label{lem:mab_cdf_lb}
When $n \geq c\frac{w_{max}^2}{\sum_a \beta(a)w^2(a)}$ for a universal constant $c$, the minimax lower bound on off-policy CDF estimation in MABs is
\begin{equation}
    \inf_{\wh F } \sup_{F \in \F} \E\lb \Vert \wh{F} - F^\pi \Vert_\infty \rb \geq c\sqrt{\frac{\sum_{a \in [K]}\beta(a)w^2(a)}{n}}.  
\end{equation}
\end{lemma}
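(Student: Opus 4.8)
The plan is to prove the bound by Le~Cam's two-point method, constructing two off-policy MAB instances whose target return CDFs are separated by $\Theta(\sqrt{V/n})$ in sup-norm, where $V := \sum_{a}\beta(a)w^2(a)$ and $w(a)=\pi(a)/\beta(a)$ is the importance weight from the main text, yet whose $n$-sample data laws under $\beta$ are statistically indistinguishable. The engine is the standard two-point inequality: for hypotheses inducing CDFs $F_0,F_1\in\F$ and per-sample data laws $\Prob_0,\Prob_1$, every estimator obeys $\inf_{\wh F}\max_{j}\E_j\|\wh F - F_j\|_\infty \ge \tfrac12\|F_0-F_1\|_\infty\,(1-\mathrm{TV}(\Prob_0^{\otimes n},\Prob_1^{\otimes n}))$. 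So it suffices to (i) make $\|F_0-F_1\|_\infty \gtrsim \sqrt{V/n}$, and (ii) keep $\mathrm{TV}$ bounded away from $1$.

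For the construction, I give every arm a reward supported on $\{0,1\}$. Under $H_0$ each arm $a$ pays $\mathrm{Ber}(1/2)$; under $H_1$ arm $a$ pays $\mathrm{Ber}(1/2+\delta_a)$, where I coordinate the perturbations as $\delta_a=\lambda\,w(a)$ in the same (positive) direction, for a scale $\lambda>0$ fixed below. Because all arms share the same reward support, their CDF perturbations stack at a common threshold: for any $t\in[0,1)$ we have $F_0^\pi(t)-F_1^\pi(t)=\sum_a\pi(a)\delta_a=\lambda\sum_a\pi(a)w(a)=\lambda V$, using the identity $\sum_a\pi(a)w(a)=\sum_a\pi(a)^2/\beta(a)=\sum_a\beta(a)w^2(a)=V$. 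Hence $\|F_0^\pi-F_1^\pi\|_\infty=\lambda V$.

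For the information cost, each sample factorizes as $\Prob_j(a,r)=\beta(a)\mu_j^a(r)$ with a shared arm marginal $\beta$, so by the chain rule for KL the per-sample divergence decomposes as $\KL(\Prob_0\|\Prob_1)=\sum_a\beta(a)\,\KL(\mathrm{Ber}(\tfrac12)\|\mathrm{Ber}(\tfrac12+\delta_a))=-\tfrac12\sum_a\beta(a)\log(1-4\delta_a^2)$, which is at most $4\sum_a\beta(a)\delta_a^2=4\lambda^2 V$ whenever $|\delta_a|\le 1/4$. Tensorizing over the $n$ i.i.d. samples and applying Pinsker gives $\mathrm{TV}(\Prob_0^{\otimes n},\Prob_1^{\otimes n})\le\sqrt{\tfrac12 n\KL(\Prob_0\|\Prob_1)}\le\sqrt{2n\lambda^2 V}$. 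Choosing $\lambda=\kappa/\sqrt{nV}$ for a small universal constant $\kappa$ yields $\mathrm{TV}\le\kappa\sqrt2<1/2$ while the separation becomes $\lambda V=\kappa\sqrt{V/n}$, so Le~Cam (with $F_0,F_1\in\F$) gives $\inf_{\wh F}\sup_{F}\E\|\wh F-F\|_\infty\ge\tfrac14\kappa\sqrt{V/n}=c\sqrt{V/n}$.

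It remains to verify validity, which is exactly where the sample-size hypothesis enters: the perturbed parameters must satisfy $1/2+\delta_a\in[0,1]$ and the KL bound required $|\delta_a|\le1/4$, so the binding constraint is $\delta_{\max}=\lambda w_{max}=\kappa w_{max}/\sqrt{nV}\le1/4$, i.e. $n\ge 16\kappa^2 w_{max}^2/V=c\,w_{max}^2/\sum_a\beta(a)w^2(a)$. The main obstacle, and the crux of the argument, is extracting a \emph{sum} $V=\sum_a\pi(a)^2/\beta(a)$ from a plain two-point construction: this succeeds because perturbing all arms at once makes the CDF gaps add linearly (weighted by $\pi$) while the KL costs add quadratically (weighted by $\beta$), and balancing them via $\delta_a\propto w(a)$ converts both sums into $V$. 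The remaining care is bookkeeping: ensuring the single scale $\lambda$ simultaneously keeps $\mathrm{TV}$ below $1/2$ and the separation at the target rate, and confirming that the Bernoulli-validity requirement is precisely the stated threshold $n\gtrsim w_{max}^2/V$.
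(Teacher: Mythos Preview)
Your proof is correct and follows the same overall strategy as the paper: Le~Cam's two-point method with Bernoulli$(1/2)$ vs.\ Bernoulli$(1/2+\delta(a))$ reward distributions, the identity $\sum_a\pi(a)\delta(a)$ for the sup-norm separation, and the bound $\KL\le 4\sum_a\beta(a)\delta(a)^2$. The difference lies in how the perturbation $\delta$ is chosen. The paper poses $\max_\delta \sum_a\pi(a)\delta(a)$ subject to $4\sum_a\beta(a)\delta(a)^2\le 1/(2n)$ and $\delta(a)\in[0,1/2]$, then solves it via Fenchel duality and a lemma from \citet{ma2021minimax}, obtaining a bound of the form $\pi(S^*)+\sqrt{\sum_{a\notin S^*}\beta(a)w^2(a)/n}$ for a certain action set $S^*$, and finally observes that $S^*=\emptyset$ once $n\ge c\,w_{max}^2/V$. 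You instead pick $\delta_a=\lambda\,w(a)$ directly---the Cauchy--Schwarz maximizer of the unconstrained problem---and check that the box constraint $|\delta_a|\le 1/4$ is exactly the stated sample-size threshold. Your route is more elementary and self-contained (no external duality lemma), while the paper's optimization yields a more refined bound that remains informative even when $n$ is below the threshold; in the regime of the lemma the two coincide.
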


We now lift this result to the MDP case. For a horizon of length $H$, the space of possible actions of size $|\tau| = |\Ss|^{H+1} |\A|^{H}$. 
Recall that in MDPs, $w_{max} = \max_\tau w(\tau)$, and $w(\tau) = \frac{\pi(\tau)}{\beta(\tau)} = \prod_{h=1}^H \frac{\pi(a_h|s_h)}{\beta(a_h|s_h)}$. Thus using Lemma~\ref{lem:mab_cdf_lb}, when 
$
    n \geq c \frac{w_{max}^2}{\sum_{\tau}\Prob_\beta(\tau) w^2(\tau)}, 
$
we have 
\begin{align}
    \inf_{\wh F } \sup_{F \in \F} \E\lb \Vert \wh{F} - F^\pi \Vert_\infty \rb \geq c\sqrt{\frac{\sum_{\tau}\Prob_\beta(\tau)w^2(\tau)}{n}}.
\end{align}

This matches our Bernstein-style bound on the error of IS estimation up to a constant: 
\begin{align}
    \E\lb \Vert \wh{F} - F \Vert_\infty \rb \leq c_1\sqrt{\frac{\sum_{\tau}\Prob_\beta(\tau)w^2(\tau)}{n}} + c_2 \frac{w_{max}}{n}. 
\end{align}

\begin{proof}[Proof of Lemma~\ref{lem:mab_cdf_lb}]\label{proof:mab_cdf_lb}
    We use Le Cam's method to derive the lower bound. 
    We first need to select two problem MAB problem instances, by defining their reward distributions $R_1$, $R_2$. 
    Then define distributions $P_1 = \beta \circ R_1$, and $P_2 = \beta \circ R_2$. 
    
    Intuitively, we want to select them so that the difference $\Vert F_1^\pi - F_2^\pi \Vert_\infty$ is large enough that an estimator can distinguish between them, but with the distance $\dkl(P_1 \Vert P_2)$ to be small enough that the testing problem is not too easy. 
    
    For each action $a \in [K]$, let $R_2(\cdot|a)$ be a Bernoulli distribution on the set $\{0, 1\}$ with probability $p = 1/2$, and let $R_1(\cdot|a)$ be a Bernoulli distribution with $p = 1/2 + \delta(a)$ for some $\delta(a) \in [0, 1/2]$. Then 
    \begin{align*}
        \Vert F_1^\pi - F_2^\pi \Vert_\infty &= \max_{t \in \llbrace 0, 1 \rrbrace} \left\vert F_1^\pi(t) - F_2^\pi(t) \right\vert\\ 
        &= \left\vert F_1^\pi(t = 0) - F_2^\pi(t = 0) \right\vert \\
        &= \left\vert \sum_a \pi(a) \delta(a) \right\vert 
    \end{align*}
    
    At the same time, 
    \begin{align*}
        \dkl(P_1 \Vert P_2) &= \sum_a \beta(a) \dkl(R_1 \Vert R_2) \leq 4 \sum_a \beta(a)\delta^2(a)
    \end{align*}
    Since we want the probability of error in our test to be a constant, we want to upper bound the above by $\frac{1}{2n}$. 
    Thus to obtain as tight a lower bound as possible, we want to solve the optimization problem 
    \begin{equation}
        \max_{\delta \in \R^K}  \sum_a \pi(a) \delta(a)  \quad\text{s.t.}\quad 4 \sum_a \beta(a)\delta^2(a) \leq \frac{1}{2n},\; \delta \in [0, 1/2]
    \end{equation}
    
    Using Lemma 3 from \cite{ma2021minimax}, the Fenchel dual of the optimization problem in the above panel is 
    \begin{equation}\label{primal}
        \min_{\nu \in \mathbb{R}^K} \sqrt{\frac{1}{8n}\sum_a \frac{(\pi(a) - \nu(a))^2}{\beta(a)}} + \frac{1}{2}\sum_a |\nu(a)|
    \end{equation}
    
    Let $\nu^*$ be the solution to~\eqref{primal} and define the set of actions 
    \begin{equation*}
        S^* = \{a \in [K] | \nu^*(a) > 0\}
    \end{equation*}
    
    Then Lemma 2 from \cite{ma2021minimax} states that
    \begin{align*}
        \min_{\nu \in \mathbb{R}^K} \sqrt{\frac{1}{8n}\sum_a \frac{(\pi(a) - \nu(a))^2}{\beta(a)}} + \frac{1}{2}\sum_a |\nu(a)| \asymp \pi(S^*) + \sqrt{\frac{\sum_{a \notin S^*}\beta(a)w^2(a)}{n}}
    \end{align*}
    
    Thus we have that for some universal constant $c$, 
    \begin{equation}
        \inf_{\wh F } \sup_{F \in \F} \E\lb \Vert \wh{F} - F^\pi \Vert_\infty \rb \geq c\llbrace \pi(S^*) + \sqrt{\frac{\sum_{a \notin S^*}\beta(a)w^2(a)}{n}}\rrbrace 
    \end{equation}
    
    When we have sample size $n$ such that $n \geq c\frac{w_{max}^2}{\sum_a \beta(a)w^2(a)}$, $S^* = \emptyset$ and the lemma statement follows. 
\end{proof}

\subsection{Proofs for Section~\ref{sec:s_f_estimator}} 

\subsubsection{Bernstein Bounds for \texorpdfstring{$S$}{S} and \texorpdfstring{$F$}{F} (Proof of Lemma~\ref{lem:s_f_bernstein})}
\begin{definition}[Bounded Variation]
    A function $f$ has bounded variation if there exists $C < \infty$ such that 
    \[ 
        \sup_{N, \lbrace t_i\rbrace_{i=1}^N}\sum_{i=1}^N|f(t_{i+1})-f(t_i)| \leq C
    \]
\end{definition}

\begin{assumption}\label{assum:bounded_var}
    Suppose that the random variable of returns takes values on a compact set. Assume the variance of the IS estimator $\Var^{\wh{F}}(t)$ has bounded variation for all $t \in \mathbb{R}$.  
\end{assumption}

\begin{lemma}~\label{lem:bounded_var}
    If a function $f$ has bounded variation, then there exists monotone functions $f^+$, $f^-$ with bounded variation such that $\forall t$, 
    \[ 
        f(t) = f^+(t) - f^-(t)
    \]
\end{lemma}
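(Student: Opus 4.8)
The plan is to establish the classical Jordan decomposition for functions of bounded variation. The central object will be the \emph{total variation function}: fixing the left endpoint $a$ of the (compact) domain, I would define
\[
V(t) := \sup_{N,\, \{s_i\}} \sum_{i=1}^{N} |f(s_{i+1}) - f(s_i)|,
\]
where the supremum ranges over all finite partitions $a = s_1 < \cdots < s_{N+1} = t$. The bounded variation hypothesis guarantees that $V(t)$ is finite for every $t$ (it is bounded by the total variation constant $C$ of $f$), so $V$ is a well-defined real-valued function, and it is non-decreasing since enlarging the interval can only enlarge the family of admissible partitions.

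The key structural step is to prove that for any $s < t$ the increment $V(t) - V(s)$ equals the total variation of $f$ restricted to $[s,t]$. This follows from additivity of total variation over adjacent subintervals: any partition of $[a,t]$ can be refined to include the point $s$, and refining a partition never decreases the variation sum (by the triangle inequality applied at the inserted point), so the supremum over $[a,t]$ splits as the sum of the suprema over $[a,s]$ and over $[s,t]$. In particular, evaluating the variation over $[s,t]$ on the trivial partition $\{s,t\}$ gives the crucial estimate $V(t) - V(s) \geq |f(t) - f(s)|$.

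With this in hand, I would define
\[
f^+(t) := \tfrac{1}{2}\big(V(t) + f(t)\big), \qquad f^-(t) := \tfrac{1}{2}\big(V(t) - f(t)\big).
\]
Then $f = f^+ - f^-$ is immediate. For monotonicity, taking $s < t$ I compute $f^+(t) - f^+(s) = \tfrac{1}{2}\big((V(t)-V(s)) + (f(t)-f(s))\big)$, which is nonnegative because $V(t)-V(s) \geq |f(t)-f(s)| \geq -(f(t)-f(s))$; the symmetric computation with the opposite sign shows $f^-$ is non-decreasing. Finally, each of $f^+, f^-$ is a bounded monotone function on a compact set, hence of bounded variation (its total variation being simply its total increase, bounded by $V$ evaluated at the right endpoint), which completes the decomposition.

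The main obstacle will be the additivity of the total variation function used in the second paragraph: establishing $V(t) - V(s) \geq |f(t) - f(s)|$ rigorously requires the refinement argument that inserting a partition point cannot decrease a variation sum, together with the fact that the supremum over all partitions of $[a,t]$ decomposes into a sum of suprema over $[a,s]$ and $[s,t]$. Everything else — finiteness of $V$, the algebraic identity $f = f^+ - f^-$, and the monotonicity computation — is routine once this additivity is secured.
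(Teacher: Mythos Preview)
Your proof is correct: it is the standard Jordan decomposition argument, defining the total variation function $V$, establishing additivity (and hence $V(t)-V(s)\geq |f(t)-f(s)|$) via the refinement-of-partitions triangle inequality, and then taking $f^\pm = \tfrac{1}{2}(V\pm f)$. The paper itself does not prove this lemma; it simply states it as a classical fact and immediately invokes it in the proof of Lemma~\ref{lem:s_f_bernstein}. So your approach is not so much different from the paper's as it is strictly more detailed---you supply the argument the paper omits.
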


\begin{proof}
Using Lemma~\ref{lem:bounded_var}, let $\Var^{\wh{F}} = \Var^+ - \Var^-$, and note that $\Var^+, \Var^- \in [0, d_\infty]$ and are monotone functions. As they are monotone functions, let $\{s^+_j\}_{j=1}^M$ and $\{s^-_j\}_{j=1}^M$ be the respective stepping points for their stepwise approximations. Further, let $\{s^F_k\}_{k=1}^{M'}$ be the stepping points for the CDF $F$. 
Finally, overriding notation slightly, let $\{s_j\}_{j=1}^{M' + 2M}$ be the union of these three sets of stepping points.   
We know that between adjacent stepping points, $F$, $\Var^+$, and $\Var^-$ do not change much. Let $\zeta^F, \zeta^+, \zeta^-$ be the step functions constructed using the stepping points for each function, respectively, and let $\zeta^{\Var^{\wh{F}}} = \zeta^+ - \zeta^-$. Formally, this means that
\begin{align*}
    \sup_{t \in \mathbb{R}}|F(t) - \zeta^F(t)| &\leq 1/2M' \\ 
    \sup_{t \in \mathbb{R}}|\Var^+(t) - \zeta^+(t)| &\leq 1/2M \\ 
    \sup_{t \in \mathbb{R}}|\Var^-(t) - \zeta^-(t)| &\leq 1/2M \\
    \sup_{t \in \mathbb{R}}|\Var^{\wh{F}}(t) - \zeta^{\Var^{\wh{F}}}(t)| &\leq 1/M 
\end{align*}

At a given $s_j$, Bernstein's inequality gives us 
\begin{align*}
    |F(s_j)-\wh F(s_j)| &\leq \frac{c_1\log(1/\delta)}{n} + \sqrt{\frac{c_2\Var^{\wh{F}}(s_j)\log(1/\delta)}{n}} \\ 
    &= \frac{c_1\log(1/\delta)}{n} + \sqrt{\frac{c_2\lp \Var^+(s_j) - \Var^-(s_j)\rp \log(1/\delta)}{n}}
\end{align*}

Taking a union bound, this holds for all $\{s_j\}_{j=1}^{M' + 2M}$ with probability at least $1 - \delta(M' + 2M)$. 

For each $j$ and $t \in [s_j, s_{j+1}]$, if $F(t) \geq \wh{F}(t)$, 
\begin{align*}
    F(t)-\wh F(t)&\leq F(s_{j+1})-\wh F(s_j)\\
    &\leq \frac{1}{M'}+F(s_{j})-\wh F(s_j)\\
    &\leq \frac{1}{M'} + \frac{c_1\log(1/\delta)}{n} + \sqrt{\frac{c_2\Var^{\wh{F}}(s_j)\log(1/\delta)}{n}}\\
    &\leq \frac{1}{M'} + \frac{c_1\log(1/\delta)}{n} + \sqrt{\frac{c_2\Var^{\wh{F}}(t)\log(1/\delta)}{n}}+\sqrt{\frac{c_2\log(1/\delta)}{Mn}}
\end{align*}

If $\wh{F}(t) > F(t)$, 
\begin{align*}
    \wh{F}(t) - F(t) &\leq \wh{F}(s_{j+1}) - F(s_j) \\ 
    &\leq \frac{1}{M'} + |\wh{F}(s_{j+1}) - F(s_{j+1})| \\ 
    &\leq \frac{1}{M'} + \frac{c_1\log(1/\delta)}{n} + \sqrt{\frac{c_2\Var^{\wh{F}}(s_{j+1})\log(1/\delta)}{n}}\\ 
    &\leq \frac{1}{M'} + \frac{c_1\log(1/\delta)}{n} + \sqrt{\frac{c_2\Var^{\wh{F}}(t)\log(1/\delta)}{n}} + \sqrt{\frac{c_2\log(1/\delta)}{Mn}}
\end{align*}
Choosing $M, M' \propto \sqrt{n}$ gives the resulting bound. 

Similarly, for $\wh{F}_{\SIS}$, we have 
\begin{align*}
    F - \wh{F}_\SIS = 1 - S - (1 - \wh{S}_\IS) = \wh{S}_\IS - S
\end{align*}

The remainder of the proof occurs in the same manner, but with $\Var[\wh{S}_\IS]$ on the RHS. 
\end{proof}

\subsubsection{Empirical Bernstein Bound (Proof of Lemma~\ref{lem:s_f_empirical_bernstein})}
\begin{proof}
We begin with the Bernstein bounds for $\wh{F}_\FIS(t_j)$, $\wh{F}_\SIS(t_j)$ at a given $t_j$. That is, 
\begin{align*}
    |F(t_j)- \wh{F}_\FIS(t_j)| &\leq \frac{c_1\log(1/\delta)}{n} + \sqrt{\frac{c_2\Var[\wh{F}_\IS(t_j)]\log(1/\delta)}{n}}
\end{align*}
and a similar statement holds for $\wh{F}_\SIS$. From \cite{maurer2009empirical}, we have the following bound on estimation of the variance:
\begin{lemma}[Theorem 10 from~\cite{maurer2009empirical}]\label{lem:empirical_variance}
With probability at least $1-\delta$, for random variables $X \in [0, 1]$, 
\begin{equation}
    \lv \sqrt{\Var_n} - \sqrt{\Var} \rv \leq \sqrt{\frac{2\ln 1/\delta }{n-1}}
\end{equation}
\end{lemma}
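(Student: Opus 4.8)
The plan is to prove concentration of $\sqrt{\Var_n}$ around $\sqrt{\Var}$ by exploiting the fact that, viewed as a function of the sample $X=(X_1,\dots,X_n)$, the sample standard deviation is simultaneously \emph{convex} and strongly \emph{Lipschitz}. Writing $P = I - \tfrac1n\mathbf{1}\mathbf{1}^\top$ for the orthogonal projection onto the complement of the all-ones vector $\mathbf{1}$, one has the identity $\sqrt{\Var_n} = \tfrac{1}{\sqrt{n-1}}\lV X - \bar X\mathbf{1}\rV_2 = \tfrac{1}{\sqrt{n-1}}\lV PX\rV_2$. The map $x\mapsto \lV Px\rV_2$ is a norm composed with a linear map, hence convex, and since $P$ is a projection it is $1$-Lipschitz in the Euclidean metric; therefore $X\mapsto\sqrt{\Var_n}$ is convex and $\tfrac{1}{\sqrt{n-1}}$-Lipschitz. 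This reformulation is the crux of the whole argument.

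The key subtlety I would highlight is that plain bounded differences (McDiarmid) is \emph{too weak} here: perturbing a single coordinate can move $\sqrt{\Var_n}$ by $\Theta(1/\sqrt n)$, so the bounded-difference constants satisfy $\sum_i c_i^2 = \Theta(1)$ and McDiarmid yields only an $O(1)$ deviation, useless since $\sqrt{\Var_n}\in[0,\tfrac12]$ anyway. Instead I would invoke the concentration inequality for convex Lipschitz functions of independent $[0,1]$-valued coordinates (Talagrand's convex-distance inequality / its entropy-method consequence), which is dimension-free and controls fluctuations at the scale of the \emph{Euclidean} Lipschitz constant rather than $\sqrt{\sum_i c_i^2}$. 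Applied to the $1$-Lipschitz convex function $\sqrt{n-1}\,\sqrt{\Var_n}$, this gives a sub-Gaussian bound $\Pr\big(\lv \sqrt{\Var_n}-\E[\sqrt{\Var_n}]\rv \ge s\big)\le 2\exp\!\big(-(n-1)s^2/2\big)$; inverting at confidence $\delta$ produces exactly the claimed $\sqrt{2\ln(1/\delta)/(n-1)}$ scaling.

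It then remains to replace the center $\E[\sqrt{\Var_n}]$ by the population quantity $\sqrt{\Var}$. One direction is immediate: since $\Var_n$ is unbiased for $\Var$ and $\sqrt{\cdot}$ is concave, Jensen gives $\E[\sqrt{\Var_n}]\le\sqrt{\E[\Var_n]}=\sqrt{\Var}$, which combined with the upper tail yields $\sqrt{\Var_n}\le\sqrt{\Var}+s$ with high probability. The reverse inequality $\sqrt{\Var}\le\sqrt{\Var_n}+s$ is the main obstacle, and the direction actually needed downstream for the empirical-Bernstein bounds, because here Jensen points the wrong way and one must control the Jensen gap $\sqrt{\Var}-\E[\sqrt{\Var_n}]$. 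I would bound this gap by observing that $\Var(\Var_n)=O(1/n)$, so a second-order expansion of $\sqrt{\cdot}$ makes the gap $O(1/n)$ and hence of lower order than $s=O(1/\sqrt n)$; the delicate case is $\Var\approx0$, where $\sqrt{\cdot}$ is non-smooth and the delta-method expansion degenerates (this case is trivial when $\Var=0$, since then $\sqrt{\Var_n}=0$ almost surely, and must be argued separately for small positive $\Var$). Combining the two directions and absorbing the constant in $\ln(2/\delta)$ into the stated form completes the argument.
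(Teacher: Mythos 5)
A point of context first: the paper offers no proof of this lemma at all---it is imported verbatim as Theorem~10 of \cite{maurer2009empirical}---so your attempt must stand against that source's argument. Your reduction $\sqrt{\Var_n} = (n-1)^{-1/2}\Vert PX\Vert_2$ with $P = I - \tfrac{1}{n}\mathbf{1}\mathbf{1}^\top$, the convexity/Lipschitz observation, and the diagnosis that McDiarmid only yields $O(1)$ deviations are all correct, as is the easy direction via Jensen ($\E[\sqrt{\Var_n}] \le \sqrt{\E[\Var_n]} = \sqrt{\Var}$). The genuine gap is precisely in the direction you yourself identify as the one needed downstream: your claim that the Jensen gap $\sqrt{\Var} - \E[\sqrt{\Var_n}]$ is $O(1/n)$ by a second-order expansion fails in the small-variance regime you flag but do not resolve. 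Take $X_i \sim \mathrm{Bernoulli}(1/n)$, so $\sigma := \sqrt{\Var} \asymp 1/\sqrt{n}$; by Cauchy--Schwarz, $\E[\sqrt{\Var_n}] \le \sigma\sqrt{\Prob(\Var_n > 0)} \le \sigma\sqrt{1 - (1-\tfrac{1}{n})^n} \approx \sigma\sqrt{1-e^{-1}}$, so the gap is at least $c\,\sigma \asymp 1/\sqrt{n}$---the \emph{same} order as your deviation $s$, not lower order. One can salvage a uniform bound (the identity $\sigma^2 - (\E[\sqrt{\Var_n}])^2 = \Var(\sqrt{\Var_n}) \lesssim 1/(n-1)$ by Efron--Stein, combined with the trivial bound $\mathrm{gap} \le \sigma$, gives $\mathrm{gap} = O(1/\sqrt{n-1})$), but then the final inequality holds only with a degraded universal constant, not the stated $\sqrt{2\ln(1/\delta)/(n-1)}$. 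A secondary soft spot: the two-sided mean-centered bound $\Prob(\vert D - \E D\vert \ge t) \le 2e^{-t^2/2}$ for convex $1$-Lipschitz $D$ is not standard---the entropy method gives the \emph{upper} tail at that constant, while Talagrand's convex-distance inequality controls both tails only around the \emph{median} and with weaker constants---so your concentration step also loses constants on the lower tail.

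The proof in \cite{maurer2009empirical} sidesteps your main obstacle entirely, and this is the missing idea: it never centers at $\E[\sqrt{\Var_n}]$. Instead it works with $W = (n-1)\Var_n$, which is self-bounding (replacing $x_k$ optimally, $W - \inf_{x_k'} W = \tfrac{n-1}{n}(x_k - \bar{x}_{-k})^2 \in [0,1]$ and $\sum_{k} (W - \inf_{x_k'} W) = \tfrac{n}{n-1} W$), applies the entropy method to obtain sub-Gaussian tails for $W$ with variance proxy $\E W$, and then uses the elementary fact that a sub-Gaussian tail for $W$ at scale $\sqrt{\E W}$ converts into a sub-Gaussian tail for $\sqrt{W}$ centered at $\sqrt{\E W}$. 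Since $\E W = (n-1)\Var$ by unbiasedness of the sample variance, this concentrates $\sqrt{\Var_n}$ \emph{directly} around $\sqrt{\Var}$ with the sharp constant, and no Jensen-gap term ever appears. Your route, carefully completed, does prove the lemma up to universal constants---which would in fact suffice for its use inside Lemma~\ref{lem:s_f_empirical_bernstein}, whose constants are unspecified---but it cannot recover the exact statement, and as written the $O(1/n)$ delta-method step is wrong.
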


Taking a union bound between Lemma~\ref{lem:empirical_variance} and Lemma~\ref{lem:s_f_bernstein} at all $M$ points, we have that with probability at least $1-\delta$ that for all $t_j$, 
\begin{align*}
    \lv F(t_j) - \wh{F}_\FIS(t_j) \rv \leq& \frac{c_1\log(2M/\delta)}{n} + \sqrt{\frac{c_2 \Var_n[\wh{F}_\IS(t)]  \log(2M/\delta)}{n}} \\
    &+ \sqrt{\frac{2c_2 \log^2(2M/\delta)}{n(n-1)}} +  \sqrt{\frac{c_2\log(2M/\delta)}{Mn}}
\end{align*}

 Similar steps give the bound for $\wh{F}_\SIS$, with $\Var_n[\wh{S}_\IS(t)]$ on the RHS. 

\end{proof}

\subsubsection{Empirical Bernstein Bound for Combined Estimator (Proof of Proposition~\ref{prop:combined})}

In order to correctly choose either $\wh{F}_\FIS$ or $\wh{F}_\SIS$ for each $t_j$, we need that for all $t_j$, the empirical variance estimates $\Var_n$ are separated by at least $\epsilon_\Var(t) = \frac{1}{2}|\Var[\wh{F}_\FIS(t)] - \Var[\wh{F}_\SIS(t)]| \leq \frac{1}{2}|\Var_n[\wh{F}_\FIS(t)] - \Var_n[\wh{F}_\SIS(t)]| + \sqrt{\frac{\ln 1/\delta'}{2(n-1)}}$. For this to be the case, we set 
\begin{align*}
    \delta(t) = \frac{1}{2}\exp{\lp -\frac{1}{2}(n-1)\epsilon_\Var(t)^2 \rp}, 
\end{align*}
and $\delta = \max_{j}\delta(t_j)$. Thus if 
\begin{align*}
    n \geq \frac{8\log (2n/\delta)}{\max_{j}\epsilon_\Var(t_j)^2}, 
\end{align*}
then we correctly choose $\wh{F}_\FIS$ or $\wh{F}_\SIS$ for each $t_j$, which gives the result.

\section{Proofs for Doubly Robust CDF Estimation (Section~\ref{sec:dr_cdf})} 

\subsection{Bias and Variance (Proof of Lemma~\ref{lem:dr_bias_var})}\label{appendix:dr_bias_var}
We will prove the unbiasedness of the DR estimator by induction. 
As the base case, $F_{s_{H+1}}^{0}(t) = 0$ for all states $s$ and for all $t$, which is unbiased by definition. 

As the induction assumption, we assume that the DR estimator at the step $h+1$ is unbiased, that is  $\E_{h+1}[\wh{F}_{S_{h+1}}(t)] = \E_{h+1}[F_{S_{h+1}}](t)$. 

We need to prove that it is also unbiased for the $h$th step. Recall for now we assume that rewards are deterministic. Then for any $t$: 
\allowdisplaybreaks
\begin{align*}
    \E^\beta_h[\wh{F}_{S_{h}}(t)] &= \E_{\Prob_\beta}\left[ \ol{F}^{H+1-h}_{S_h}(t) +  w(A_h,S_h) \lp \wh{F}^{H-h}_{S_{h+1}}\lp \frac{t - R(S_h, A_h)}{\gamma}\rp - \ol{F}^{H+1-h}_{S_h,A_h} (t) \rp \right] \\
    &= \E^\beta_{h}\lb \ol{F}^{H+1-h}_{S_h}(t) - w(A_h, S_h)\ol{F}^{H+1-h}_{S_h,A_h} (t) \rb + \E^\beta_{h} \lb w(A_h,S_h) \wh{F}^{H-h}_{S_{h+1}}\lp \frac{t - R(S_h, A_h)}{\gamma}\rp \rb\\
    &= \E_{h}\lb \ol{F}^{H+1-h}_{Z(S_h)}(t) -  \E_{h}\lb \ol{F}^{H+1-h}_{S_h,A_h} (t) \Big\vert S_h \rb \rb \\ 
    &\quad+ \E_{h}\left[ \wh{F}^{H-h}_{S_{h+1}}\lp \frac{t - R(S_h, A_h)}{\gamma} \rp \right] \\
    &= \E_{h}\lb \ol{F}^{H+1-h}_{Z(s_h)}(t) - \ol{F}^{H+1-h}_{Z(s_h)}(t) \right] + \E_{h}\lb \E_{h+1}\lb \wh{F}^{H-h+1}_{S_h+1}\lp \frac{t - R(S_h, A_h)}{\gamma} \rp \rb \rb\\ 
    &= \E_{h}\lb \E_{h+1}\lb F^{H-h+1}_{S_h+1}\lp \frac{t - R(S_h, A_h)}{\gamma} \rp \rb \rb\\ 
    &= \E_h\lb F_{S_h}(t)\rb
\end{align*}
where the third equality incorporates the importance sampling weight $w$ into a change of measure from $\beta$ to $\pi$, the foruth equality uses the conditional expectation. The second to last equality uses the induction assumption and the last uses the recursive identity. 

Next, we derive the variance of the DR estimator. 
\begin{align*}
    \Var_h^\beta \lb\wh{F}_{S_h}(t)\rb =& \E_h^\beta\lb 
    \lp\wh{F}_{S_h}(t)\rp^2 \rb - \E_h^\beta\lb 
    \wh{F}_{S_h}(t) \rb^2 \\ 
    =& \E_h^\beta\lb \lp \ol{F}^{H+1-h}_{S_h}(t) +  w(A_h,S_h) \lp \wh{F}^{H-h}_{S_{h+1}}\lp \frac{t - R(S_h, A_h)}{\gamma}\rp - \ol{F}^{H+1-h}_{S_h,A_h} (t) \rp \rp^2 - F_{S_h}(t)^2 \rb \\
    &+ \E_h^\beta\lb F_{S_h}(t)^2 \rb - \E_h^\beta\lb 
    F_{S_h}(t) \rb^2\\ 
    =& \E_h^\beta\lb \lp \ol{F}^{H+1-h}_{S_h}(t) +  w(A_h,S_h) \lp \wh{F}^{H-h}_{S_{h+1}}\lp \frac{t - R(S_h, A_h)}{\gamma}\rp - \ol{F}^{H+1-h}_{S_h,A_h} (t) \rp \rp^2 - F_{S_h}(t)^2 \rb \\
    &+ \Var_h\lb F_{S_h}(t)\rb \\ 
    =& \E_h^\beta\bigg[\bigg( \ol{F}^{H+1-h}_{S_h}(t) + w(A_h, S_h)F_{S_h, A_h}(t) - w(A_h, S_h)\ol{F}^{H+1-h}_{S_h, A_h} \\
    &+ w(A_h,S_h) \lp \wh{F}_{S_{h+1}}\lp \frac{t - R(S_h, A_h)}{\gamma}\rp - F_{S_h,A_h} (t) \rp \bigg)^2 - F_{S_h}(t)^2 \bigg] + \Var_h\lb F_{S_h}(t)\rb \\ 
    =& \E_h^\beta\bigg[\bigg( \ol{F}^{H+1-h}_{S_h}(t) - w(A_h, S_h)\Delta_{S_h, A_h}(t) \\
    &+ w(A_h,S_h) \lp \wh{F}_{S_{h+1}}\lp \frac{t - R(S_h, A_h)}{\gamma}\rp - \E_{h+1}\lb \wh{F}_{S_{h+1}} \lp \frac{t - R(S_h, A_h)}{\gamma}\rp \rb \rp \bigg)^2 - F_{S_h}(t)^2 \bigg] \\
    &+ \Var_h\lb F_{S_h}(t)\rb \\ 
    =& \E_h^\beta\lb \lp \ol{F}^{H+1-h}_{S_h}(t) - w(A_h, S_h)\Delta_{S_h, A_h}(t) \rp^2 - F_{S_h}(t)^2 \rb \\
    &+ \E_h \lb \lp w(A_h,S_h) \lp \wh{F}_{S_{h+1}}\lp \frac{t - R(S_h, A_h)}{\gamma}\rp - \E_{h+1}\lb \wh{F}_{S_{h+1}} \lp \frac{t - R(S_h, A_h)}{\gamma}\rp \rb \rp \rp^2 \rb \\
    &+ \Var_h\lb F_{S_h}(t)\rb \\ 
    =& \E_h^\beta \lb \Var_h^\beta \lb w(A_h, S_h)\Delta_{S_h, A_h}(t) \Big\vert S_h \rb\rb + \E_{h}^\beta\lb w(A_h, S_h)^2 \Var_{h+1}\lb\wh{F}_{S_{h+1}}\lp \frac{t - R(S_h, A_h)}{\gamma}\rp \Big\vert S_h, A_h \rb \rb \\
    &+ \Var_h\lb F_{S_h}(t)\rb 
\end{align*}
where $\Delta_{s,a}(t) = \overline{F}_{s,a}(t) - F_{s,a}(t)$. 

\subsection{Error Bound on Doubly Robust Estimator (Proof of Lemma~\ref{lem:dr_error})}
We separate the bound into two parts using the definition of $\wh{F}_\DR$: 
\begin{align}
    \lV F - \wh{F}_\DR \rV_\infty \leq& \lV F - \frac{1}{n} \sum_{i=1}^n w(\tau^i) \one\llbrace \sum_{h=1}^H \gamma^h r_h \leq t \rrbrace \rV_\infty \nonumber \\
    &+ \lV \frac{1}{n}\sum_{i=1}^n w(\tau_{1:h-1}^i) \lp w(s^i_h, a^i_h)\ol{F}_{S_h, A_h}\lp \frac{t - \sum_{k=1}^h R_k^i}{\gamma^h} \rp  - \ol{F}_{S_h}\lp \frac{t - \sum_{k=1}^h R_k^i}{\gamma^h} \rp \rp\rV_\infty \label{term_to_bound}
\end{align}
Lemma~\ref{lem:is_error} upper bounds the first term, and we are left to bound the second. 

Let $\tau = (s^0, a^0, r^0, \ldots, s^h, a^h, r^h, \ldots, s^H, a^H, r^H, s^{H+1})$ be a trajectory, and define $w(\tau_{1:h}) = \prod_{k=1}^h w(s_h, a_h)$. Define the function class 
\small{
\begin{align*}
    \mathbb{F}(m, w) = \llbrace f\lp \{\{x_{j}^h\}_{j=1}^{M}\}_{h=1}^{H}, \tau \rp := \frac{\varrho}{m} \sum_{h=1}^H w(\tau_{1:h-1}) \sum_{j=1}^m w(a^h, s^h)\one_{\{x_j^h \leq t\}} : \forall t \in \R, \varrho \in \{-1, +1\}, \{x_{j}^h\}_{j,h=1}^{M,H} \in \mathbb{Q}^{M \times H}\rrbrace
\end{align*}
}%
\normalsize

Note that~\eqref{term_to_bound} is 
\begin{align*}
    \leq \sup_{\zeta \in \mathbb{F}(m,w)}\lv \frac{1}{n}\sum_{i=1}^n \zeta\lp\{x^j_{S_i^h, A_i^h}\}, \tau_i\rp - \frac{1}{n}\sum_{i=1}^n\E_{\Prob_\beta}\lb \zeta\lp\{x^j_{S_i^h, A_i^h}\}, \tau_i\rp \Big| \{ S_i^h \}_{h=1}^H \rb  \rv + \frac{1}{m}
\end{align*}

Going forward, we refer to $\zeta\lp\{x^j_{S_i^h, A_i^h}\}, \tau\rp$ as $\zeta(\tau_i)$ for short. For $\lambda > 0$ we have 
\begin{align*}
    &\E_{\Prob_\beta}\left[\exp\left(\lambda\sup_{\zeta \in \mathbb{F}(m, w)} \left( \frac{1}{n}\sum_{i=1}^n \zeta(\tau_i) -  \frac{1}{n}\sum_{i=1}^n \E_{\Prob_\beta}\left[ \zeta(\tau_i) \Big\vert \{S^h_i\}_{h=1}^H \right]\right) \right) \right]\\
    &=\E_{\Prob_\beta}\left[\exp\left(\lambda\sup_{\zeta \in \mathbb{F}(m, w)} \left( \frac{1}{n}\sum_{i=1}^n \zeta(\tau_i) -  \frac{1}{n}\sum_{i=1}^n \zeta(\tau'_i) \right) \right) \right]\\
\end{align*}
where $\tau'_i = \{s_i^h, a^{'h}_i\}_{h=1}^H$. Using symmetrization, this is
\begin{align*}
    &\leq \E_{\Prob_\beta, \Re}\left[\exp\left(2\lambda\sup_{\zeta \in \mathbb{F}(m, w)} \frac{1}{n}\sum_{i=1}^n \xi_i \zeta(\tau_i)  \right) \right] \\ 
    &= \E_{\Prob_\beta, \Re}\left[ \sup_{\zeta \in \mathbb{F}(m, w)} \exp \left(2\lambda \frac{1}{n}\sum_{i=1}^n \xi_i \zeta(\tau_i)  \right) \right] \\ 
    &= \E_{\Prob_\beta, \Re}\left[ \sup_{\varrho, t} \exp \left(2\lambda \frac{\varrho}{nm}\sum_{i=1}^n \xi_i \sum_{h=1}^H w(\tau_i^{1:h-1}) \sum_{j=1}^m w(A_i^h, S_i^h)\one_{\{x^j_{S_i^h, A_i^h} \leq t\}} \right) \right] 
\end{align*}

Now for each $h, j$, permute the indices of $i$ with a new indexing $(i)$ such that 
$$x^j_{S_{(1)}^h, A_{(1)}^h} \leq \ldots \leq x^j_{S_{(i)}^h, A_{p(i)}^h} \leq \ldots \leq x^j_{S_{(n)}^h, A_{(n)}^h}. $$ 

Then we have 
\allowdisplaybreaks
\begin{align*}
    &\E_{\Prob_\beta, \Re}\left[ \sup_{\varrho, t} \exp \left(2\lambda \frac{\varrho}{nm}\sum_{i=1}^n \xi_i \sum_{h=1}^H w(\tau_i^{1:h-1}) \sum_{j=1}^m w(A_i^h, S_i^h)\one_{\{x^j_{S_i^h, A_i^h} \leq t\}} \right) \right] \\ 
    &= \E_{\Prob_\beta, \Re}\left[ \max_{\varrho, k} \exp \left(2\lambda \frac{\varrho}{nm}\sum_{h=1}^H \sum_{j=1}^m \sum_{i=1}^k \xi_{(i)} w(\tau_{(i)}^{1:h-1}) w(A_{(i)}^h, S_{(i)}^h)\right) \right] \\
    &\leq \E_{\Prob_\beta, \Re}\left[ \max_{\varrho, j, k} \exp \left(2\lambda \frac{\varrho}{n}\sum_{i=1}^k \xi_{(i)} \sum_{h=1}^H  w(\tau_{(i)}^{1:h-1}) w(A_{(i)}^h, S_{(i)}^h) \right) \right] \\

    &\leq 2\sum_{j=1}^m \E_{\Prob_\beta, \Re}\lb \exp\lp \frac{2\lambda}{n}\sum_{i=1}^n  \xi_{j(i)} \sum_{h=1}^H  w(\tau_{j(i)}^{1:h})\rp \one_{\{ \sum_{i=1}^n \xi_{j(i)} \sum_{h=1}^H  w(\tau_{j(i)}^{1:h})\geq 0 \}} \rb \\ 
    &\leq 2m \E_{\Prob_\beta, \Re}\lb \exp\lp \frac{2\lambda}{n}\sum_{i=1}^n  \xi_{j(i)} \sum_{h=1}^H  w(\tau_{j(i)}^{1:h})\rp \rb \\ 
    &\leq 2m \exp\lp \frac{2\lambda^2 w_{max}(w_{max}^H - 1)}{n(w_{max} - 1)} \rp 
\end{align*}

And we have the result, 
\begin{align*}
    &\Prob_\beta\lp \cdot \geq \epsilon + \frac{1}{m} \rp \leq 4m \exp\lp \frac{2\lambda^2 w_{max}(w_{max}^H - 1)}{n(w_{max} - 1)} - \lambda\epsilon \rp  \\ 
    &\leq 4m \min_{\lambda  > 0}\exp\lp \frac{2\lambda^2 w_{max}(w_{max}^H - 1)}{n(w_{max} - 1)} - \lambda\epsilon \rp \\ 
    &= 4m \exp\lp \frac{-n\epsilon (w_{max}-1)^2}{w_{max}^2 (w_{max}^H - 1)^2} \rp
\end{align*}
where the second line results because the inequality holds for all $\lambda > 0$, so we can minimize over $\lambda$. 

Then we have 
\begin{align*}
    \Prob_\beta\lp \cdot \geq \sqrt{\frac{w_{max}^2 (w_{max}^H - 1)^2}{n (w_{max} - 1)^2}\log \frac{4m}{\delta}} + \frac{1}{m} \rp \leq \delta
\end{align*}

Combining this with Lemma~\ref{lem:is_error} to bound the first term gives the result and choice of $m \propto \sqrt{n}$ gives the result.





\subsection{Cramer-Rao Variance Lower Bound of Off-Policy CDF Estimation (Proof of Theorem~\ref{thm:lower_bound})}

We prove a Cramer-Rao lower bound for UDAGs, defined below.

\begin{definition}[Unique Directed Acyclic Graph]\label{def:dag}
    An \MDP is unique directed acyclic graph (UDAG) if the state and action spaces are finite. A state only occurs at a particular horizon $h$, that is, for any $s \in \Ss$, there exists a unique $h$ such that $\max_{\pi}P(s_h=s|\pi) > 0$. In addition, each state has a unique reward function. For simplicity, assume $\gamma = 1$. 
\end{definition}

\begin{proof}
First, we transform the MDP of Definition~\ref{def:dag} into one with an augmented state space, including the history of rewards encountered.  That is, a trajectory is now defined as 
$$\tau = (s_0, g_0, a_0, ... s_H, g_H, a_H, s_{H+1}, g_{H+1}), $$
where $g_h = \sum_{k=0}^{h-1} r_h$. 
Note that $P(g_{h+1}|s_h, g_h, a_h) = P(r_h|s_h,a_h)$ and $P(s_{h+1}|s_h,g_h,a_h) = P(s_{h+1}|s_h, a_h)$.

For brevity, denote $\wt{s} = (s,g)$ to be the augmented state space. 
We now derive a pointwise Constrained Cramer-Rao Bound (CCRB) for $F(t)$ in the augmented MDP, 
\[
    KU(UIU)^{-1}U^\top K^\top
\]
where $K$ is the Jacobian of $F(t)$ and
$I$ is the Fisher information matrix. 
$U$ is a matrix whose column vectors consist of an orthonormal basis for the null space of a matrix $N$, which is a block-diagonal matrix with $N_{(\wt{s},a),(\wt{s}',a')} = \one\llbrace \wt{s},a = \wt{s}',\wt{a}' \rrbrace$. 
Further, define $\eta_{\wt{s},a,\wt{s}',g'} = P(\wt{s}'|\wt{s},a)$, 
and note that $N\eta = \mathbf{1}$. 

To compute the Fisher information matrix $I$, 
\[ 
    I = \E\lb \lp \frac{\partial\log P_\beta(\tau)}{\partial \eta} \rp \lp \frac{\partial\log P_\beta(\tau)}{\partial \eta} \rp^\top \rb
\]
where $P_\beta(\tau)$ is the probability of a trajectory $\tau$ under the behavioral policy $\beta$, 
\small{
\begin{align*}
    P_\beta(\tau) &= \mu(s_1, g_1)\beta(a_1|s_1)P(s_2, |s_1, a_1)P(g_2|s_1,a_1,g_1)\ldots P(s_H,g_H|s_{H-1},g_{H-1} a_{H-1})\beta(a_H|s_H)P(s_{H+1}, g_{H+1}|s_H, g_H,a_H) \\ 
    &= \mu(\wt{s}_1)\beta(a_1|\wt{s}_1)P(\wt{s}_2, |\wt{s}_1, a_1)P(\wt{s}_2|\wt{s}_1,a_1)\ldots P(\wt{s}_H|\wt{s}_{H-1} a_{H-1})\beta(a_H|\wt{s}_H)P(\wt{s}_{H+1}|\wt{s}_H,a_H)
\end{align*}
}%
\normalsize

Define a new indicator vector $g(\tau)$ with $g(\tau)_{\wt{s}_h, a_h, \wt{s}_{h+1}} = 1$ if $(\wt{s}_h, a_h, \wt{s}_{h+1}) \in \tau$. Then, letting $\circ$ denote an elementwise operation, we have 
\[
    \frac{\partial\log P_\beta(\tau)}{\partial \eta} = \eta^{\circ -1} \circ g(\tau)
\]
which makes
\[
    I = \E\lb [1/\eta_i\eta_j]_{ij} \circ \lp g(\tau)g(\tau)^\top\rp \rb = [1/\eta_i\eta_j]_{ij} \circ \E\lb g(\tau)g(\tau)^\top \rb
\]

As a result, the elements of $I$ are as follows, where $P_M$ refers to the marginal probability: 
\begin{itemize}
    \item diagonal: $\frac{P_M(\wt{s}_h, a_h)}{P(\wt{s}_{h+1}|\wt{s}_h, a_h)}$ 
    \item row $=(\wt{s}_h, a_h, \wt{s}_{h+1})$, column $=(\wt{s}'_h, a'_h, \wt{s}'_{h+1})$: $P_M(\wt{s}_h', a_h')P_M(\wt{s}_h, a_h|\wt{s}_{h+1}')$
    \item otherwise: 0
\end{itemize}
We know calculate the term $(U^\top IU)^{-1}$, and we use a similar strategy to \cite{jiang2016doubly, huang2020importance} to avoid taking its inverse. Note that 
\[ 
    U^\top IU = U^\top \lp N^\top X^\top + I + XN\rp U. 
\]
where $X$ is arbitrary, $N$ was defined previously. Let $D =  N^\top X^\top + I + XN$. Our goal is to define $X$ such that $D$ is a diagonal matrix with diagonal identical to the diagonal of $I$. Note that $XN$ and $N^\top X^\top $ are symmetric, so we can design the former to eliminate the upper triangle of $I$, and the latter to eliminate the lower triangle. To verify we can do this, set
\begin{itemize}
    \item row $= (\wt{s}_h, a_h, \wt{s}_{h+1})$, column $=(\wt{s}_h, a_h, 2)$ : $-P_M(\wt{s}_h, a_h)$
    \item row $= (\wt{s}_h, a_h, \wt{s}_{h+1})$, column $=(\wt{s}_h', a_h', \{1 \;\text{or}\;2\})$ : $-P_M(\wt{s}_h, a_h)P_M(\wt{s}_h', a_h', \wt{s}_{h+1})\one\llbrace h < h' \rrbrace$
    \item otherwise: 0
\end{itemize}

Then with the proper choice of $U$, 
\[
    U(U^\top I U)^{-1}U^\top  = \text{Diag}(\llbrace B(\wt{s}_h, a_h) \rrbrace_0^H)
\]
where $\text{Diag}(\llbrace \cdot \rrbrace)$ is a block-diagonal matrix consisting of matrices in the set $\llbrace \cdot \rrbrace$, and 
\[
    B(\wt{s}_h, a_h) = \frac{\text{Diag}(P(\cdot|\wt{s}_h, a_h)) - P(\cdot|\wt{s}_h, a_h)P(\cdot|\wt{s}_h, a_h)^\top }{P_M(\wt{s}_h, a_h)}
\]
where $P(\cdot|\wt{s}_h, a_h)$ is the transition vector. 

We now need to calculate the Jacobian $K$. 
Recall that the estimation objective is
\begin{align*}
    F(t) &= \sum_{\wt{s}_1}\mu(\wt{s}_1)\sum_{a_1}\pi(a_1|\wt{s}_1)\ldots \sum_{s_H}P(\wt{s}_H|\wt{s}_{H-1}, a_{H-1})\one\llbrace g_H \leq t \rrbrace \\
\end{align*}

Then $K(\wt{s}_h, a_h, \wt{s}_{h+1})$ is 
\begin{align*}
    \frac{\partial F(t)}{\partial P(r_h, s_{h+1}|s_h, a_h)} &= \frac{\partial F(t)}{\partial P(s_{h+1}, g_{h+1}|s_h,g_h, a_h)} = \Prob(s_h, g_h, a_h) P(r_h|s_h, a_h)P(s_{h+1}|s_h, a_h)F_{s_{h+1}}(t - g_h - r_h)
\end{align*}

Denote by $K_{(s_h, a_h, \cdot)}$ the vector fragment of $K$ whose index tuple starts with $s_h, a_h$. Then we have that 
\begin{align}
    KU(UIU)^{-1}U^\top K^\top &= \sum_{h=1}^{H+1}\sum_{\wt{s}_h, a_h}\lp K_{(\wt{s}_h, a_h, \cdot)} \rp ^\top B(\wt{s}_h, a_h)K_{(\wt{s}_h, a_h, \cdot)} \label{eq:ku}
\end{align}

We have 
\begin{align*}
    &(K_{(s_h, g_h, a_h, \cdot)})^\top B(s_h, g_h, a_h) K_{(s_h, g_h, a_h, \cdot)} \\
    &= \frac{\Prob(s_h, g_h, a_h)^2}{\Prob_\beta(s_h, g_h, a_h)} \Bigg( \sum_{r_h}P(r_h|s_h, a_h) \sum_{s_{h+1}}P(s_{h+1}|s_h, a_h) F_{s_{h+1}}(t - g_{h} - r_h)^2 \\
    &\quad- \lp \sum_{r_h}P(r_h|s_h, a_h) \sum_{s_{h+1}}P(s_{h+1}|s_h, a_h)F_{s_{h+1}}(t - g_{h} - r_h)\rp^2 \Bigg) \\
    &= \frac{\Prob(s_h, g_h, a_h)^2}{\Prob_\beta(s_h, g_h, a_h)}  \Var_{r_h, s_{h+1}|s_h, a_h}\lb F_{s_{h+1}}(t - g_{h+1})\rb 
\end{align*}

Under Definition~\ref{def:dag}, we have that $\Prob(g_h) = \Prob(\tau_h)$ so the lower bound is 
\begin{align*}
    \sum_{h=1}^H \E_{\Prob_\beta} \lb w_{1:h-1}^2 \Var_{r_h, s_{h+1}|s_h, a_h}\lb F_{s_{h+1}}\lp t - \sum_{k=1}^h r_k \rp \rb \rb. 
\end{align*}
\end{proof}

\paragraph{Comparison with DR Variance.} When expanded over $H$ horizons, the DR variance~\eqref{lem:dr_bias_var} in MDP of Definition~\ref{def:dag} is
\begin{align*}
    \sum_{h=1}^{H} \E_{\Prob_\beta}\lb w_{1:h-1}^2 \Var_h\lb w(A_h,S_h)\Delta_{S_h,A_h}\lp t - \sum_{k=1}^h R_k\rp  \Big| S_h \rb  \rb  + \sum_{h=1}^H \E_{\Prob_\beta}\lb w_{1:h-1}^2 \Var_h\lb F_{S_h}\lp t - \sum_{k=1}^h R_k\rp \rb  \rb
\end{align*}
where $\Delta_{s,a}(t) = \ol{F}_{s,a}(t) - F(t)$. Thus when $\ol{F} = F$, the first term of the above expression goes to 0 and the DR variance becomes 
\begin{align*}
    \sum_{h=1}^H \E_{\Prob_\beta}\lb w_{1:h-1}^2 \Var_h\lb F_{S_h}\lp t - \sum_{k=1}^h R_k\rp \rb  \rb
\end{align*}
which attains the Cramer-Rao lower bound.

\section{Proofs for Risk Estimation (Section~\ref{sec:risk})}
\subsection{CVaR Risk Lower Bound (Proof of Theorem~\ref{thm:cvar_lb})}
To prove the lower bound for CVaR estimation, we make an adaptation of the proof in~\ref{proof:mab_cdf_lb}. 
From Le Cam's method, we have 
\begin{equation}
    \inf_{\wh F } \sup_{P_1, P_2 \in \mathcal{P}} \E_P\lb \vert \wh{\rho} - \rho_P^\pi \vert \rb \geq \frac{1}{8} \vert \rho_1^\pi - \rho_2^\pi \vert e^{-n \dkl(P_1 \Vert P_2)}
\end{equation}

For a distortion risk functional $\rho$, 
\begin{align*}
    \left\vert \rho_1^\pi - \rho_2^\pi \right\vert &= \left\vert \int_0^D g(1 - F_1^\pi(t)) - g(1 - F_2^\pi(t)) dt \right\vert 
\end{align*}

In the case of $\cvara$, with $\alpha \in [0, 1)$, we have $g(x) = \min\{ \frac{x}{1-\alpha}, 1\}$ so
\begin{align*}
    \left\vert \cvara(F_1^\pi) - \cvara(F_2^\pi) \right\vert &= \left\vert \int_0^D \min\llbrace \frac{1 - F_1^\pi(t)}{1-\alpha}, 1\rrbrace - \min\llbrace \frac{1 - F_2^\pi(t)}{1-\alpha}, 1\rrbrace dt \right\vert
\end{align*}
Note that, under this definition, the expected return is CVaR with level $\alpha = 0$. 

Recall the Bernoulli problem instances we previously used to prove the lower bound for MABs. For the MAB $P_2$, we set the arm reward distribution to be $R_2(0|a) = c$ for all $a \in \A$. For the MAB $P_1$, we set the arm reward distribution to be $R_1(0|a) = c + \delta(a)$ for some $\delta(a) \in [0, 1 - c]$. Then 
\begin{align*}
     \left\vert \cvara(F_1^\pi) - \cvara(F_2^\pi) \right\vert &= \left\vert \int_0^1 \min\llbrace \frac{1 - F_1^\pi(t)}{1-\alpha}, 1\rrbrace - \min\llbrace \frac{1 - F_2^\pi(t)}{1-\alpha}, 1\rrbrace dt \right\vert \\
     &= \left\vert \min\llbrace \frac{1 - F_1^\pi(t=0)}{1-\alpha}, 1\rrbrace - \min\llbrace \frac{1 - F_2^\pi(t=0)}{1-\alpha}, 1\rrbrace \right\vert\\
     &= \left\vert \min\llbrace \frac{1 - c - \sum_a \pi(a)\delta(a)}{1-\alpha}, 1\rrbrace - \min\llbrace \frac{1 - c}{1-\alpha}, 1\rrbrace \right\vert
\end{align*}

Then note that the difference above has three possibilities: 
\[
    = 
    \begin{cases}
        0, & \text{if}\;  1-\alpha \leq 1 - c - \sum_a \pi(a)\delta(a) \\
        \frac{|\alpha - c|}{1-\alpha},  & \text{if}\; 1 - c - \sum_a \pi(a)\delta(a) \leq 1-\alpha \leq 1 - c \\
        \frac{\sum_a \pi(a)\delta(a)}{1-\alpha}, & \text{if}\; 
        1 - c \leq 1-\alpha
    \end{cases}
\]

By setting $c = \alpha$ and $\delta(a) \in [0, 1- \alpha]$, 
it can be seen that 
the first scenario (a vacuous lower bound) is impossible, 
and that the third scenario encompasses the second scenario. 
That is, if $\delta(a) = 0\; \forall a$, then both the second and third cases become 0. 

Then since 
\begin{align*}
    \dkl(P_1 || P_2) = \sum_a \beta(a) \dkl(R_1(a) || R_2(a)) \leq 4\sum_a \beta(a)\delta(a)^2, 
\end{align*}

Our optimization problem boils down to 
\begin{equation*}
    \max_{\delta \in \R^\A} \frac{\sum_a \pi(a)\delta(a)}{1-\alpha} \quad\text{s.t.}\quad \sum_a \beta(a)\delta(a)^2 \leq \frac{1}{8n},\; 0\leq \delta \leq 1-\alpha
\end{equation*}

Thus we have the Lagrangian 
\begin{align*}
    L(\delta, \lambda, \nu) &= - \frac{\sum_a \pi(a)\delta(a)}{1-\alpha} + \lambda\lp \sum_a \beta(a)\delta(a)^2 - \frac{1}{8n} \rp + \sum_a \nu(a)\lp \delta(a) - 1 + \alpha\rp \\ 
    &= -\frac{\lambda}{8n} + \sum_a \llbrace \lambda \beta(a) \delta^2(a)  + \delta(a)\lp\nu(a) - \frac{\pi(a)}{1-\alpha}\rp - (1-\alpha)\nu(a) \rrbrace 
\end{align*}

The optimal solution is 
\begin{align*}
    \delta^*(a) = \frac{\frac{\pi(a)}{1-\alpha} - \nu(a)}{2\lambda\beta(a)}
\end{align*}

When we plug this in, we have 
\begin{align}
    - \frac{ \sum_a \pi(a) \delta^*(a)}{1-\alpha}  &= \max_{\lambda \geq 0, \nu \geq 0} g(\lambda, \nu) = \max_{ \nu \geq 0} \llbrace -\sqrt{\frac{1}{8n} \sum_a \frac{\lp\frac{\pi(a)}{1-\alpha} - \nu(a)\rp^2}{\beta(a)}} - (1-\alpha)\sum_a \nu(a) \rrbrace \nonumber \\ 
    &= -\min_{\nu \geq 0} \llbrace \sqrt{\frac{1}{8n} \sum_a \frac{\lp\frac{\pi(a)}{1-\alpha} - \nu(a)\rp^2}{\beta(a)}} + (1-\alpha)\sum_a \nu(a) \rrbrace \label{cvar_lb_optimization}
\end{align}

Finally, we have from Lemma~\ref{lem:cvar_opt_asymp} that 
\begin{align*}
    \min_{\nu \geq 0} \llbrace \sqrt{\frac{1}{8n} \sum_a \frac{\lp\frac{\pi(a)}{1-\alpha} - \nu(a)\rp^2}{\beta(a)}} + (1-\alpha)\sum_a \nu(a) \rrbrace \asymp \frac{1}{1-\alpha} \llbrace \sum_{a \in S^*}\pi(a) + \sqrt{\frac{\sum_{a \notin S} \beta(a) w(a)^2}{n}}\rrbrace
\end{align*}

\begin{lemma}\label{lem:cvar_opt_asymp}
    \begin{align}
        \min_{\nu \geq 0} \llbrace \sqrt{\frac{1}{8n} \sum_a \frac{\lp\frac{\pi(a)}{1-\alpha} - \nu(a)\rp^2}{\beta(a)}} + (1-\alpha)\sum_a \nu(a) \rrbrace \asymp \sum_{a \in S^*}\pi(a) + \frac{1}{1-\alpha}\sqrt{\frac{\sum_{a \notin S} \beta(a) w(a)^2}{n}}\label{eq:risk_asymp}
    \end{align}
\end{lemma}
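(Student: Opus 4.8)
The plan is to reduce this optimization to the canonical soft-thresholding form already analyzed in the proof of Lemma~\ref{lem:mab_cdf_lb} (the asymptotic analysis of Lemma~2 of \cite{ma2021minimax}), and then to identify the resulting value with the claimed right-hand side. First I would apply the change of variables $\wt{\nu}(a) = (1-\alpha)\nu(a)$, under which $\frac{\pi(a)}{1-\alpha} - \nu(a) = \frac{\pi(a) - \wt{\nu}(a)}{1-\alpha}$ and $(1-\alpha)\sum_a \nu(a) = \sum_a \wt{\nu}(a)$, so that the objective becomes
\begin{equation*}
    \min_{\wt{\nu} \geq 0} \; \frac{1}{1-\alpha}\sqrt{\frac{1}{8n}\sum_a \frac{(\pi(a) - \wt{\nu}(a))^2}{\beta(a)}} \;+\; \sum_a \wt{\nu}(a).
\end{equation*}
This is exactly the family of problems treated in \cite{ma2021minimax}, differing from the CDF case only in the multiplicative constant $\frac{1}{1-\alpha}$ on the square-root term and the coefficient on the linear penalty.

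Next I would invoke the same asymptotic characterization used for Lemma~\ref{lem:mab_cdf_lb}. Writing $S^* = \{a : \wt{\nu}^*(a) > 0\}$ for the support of the minimizer, the soft-thresholding structure splits the objective, to within constant factors, into a linear contribution $\sum_{a \in S^*}\wt{\nu}^*(a) \asymp \sum_{a \in S^*}\pi(a)$ from the thresholded coordinates and a square-root contribution $\frac{1}{1-\alpha}\sqrt{\frac{1}{8n}\sum_{a \notin S^*}\frac{\pi(a)^2}{\beta(a)}}$ from the coordinates left at zero. Using the identity $\frac{\pi(a)^2}{\beta(a)} = \beta(a)w(a)^2$ with $w(a) = \pi(a)/\beta(a)$ and absorbing the factor $\frac{1}{\sqrt{8}}$ into $\asymp$ gives precisely the claimed right-hand side.

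Two points I would verify carefully. First, that the nonnegativity constraint $\wt{\nu} \geq 0$ does not change the analysis: because $\pi(a) \geq 0$, any coordinate with $\wt{\nu}(a) < 0$ strictly increases both terms, so the constrained optimum coincides with the optimum of the unconstrained problem with an $\ell_1$ penalty, letting me reuse the cited lemma directly. Second, that the rescalings only shift the threshold defining $S^*$: writing out the stationarity condition shows that membership in $S^*$ amounts to comparing the importance weight $w(a)$ against a level that scales with $n$ and $(1-\alpha)$ times the optimal square-root value, so these constants enter only the location of the threshold and the hidden constant in $\asymp$, not the functional form.

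I expect the main obstacle to be the bookkeeping that confirms the two constant rescalings leave the asymptotic form intact, since the $\frac{1}{1-\alpha}$ prefactor couples into the threshold defining $S^*$ and one must check that the surviving linear and square-root terms scale as stated rather than mixing; once this is pinned down, the result follows directly from the established analysis of \cite{ma2021minimax}.
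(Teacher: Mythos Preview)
Your approach is correct and takes a genuinely different route from the paper. The paper does \emph{not} reduce to Lemma~2 of \cite{ma2021minimax}; instead it reruns that lemma's entire three-case analysis from scratch with the $(1-\alpha)$ factors threaded through: it writes out the stationarity conditions~\eqref{eq:optim_1}--\eqref{eq:optim_2} for the original variable $\nu$, splits into the boundary cases $\nu^*=\frac{\pi}{1-\alpha}$ and $\nu^*=0$, and in the interior case sums the KKT conditions over $S^*$ to obtain $\beta(S^*)=\frac{1-\epsilon}{8n(1-\alpha)^2}$ and $T_1=\frac{1-\epsilon}{\epsilon}T_2$, then handles the $\epsilon\in(0,1/2)$ regime separately by showing $\pi(S^*)$ dominates the square-root term. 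Your substitution $\wt\nu=(1-\alpha)\nu$ is cleaner: it turns the problem into Ma et al.'s form with effective sample size $n(1-\alpha)^2$ (after absorbing the harmless change in the linear coefficient from $\tfrac12$ to $1$ into the $\asymp$), so the entire case analysis is inherited rather than repeated, and the threshold $n\gtrsim \frac{w_{\max}^2}{(1-\alpha)^2\sum_a\beta(a)w(a)^2}$ for $S^*=\emptyset$ drops out automatically from the effective-sample-size substitution. The paper's approach buys self-containment and makes the $(1-\alpha)$ dependence in the threshold for $S^*$ explicit at each step; yours buys brevity and avoids the risk of transcription errors in the case analysis. The bookkeeping you flag as the main obstacle is indeed the only thing to check, and it goes through as you anticipate: multiplying the transformed objective by $\tfrac12$ recovers exactly Ma et al.'s form with $n$ replaced by $4n(1-\alpha)^2$, and the support of the minimizer is invariant under both the change of variables and the global scaling.
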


\begin{proof}
    First, note that $0 \leq \nu^* \leq \pi(a)$ from the optimization problem~\eqref{cvar_lb_optimization}. 
    Further, WLOG we can assume that $\pi(a) > 0$ and $\beta(a) > 0$. This is because if $\pi(a) = 0$, then $w(a) = 0$, which contributes 0 to both sides of the equation. Similarly, if $\beta(a) = 0$, then $\nu^*(a) = \pi(a)$, which means $a \in S^*$, and $\pi(a)$ is contributed to both sides of the equation. 
    
    We separate the proof into three cases: $v^* = \frac{\pi}{1-\alpha}$, $v^* = 0$, $0 \neq v^* \neq \frac{\pi}{1-\alpha}$. 
    
    \paragraph{$\mathbf{v^* = \pi}$.} In this case, $S^* = [k]$, and the LHS of~\eqref{eq:risk_asymp} is 
    \begin{align*}
        (1-\alpha)\sum_a \frac{\pi(a)}{1-\alpha} = 1
    \end{align*}
    while the RHS is 
    \begin{align*}
        \sum_a \pi(a) = 1. 
    \end{align*}
    
    \paragraph{$\mathbf{v^* = 0}$.} In this case, $S^* = \{\}$, and the LHS of~\eqref{eq:risk_asymp} is 
    \begin{align*}
         \sqrt{\frac{1}{8n} \sum_a \frac{\frac{\pi(a)}{1-\alpha}^2}{\beta(a)}} &= \frac{1}{1-\alpha}\sqrt{\frac{1}{8n} \sum_{a}\beta(a)w(a)^2},
    \end{align*}
    while the RHS is 
    \begin{align*}
        \frac{1}{1-\alpha}\sqrt{\frac{\sum_{a} \beta(a) w(a)^2}{n}}. 
    \end{align*}
    
    \paragraph{$\mathbf{0 \neq v^* \neq \pi}.$} First, we write the optimality conditions of~\eqref{cvar_lb_optimization}: 
    \begin{align}
        \lp \frac{w(a)}{1-\alpha} - \frac{\nu^*(a)}{\beta(a)}\rp^2 = 8n(1-\alpha)^2 \sum_a \frac{\lp \frac{\pi(a)}{1-\alpha} - \nu^*(a)\rp^2}{\beta(a)} \quad\quad&\text{for}\; a \in S^* \label{eq:optim_1} \\ 
        w(a)^2 \leq 8n(1-\alpha)^4 \sum_a \frac{\lp \frac{\pi(a)}{1-\alpha} - \nu^*(a)\rp^2}{\beta(a)} \quad\quad&\text{for}\; a \notin S^* \label{eq:optim_2}
    \end{align}

    Denote 
    \begin{align*}
        T_1 &:= \sum_{a \in S^*}  \frac{\lp \frac{\pi(a)}{1-\alpha} - \nu^*(a)\rp^2}{\beta(a)}\\
        T_2 &:= \sum_{a \notin S^*}  \frac{\lp \frac{\pi(a)}{1-\alpha} - \nu^*(a)\rp^2}{\beta(a)} = \frac{1}{(1-\alpha)^2} \sum_{a \notin S^*}\beta(a)w(a)^2
    \end{align*}
    
    Next, we will show that $T_1 = \frac{1-\epsilon}{\epsilon} T_2$ for some $\epsilon \in (0,1)$ such that $\beta(S^*) = (1-\epsilon)/8(1-\alpha)^2n$. 
    
    Summing~\eqref{eq:optim_1} over $a \in S^*$, we have 
    \begin{align*}
        T_1 &= \sum_{a \in S^*}\beta(a) \lp \frac{w(a)}{1-\alpha} - \frac{\nu^*(a)}{\beta(a)}\rp^2 \\ 
        &= 8n(1-\alpha)^2\beta(S^*) \sum_a \frac{\lp \frac{\pi(a)}{1-\alpha} - \nu^*(a)\rp^2}{\beta(a)} \\ 
        &= 8n(1-\alpha)^2\beta(S^*)(T_1 + T_2)
    \end{align*}
    
    which implies that $S^* \neq [k]$. Then since $w(a) > 0$ and $\nu^* \neq 0$, both $T_2 > 0$ and $\beta(S^*) > 0$. Then we have 
    \begin{align*}
        T_1 > 8n(1-\alpha)^2 \beta(S^*) T_1
    \end{align*}
    which implies that 
    $$ \frac{1}{8n(1-\alpha)^2} > \beta(S^*) > 0.$$
    
    This means that for some $\epsilon \in (0,1)$, 
    \begin{align*}
        \beta(S^*) = \frac{1-\epsilon}{8n(1-\alpha)^2}. 
    \end{align*}
    
    Then combining this with the previous relationship, 
    \begin{align*}
        T_1 &= 8n(1-\alpha)^2 \frac{1-\epsilon}{8n(1-\alpha)^2} (T_1 + T_2) \\ 
        &= \frac{1-\epsilon}{\epsilon}T_2. 
    \end{align*}
    
    Next, we show that~\eqref{cvar_lb_optimization} has optimal value 
    \begin{align*}
        \pi(S^*) + \sqrt{\frac{T_2}{8n}}\epsilon. 
    \end{align*}
    
    From~\eqref{eq:optim_1}, for $a \in S^*$ we have that
    \begin{align*}
        \nu^*(a) &= \frac{\pi(a)}{1-\alpha} - \beta(a)\sqrt{8(1-\alpha)^2n(T_1 + T_2)}. 
    \end{align*}
    
    Then
    \begin{align*}
        \sqrt{\frac{1}{8n} \sum_a \frac{\lp\frac{\pi(a)}{1-\alpha} - \nu^*(a)\rp^2}{\beta(a)}} + (1-\alpha)\sum_a |\nu^*(a)| =& \sqrt{\frac{1}{8n}(T_1 + T_2)} \\
        &+ (1-\alpha)\sum_{a \in S^*}\lp \frac{\pi(a)}{1-\alpha} - \beta(a)\sqrt{8(1-\alpha)^2n(T_1 + T_2)} \rp \\
        =& \pi(S^*) + \lp 1 -  \beta(S^*)8n(1-\alpha)^2\rp \sqrt{\frac{1}{8n}(T_1 + T_2)} \\ 
        =& \pi(S^*) + \epsilon\sqrt{\frac{1}{8n}(T_1 + T_2)} \\ 
        =& \pi(S^*) + \sqrt{\frac{\epsilon}{8n}T_2}
    \end{align*}
    
    Going back to our original problem, note that for $\epsilon \in [1/2, 1]$ we achieve the desired equivalence. Thus we focus on the case where $\epsilon \in (0, 1/2)$, which means that $\frac{1}{16(1-\alpha)^2n} < \beta(S^*) < \frac{1}{8(1-\alpha)^2n}$. Assume WLOG that the actions are ordered according to likelihood ratios $w(1) \leq \ldots \leq w(k)$. From~\eqref{eq:optim_1} and~\eqref{eq:optim_2}, $S^* = \{t+1, \ldots, k\}$ for some $t \in [k]$. Then applying~\eqref{eq:optim_1}, we have that for action $a=t+1$, 
    \begin{align*}
        \frac{w(t+1)^2}{(1-\alpha)^2} \geq \lp \frac{w(t+1)}{1-\alpha} - \frac{\nu^*(t+1)}{\beta(t+1)}\rp^2 = 8n(1-\alpha)^2 (T_1 + T_2) = \frac{8n(1-\alpha)^2T_2}{\epsilon}
    \end{align*}
    
    Using the fact that $\frac{\pi(S^*)}{\beta(S^*)} \geq w(t+1)$, we have that 
    \begin{align*}
        \frac{\pi(S^*)}{\beta(S^*)} \geq (1-\alpha)^2 \sqrt{\frac{8nT_2}{\epsilon}}
    \end{align*}
    
    Then since $\beta(S^*) > \frac{1}{16(1-\alpha)^2n}$, 
    \begin{align*}
        \pi(S^*) \geq  \sqrt{\frac{T_2}{8n\epsilon}} \geq \sqrt{\frac{T_2}{4n}\epsilon}
    \end{align*}
    
    Then 
    \begin{align*}
        \sqrt{\frac{1}{8n} \sum_a \frac{\lp\frac{\pi(a)}{1-\alpha} - \nu^*(a)\rp^2}{\beta(a)}} + (1-\alpha)\sum_a |\nu^*(a)| &= \pi(S^*) + \sqrt{\frac{\epsilon}{8n}T_2} \asymp \pi(S^*) \asymp \pi(S^*) + \sqrt{\frac{T_2}{n}}
    \end{align*}
    which achieves the target equivalence using the definition of $T_2$. 
    
    Finally, examining the optimality condition for $a \notin S^*$~\eqref{eq:optim_2} shows that $\nu^* = 0$ when 
    \begin{align*}
        n \geq \frac{w_{max}^2}{8(1-\alpha)^2 \sum_a \beta(a)w^2(a)},
    \end{align*}
    
    which means that the lower bound is 
    \begin{align*}
        \frac{cD}{1-\alpha}\sqrt{\frac{\sum_a \beta(a)w(a)^2}{n}}
    \end{align*}
    for some universal constant $c$, giving the theorem statement. 
\end{proof}

\newpage

\section{Experiments}\label{appendix:experiments}

\subsection{Implementation}

We describe each environment and their modeling methods below. For both environments we use a maximum horizon $H = 200$ and $\gamma = 1$. 

\paragraph{Cliffwalk.} Cliffwalk~\cite{sutton2018reinforcement} is a $4 \times 12$ tabular environment where the agent must travel from the start state (lower left corner) to the goal state (lower right corner). The agent can take one of four cardinal actions (left, down up, right). The bottom row of cells represents a cliff, and moving into the cliff incurs a cost of 100. Each other state incurs a cost of 1, thus incentivizing the agent to take the shortest path between start a goal. As the original Cliffwalk is deterministic, we introduce a random transition towards the cliff with probability $p = 0.05$ in each state. 

The target policy $\pi$ is learned via Q-learning, and the behavioral policy is a mixture between the target and a uniform policy, that is for a constant $\lambda \in [0, 1]$, $\beta = \lambda\pi + (1-\lambda)\text{UNIF}$. 
For estimators (\DM, \DR, \WDR) involving models $\ol{F}$, half of the dataset is used to construct the model and the other half is used for estimation. This process is then repeated with the halves switched, and the resulting estimate is an average of the two. First, an estimate of the MDP $\ol{M}$ is constructed using empirical averages for the transitions and rewards. Following~\cite{thomas2015high}, we assume that the model is given imperfect information of the horizon as $H=201$. 
$\ol{F}$ is then computed recursively via the relations from Section~\ref{sec:dm_cdf}: 
\begin{align*}
    &\ol{F}_{s_h, a_h}^{H+1-h}(t) = \E_{\overline{P}, \overline{R}}\lb \ol{F}_{S_{h+1}}^{H-h}\lp\frac{t- R_h}{\gamma}\rp \rb \\ 
    &\ol{F}_{s_h}^{H+1-h}(t) = \E_\pi\lb \ol{F}_{s_h, A_h}^{H+1-h}(t)\rb
\end{align*}
The Cliffwalk results shown in Figure~\ref{fig:error} were averaged over 1000 repetitions. 

\paragraph{Simglucose.} In Simglucose, the agent must control insulin bolus injections to a patient with type 1 diabetes. The state is a continuous vector with the patient's blood glucose levels and the carbohydrate intake from the last meal. We discretize the space of possible bolus injections into 6 actions. The agent receives a reward according to whether the patient's blood glucose levels are within acceptable limits or not. If the patient's blood glucose exceeds 180, a condition called hyperglycemia, the agent receives a reward of -1. If it falls under 70, called hypoglycemia, the agent receives a reward of -2. Otherwise, the agent receives a reward of +1. 

The target policy is the built-in controller for the patient, and the behavioral policy is defined as the same way for Cliffwalk. As the state space is continuous, the model was built by first discretizing the state space, then using empirical averages to estimate the MDP $\ol{M}$ in the discretized space, upon which $\ol{F}$ is calculated. 

The Simglucose results shown in Figure~\ref{fig:error} were averaged over 100 repetitions.

\subsection{Mean vs Plug-in + CDF}
As an additional experiment, we compare the plug-in mean estimate on the estimated CDFs with direct mean estimation for existing $\IS, \WIS$ and $\DR$~\cite{jiang2016doubly} estimates in the Cliffwalk environment.  For \WIS, at low sample sizes the direct mean estimates can outperform the CDF estimates, but reach the same error as $n$ increases. For the $\IS$ and $\DR$ estimates, however, the CDF version of the estimates perform just as well, if not better, than direct mean estimation. 

\begin{figure}[H]
    \centering
    \includegraphics[width=\textwidth]{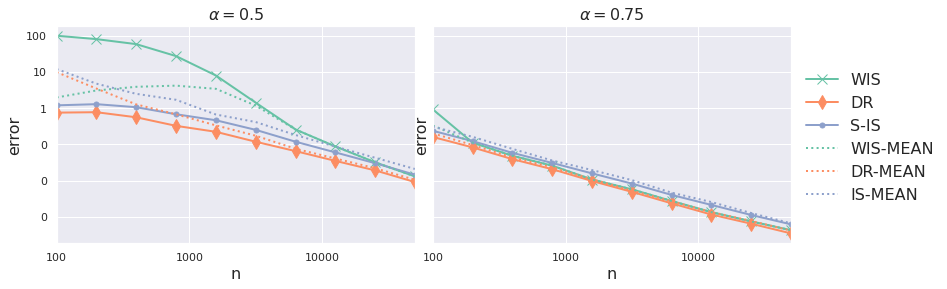}
    \caption{Normalized MSE for different $\lambda$ in the Cliffwalk environment. Dashed lines show mean analogues of CDF estimates. }
    \label{fig:mean}
\end{figure}

\end{document}